\documentclass{article}

\PassOptionsToPackage{numbers}{natbib}
\usepackage[final]{neurips_2020}

\usepackage[utf8]{inputenc} 
\usepackage[T1]{fontenc}    
\usepackage{hyperref}       
\usepackage{url}            
\usepackage{booktabs}       
\usepackage{amsfonts}       
\usepackage{nicefrac}       
\usepackage{microtype}      
\usepackage{subfigure}

\usepackage{enumitem}

\let\zz\[\let\zzz\]

\usepackage{enumitem}
\usepackage{url}
\usepackage{graphicx} 
\usepackage{bbm}

\usepackage{ifthen}
\usepackage{xargs}

\usepackage{nicefrac}
\usepackage{amssymb,mathrsfs} 

\usepackage{yfonts}
\usepackage{bbm}
\usepackage{amsmath}
\usepackage[utf8]{inputenc} 
\usepackage[T1]{fontenc}    
\usepackage{upgreek}
\usepackage{url}            
\usepackage{booktabs}       
\usepackage{amsfonts}       
\usepackage{nicefrac}       
\usepackage{microtype}      
\usepackage{amsmath}
\usepackage{graphicx}
\usepackage{hyperref}
\usepackage{wrapfig}

\usepackage{aliascnt}
\usepackage{cleveref}
\usepackage{autonum}

\let\[\zz\let\]\zzz

\usepackage{amsthm}

\makeatletter
\newtheorem{theorem}{Theorem}
\crefname{theorem}{theorem}{Theorems}
\Crefname{Theorem}{Theorem}{Theorems}

\newtheorem{proposition}{Proposition}
\crefname{proposition}{proposition}{propositions}
\Crefname{Proposition}{Proposition}{Propositions}

\newtheorem{lemma}{Lemma}
\crefname{lemma}{lemma}{lemmas}
\Crefname{Lemma}{Lemma}{Lemmas}

\newtheorem{corollary}{Corollary}
\crefname{corollary}{corollary}{corollaries}
\Crefname{Corollary}{Corollary}{Corollaries}

\newtheorem{definition}{Definition}
\crefname{definition}{definition}{definitions}
\Crefname{Definition}{Definition}{Definitions}

\newtheorem{remark}{Remark}
\crefname{remark}{remark}{remarks}
\Crefname{Remark}{Remark}{Remarks}

\crefname{figure}{figure}{figures}
\Crefname{Figure}{Figure}{Figures}

\newtheorem{assumptionC}{\; \textbf{C}\hspace{-3pt}}
\crefformat{assumptionC}{{\textbf{C}}#2#1#3}

\newcommand{\Rd}{\mathbb{R}^d}

\newcommand{\ie}{\textit{i.e.}}

\newcommand{\calA}{\mathcal{A}}

\newcommand{\calF}{\mathcal{F}}

\newcommand{\calN}{\mathcal{N}}

\newcommand{\calP}{\mathcal{P}}

\newcommand{\Var}{\mathrm{Var}}

\newcommand{\bbM}{\mathbb{M}}
\newcommand{\bbN}{\mathbb{N}}

\newcommand{\bbR}{\mathbb{R}}

\newcommand{\rmi}{\mathrm{i}}
\newcommand{\rme}{\mathrm{e}}

\newcommand{\bfD}{\mathbf{D}}

\newcommand{\bfI}{\mathbf{I}}

\def\msa{\mathsf{A}}

\def\msk{\mathsf{K}}

\def\msf{\mathsf{F}}
\def\msg{\mathsf{G}}
\def\msh{\mathsf{H}}

\def\msx{\mathsf{X}}

\def\msy{\mathsf{Y}}

\def\mcbb{\mathcal{B}}  
\newcommand{\mcb}[1]{\mathcal{B}(#1)}

\def\mcp{\mathcal{P}}

\def\mbm{\mathbb{M}}
\def\rset{\mathbb{R}}

\def\nset{\mathbb{N}}
\def\nsets{\mathbb{N}^*}

\def\Nset{\mathbb{N}}

\def\rmd{\mathrm{d}}

\def\rml{\mathrm{L}}

\def\rme{\mathrm{e}}

\def\mrC{\mathrm{C}}

\newcommand{\abs}[1]{\left\vert #1 \right\vert}
\newcommand{\absLigne}[1]{\vert #1 \vert}

\newcommandx{\Vnorm}[2][1=V]{\| #2 \|_{#1}}
\newcommandx{\normpi}[2][2=2]{\left\Vert  #1 \right\Vert_{#2}}
\newcommandx{\normH}[2][2=2]{\left\Vert  #1 \right\Vert}
\newcommandx{\normHLigne}[2][2=2]{\Vert  #1 \Vert}
\newcommandx{\normHLine}[2][2=2]{\Vert  #1 \Vert}
\newcommandx{\normmu}[2][2=2]{\left\Vert  #1 \right\Vert_{#2}}
\newcommandx{\normopmu}[2][2=2]{\left\vvvert  #1 \right\vvvert_{#2}}
\newcommandx{\normopH}[2][2=2]{\left\vvvert  #1 \right\vvvert}
\newcommandx{\normop}[2][2=2]{\left\vvvert  #1 \right\vvvert}
\newcommand{\ps}[2]{\left\langle#1,#2 \right\rangle}

\newcommandx{\normpiLine}[2][2=2]{\Vert  #1 \Vert_{#2}}
\newcommandx{\normmuLine}[2][2=2]{\Vert  #1 \Vert_{#2}}
\newcommandx{\normopmuLine}[2][2=2]{\vvvert  #1 \vvvert_{#2}}
\newcommandx{\normopHLine}[2][2=2]{\vvvert  #1 \vvvert}
\newcommandx{\normopLine}[2][2=2]{\vvvert  #1 \vvvert}

\newcommandx{\VnormEq}[2][1=V]{\left\| #2 \right\|_{#1}}
\newcommandx{\norm}[2][1=]{\ifthenelse{\equal{#1}{}}{\left\Vert #2 \right\Vert}{\left\Vert #2 \right\Vert^{#1}}}
\newcommandx{\normLigne}[2][1=]{\ifthenelse{\equal{#1}{}}{\Vert #2 \Vert}{\Vert #2\Vert^{#1}}}
\newcommandx{\norminf}[2][1=]{\ifthenelse{\equal{#1}{}}{\left\Vert #2 \right\Vert_{\infty}}{\left\Vert #2 \right\Vert^{#1}_{\infty}}}

\newcommand{\parenthese}[1]{\left(#1 \right)}
\newcommand{\parentheseLigne}[1]{(#1 )}
\newcommand{\parentheseDeux}[1]{\left[ #1 \right]}
\newcommand{\defEns}[1]{\left\lbrace #1 \right\rbrace }

\newcommandx\probaMarkovTilde[2][2=]
{\ifthenelse{\equal{#2}{}}{{\widetilde{\mathbb{P}}_{#1}}}{\widetilde{\mathbb{P}}_{#1}\left[ #2\right]}}

\newcommand{\PE}{\mathbb{E}}

\newcommand{\expe}[1]{\PE \left[ #1 \right]}

\newcommandx{\expeE}[2][2=]{\mathbb{E}^{#2}\left[ #1 \right]}
\newcommand{\expeLigne}[1]{\PE [ #1 ]}

\newcommand{\plusinfty}{+\infty}
\def\ie{\textit{i.e.}}

\def\eqsp{\;}
\newcommand{\coint}[1]{\left[#1\right)}

\newcommand{\ccint}[1]{\left[#1\right]}

\newcommandx{\ball}[3][3=]{\ifthenelse{\equal{#3}{}}{\operatorname{B}(#1,#2)}{\operatorname{B}_{#3}(#1,#2)}}

\def\TV{\mathbf{TV}}
\newcommandx\sequence[3][2=,3=]
{\ifthenelse{\equal{#3}{}}{\ensuremath{\{ #1_{#2}\}}}{\ensuremath{\{ #1_{#2}, \eqsp #2 \in #3 \}}}}
\newcommandx\sequenceD[3][2=,3=]
{\ifthenelse{\equal{#3}{}}{\ensuremath{\{ #1_{#2}\}}}{\ensuremath{( #1)_{ #2 \in #3} }}}

\newcommandx{\sequencen}[2][2=n\in\nset]{\ensuremath{( #1)_{#2}}}
\newcommandx{\sequencek}[2][2=k\in\nset]{\ensuremath{( #1)_{#2}}}
\newcommandx{\sequencens}[2][2=n\in\nsets]{\ensuremath{( #1)_{#2}}}
\newcommandx{\sequenceks}[2][2=k\in\nsets]{\ensuremath{( #1)_{#2}}}
\newcommandx\sequenceDouble[4][3=,4=]
{\ifthenelse{\equal{#3}{}}{\ensuremath{\{ (#1_{#3},#2_{#3}) \}}}{\ensuremath{\{  (#1_{#3},#2_{#3}), \eqsp #3 \in #4 \}}}}
\newcommandx{\sequencenDouble}[3][3=n\in\nset]{\ensuremath{\{ (#1_{n},#2_{n}), \eqsp #3 \}}}

\newcommand{\wrt}{w.r.t.}

\def\iid{i.i.d.}

\newcommand{\ensemble}[2]{\left\{#1\,:\eqsp #2\right\}}
\newcommand{\ensembleLigne}[2]{\{#1\,:\eqsp #2\}}
\newcommand{\set}[2]{\ensemble{#1}{#2}}
\newcommand{\setLigne}[2]{\ensembleLigne{#1}{#2}}

\def\veps{\varepsilon}

\def\vareps{\varepsilon}
\def\sphere{\mathbb{S}}
\def\sphereD{\mathbb{S}^{d-1}}

\newcommandx{\CPE}[4][1=,4=]{{\mathbb E}^{#4}_{#1}\left[#2 \, \middle | #3 \right]} %
\newcommandx{\CPELigne}[4][1=,4=]{{\mathbb E}^{#4}_{#1}[#2\, | #3 ]} %
\newcommandx{\CPVar}[3][1=]{\mathrm{Var}^{#3}_{#1}\left\{ #2 \right\}}
\newcommand{\CPP}[3][]
{\ifthenelse{\equal{#1}{}}{{\mathbb P}\left(\left. #2 \, \right| #3 \right)}{{\mathbb P}_{#1}\left(\left. #2 \, \right | #3 \right)}}

\def\distance{\ell}
\def\entropyH{\mathbf{H}}
\newcommandx{\wasserstein}[3][1=\distance,3=]{\mathbf{W}_{#1}^{#3}\left(#2\right)}
\newcommandx{\wassersteinLigne}[3][1=\distance,3=]{\mathbf{W}_{#1}^{#3}(#2)}
\newcommandx{\wassersteinD}[1][1=\distance]{\mathbf{W}_{#1}}
\newcommandx{\wassersteinDLigne}[1][1=\distance]{\mathbf{W}_{#1}}

\newcommandx{\swasserstein}[3][1=\distance,3=]{\mathbf{SW}_{#1}^{#3}\left(#2\right)}
\newcommandx{\swassersteinLigne}[3][1=\distance,3=]{\mathbf{SW}_{#1}^{#3}(#2)}
\newcommandx{\swassersteinD}[1][1=\distance]{\mathbf{SW}_{#1}}
\newcommandx{\swassersteinDLigne}[1][1=\distance]{\mathbf{SW}_{#1}}
\newcommandx{\hatswassersteinD}[1][1=\distance]{\widehat{\mathbf{SW}}_{#1}}

\newcommandx{\sinkD}[1][1=\distance]{\overline{\mathbf{W}}_{#1}}
\newcommandx{\ssinkD}[1][1=\distance]{\overline{\mathbf{SW}}_{#1}}
\newcommandx{\sdelta}[1][1=\distance]{\mathbf{S\Delta}_{#1}}
\newcommandx{\hatsdelta}[1][1=\distance]{\widehat{\mathbf{S\Delta}}_{#1}}
\newcommandx{\maxSdelta}[1][1=\distance]{\max \text{--}~\mathbf{S\Delta}_{#1}}
\newcommandx{\sIpm}[1][1=\distance]{\mathbf{S}\boldsymbol{\gamma}_{#1}}
\newcommandx{\sTV}[1][1=\distance]{\mathbf{S}\TV_{#1}}
\newcommandx{\sMMD}[1][1=\distance]{\mathbf{SMMD}_{#1}}
\newcommandx{\MMD}[1][1=\distance]{\mathbf{MMD}_{#1}}

\newcommandx{\cramerD}[1][1=\distance]{\mathbf{C}_{#1}}
\newcommandx{\scramerD}[1][1=\distance]{\mathbf{SC}_{#1}}

\def\tf{\tilde{f}}
\def\tg{\tilde{g}}
\def\tF{\widetilde{\msf}}
\def\tG{\widetilde{\msg}}

\def\dist{\mathbf{d}}

\def\Leb{\mathrm{Leb}}

\newcommand{\E}{\mathbb{E}}

\def\argmax{\mathrm{argmax}}

\def\ths{\theta}
\def\thss{\theta^{\star}}
\def\thsss{\theta^{\star}_{\sharp}}

\def\thsssl{\theta_{l\sharp}^{\star}}

\def\unifS{\boldsymbol{\sigma}}

\newcommand\fracb[2]{#1/#2}

\def\hmu{\hat{\mu}}
\def\hnu{\hat{\nu}}

\def\bDelta{\mathbf{\Delta}}
\def\bgamma{\boldsymbol{\gamma}}

\def\Lip{\mathrm{Lip}}

\def\supp{\mathrm{supp}}

\def\tu{\tilde{u}}
\def\tv{\tilde{v}}
\def\tX{\tilde{X}}
\def\tY{\tilde{Y}}
\def\tx{\tilde{x}}
\def\ty{\tilde{y}}

\def\diam{\mathrm{diam}}
\def\Lipg{\mathtt{L}}

\title{Statistical and Topological Properties of Sliced Probability Divergences}

\author{%
  Kimia Nadjahi$^1$\thanks{Corresponding author: \texttt{kimia.nadjahi@telecom-paris.fr}}\hspace{1.4mm}, \quad Alain Durmus$^2$, \quad L\'ena\"ic Chizat$^3$, \\
  \textbf{Soheil Kolouri$^4$, \qquad Shahin Shahrampour$^5$, \quad Umut \c{S}im\c{s}ekli$^{1, 6}$} \\[2mm]
  1: LTCI, Télécom Paris, Institut Polytechnique de Paris, France \\
  2: Centre Borelli, ENS Paris-Saclay, CNRS, Université Paris-Saclay, France\\
  3: Laboratoire de Mathématiques d’Orsay, CNRS, Université Paris-Saclay, France \\
  4: HRL Laboratories, LLC., Malibu, CA, USA \\
  5: Texas A\&M University, College Station, TX, USA\\
  6: Department of Statistics, University of Oxford, UK \\
}

\begin{document}

\maketitle

\begin{abstract}
  The idea of slicing divergences has been proven to be successful when comparing two probability measures in various machine learning applications including generative modeling, and consists in computing the expected value of a `base divergence' between \emph{one-dimensional random projections} of the two measures.  However, the topological, statistical, and computational consequences of this technique have not yet been well-established. In this paper, we aim at bridging this gap and derive various theoretical properties of sliced probability divergences. First, we show that slicing preserves the metric axioms and the weak continuity of the divergence, implying that the sliced divergence will share similar topological properties. We then precise the results in the case where the base divergence belongs to the class of integral probability metrics. On the other hand, we establish that, under mild conditions, the sample complexity of a sliced divergence does not depend on the problem dimension. We finally apply our general results to several base divergences, and illustrate our theory on both synthetic and real data experiments.
\end{abstract}

\section{Introduction}

Most inference methods in implicit generative modeling (IGM), such as
generative adversarial networks \cite{goodfellow2014generative} and
variational auto-encoders \cite{kingma2013auto}, rely on the use
of a particular divergence in order to be able to discriminate probability
distributions. Recent advances in this field have illustrated that the
choice of this divergence  is of
crucial importance %
and can lead to very different practical and theoretical properties
\cite{Arjovsky2017,bousquet2017optimal,gulrajani2017improved,tolstikhin2018wasserstein,arbel2019maximum}. In
this context, `sliced' probability divergences, such as
Sliced-Wasserstein \cite{bonneel2015sliced}, or Sliced-Cram\'{e}r
\cite{kolouri2020sliced}, have become increasingly popular.%

This slicing strategy has been essentially motivated by two main purposes. The first purpose is that some probability
divergences are only defined to compare measures supported on one-dimensional spaces (e.g., Cram\'{e}r
distance, \cite{Cramer1928}); hence, the slicing operation allows the use of such divergences to multivariate distributions \cite{Tabor2018,kolouri2020sliced}. The second purpose arises when the computational complexity of a divergence becomes excessive when comparing measures on high-dimensional spaces, but can efficiently be computed in the univariate case (e.g., the Wasserstein distance between one-dimensional distributions admits a closed-form analytical which can easily be approximated). The slicing operation then leverages these advantages originally available in one dimension to define divergences achieving computational efficiency on multivariate settings \cite{bonneel2015sliced,deshpande2019max,paty2019subspace,kolouri2018sliced,Kolouri2019,vayer2019}.

Even though various sliced divergences have successfully been deployed in practical applications, their theoretical properties have not yet been well-understood. Existing results are largely restricted to the specific case of the Sliced-Wasserstein (SW) distance: it has been shown that SW satisfies the metric axioms \cite{Bonnotte2013}, the convergence in SW is equivalent to the convergence in Wasserstein distance \cite{2019arXiv190604516N,bayraktar2019strong}, and the estimators obtained by minimizing SW are consistent \cite{2019arXiv190604516N}. Besides, some properties of SW have only been characterized for specific settings, in particular its statistical benefits observed in practice \cite{deshpande2018generative,deshpande2019max}. %
In this paper, we aim to bridge this gap by investigating the theoretical properties of sliced probability divergences from a general point of view: since such divergences are all characterized via the same slicing operation, we explore in depth the topological and statistical implications of this operation. Specifically, we consider a generic base divergence $\bDelta$ between one-dimensional probability measures, and define its sliced version, denoted by $\sdelta[]$, which operates on multivariate settings.

We first establish several topological properties of $\sdelta[]$. Thanks to our general approach, our findings can directly be applied to any instance of sliced divergence, including those motivated by the two aforementioned purposes. Specifically, we show that slicing preserves the metric properties: if $\bDelta$ is a metric, so is $\sdelta[]$ (Proposition~\ref{thm:metric}). We then focus on finer topological properties of $\sdelta[]$ and show in \Cref{thm:weak_conv} that, if the convergence in $\bDelta$ implies the weak convergence of measures (or conversely), then slicing preserves this property, \ie~the convergence in $\sdelta[]$ implies the weak convergence of measures (or conversely). We also consider the case when $\bDelta$ is an integral probability metric \cite{Muller1997} and identify sufficient conditions for $\sdelta[]$ to be upper-bounded by $\bDelta$, which implies that $\sdelta[]$ induces a weaker topology (Theorem~\ref{thm:weak_topo}). Similarly, we identify sufficient conditions such that $\bDelta$ and $\sdelta[]$ are strongly equivalent (\Cref{cor:ipm_ub2}), meaning that $\bDelta$ is upper- and lower-bounded by $\sdelta[]$. %

Then, we derive the following statistical properties of $\sdelta[]$: we prove that the `sample complexity' of $\sdelta[]$ is proportional to the %
sample complexity of $\bDelta$ for one-dimensional measures, and does not depend on the dimension $d$ (Theorems~\ref{thm:sliced_sample_complexity}, \ref{thm:sliced_rateofconv}). %
This property explains why \emph{any} $\sdelta[]$ motivated by the second purpose offers statistical benefits when the original divergence suffers from the curse of dimensionality. However, this comes with a caveat: we show that, if one approximates the expectation over the random projections that appears in $\sdelta[]$ with a Monte Carlo average, which is the most common practice, then an additional variance term appears in the sample complexity and can limit the performance of $\sdelta[]$ in high dimensions (Theorem~\ref{thm:projection_complexity}). Our results agree with the recent empirical observations reported in \cite{deshpande2019max,Kolouri2019} and provide a better understanding for them.

We illustrate all our theoretical findings on various examples, which demonstrate their applicability.
In particular, our general topological analysis allows us to establish a novel result for the Sliced-Cramér distance. We also derive a sample complexity result for SW which has never been shown before, under different assumptions on the measures to be compared. We then consider Sinkhorn divergences \cite{feydy19}, whose sample complexity is known to have an exponential dependence on the dimension $d$ and regularization parameter $\veps$ \cite{Genevay19}, and introduce its sliced version, referred to as the Sliced-Sinkhorn divergence. We prove that this new divergence has several merits: we derive its sample complexity by combining our general results with recent work \cite{Genevay19,Mena2019}, and obtain rates that do not depend on $d$ nor on $\veps$. We also show that this divergence improves the worst-case computational complexity bounds of Sinkhorn divergences in $\Rd$. Finally, we support our theory with numerical experiments on synthetic and real data.

\vspace{-5pt}
\section{Preliminaries and Technical Background} \label{sec:background}
\vspace{-5pt}

\paragraph{Notations.} For $d \in \bbN^*$, let $\msx$ be a closed and measurable subset of $\Rd$ and $\mcbb(\msx)$ its Borel $\sigma$-algebra for the induced topology. $\calP(\msx)$ stands for the set of probability measures on $(\msx,\mcbb(\msx))$, and $\calP_p(\msx) = \set{ \mu \in \mcp(\msx)}{\int_{\msx} \norm{x}^p \rmd\mu(x) < \plusinfty }$ is the set of probability measures on $(\msx,\mcbb(\msx))$ with finite moment of order $p$. %
Define for any $n \geq 1$, $\hmu_n$ the empirical distribution computed over a sequence of independent and identically distributed (\iid) random variables $\{X_k\}_{k=1}^n$ sampled from $\mu$, by $\hmu_n = (1/n) \sum_{k=1}^n \updelta_{X_k}$, with $\updelta_x$ the Dirac measure at $x$. %
$\mathbb{M}(\msx)$ is the set of real-valued measurable functions on $\msx$, and $\mathbb{M}_b(\msx)$ is the set of bounded functions of $\mathbb{M}(\msx)$. $\sphereD = \set{\theta \in \Rd}{\| \theta \| = 1}$ denotes the unit sphere in $\Rd$, and $\ball{{\bf0}}{R}[d] = \set{x \in \Rd}{ \| x \| < R} $ is the open ball in $\Rd$ of radius $R > 0$ centered around ${\bf0} \in \rset^d$. 

\textbf{Integral Probability Metrics.} 
For any measurable space $\msy$, let $\msf \subset \bbM(\msy)$ and $\mcp_{\msf}(\msy) = \{ \mu \in \mcp(\msy) \, : \, \forall f \in \msf, \, \int_{\msy} \abs{f(y)} \rmd \mu(y) < \plusinfty \}$. The Integral Probability Metric (IPM, \cite{Muller1997}) associated with $\msf$ and denoted by $\bgamma_{\msf}$, is defined for any $\mu,\nu \in \mcp_{\msf}(\msy)$ as
  \begin{equation}
    \label{eq:def_bgamma_F}
          \bgamma_{\msf}(\mu, \nu) =
            \sup_{f \in \msf} \left| \int_{\msy} f(y) \rmd(\mu - \nu)(y)  \right| 
          \eqsp .
  \end{equation}
If $\mu$ or $\nu$ does not belong to $\mcp_{\msf}(\msy)$, we set $\bgamma_{\msf}(\mu,\nu) = \plusinfty$. IPMs are pseudo-metrics \cite{Sriperumbudur09onintegral}: they are non-negative, symmetric, satisfy the triangle inequality and for any $\mu \in \mcp_\msf(\msy)$, $\gamma_\msf(\mu, \mu) = 0$. We recall well-known instances of IPMs below. %

\begin{enumerate}[wide, labelwidth=!, labelindent=0pt, label=(\arabic*), nolistsep]
\setlength\itemsep{.4em}
  \item \emph{Wasserstein distance of order 1.} By the Monge Kantorovich duality theorem \cite[Theorem 5.10]{villani2008optimal}, when $\msf = \setLigne{f : \msy \rightarrow \rset}{\norm{f}_{\Lip} \leq 1}$, where $\norm{f}_{\Lip} = \sup_{x, y \in \msy, x \neq y} \{\fracb{\abs{f(x) - f(y)}}{\norm{x-y}} \}$, $\gamma_\msf$ is the Wasserstein distance of order 1, denoted by $\wassersteinD[1]$. 
  \item \emph{Maximum mean discrepancy.} Let $\msh$ be a reproducing kernel Hilbert space (RKHS) for real-valued functions on $\msy$, and $\msf$ be the unit ball in $\msh$. Then, $\gamma_{\msf}$ defines the MMD in RKHS \cite[Section 2]{Gretton2012}. 
  \item \emph{Cramér distance.} By \cite[Lemma 1]{dedecker_merlevede_2007}, the Cramér distance \cite[Eq.(10)]{kolouri2020sliced} can be written as an IPM.   
\end{enumerate}

In some of our results presented in \Cref{sec:slicedmetrics}, we will assume that the supremum in \eqref{eq:def_bgamma_F} is attained. This property is for example verified for $\wassersteinD[1]$ and MMD, %
by \cite{villani2008optimal} and \cite{Gretton2012} %
respectively.

\textbf{Wasserstein distance, Sinkhorn divergences.}
Arising from the optimal transportation (OT) theory, the Wasserstein distance of order $p \in [1, \infty)$ for any $\mu, \nu \in \calP_p(\Rd)$ %
is defined as \cite[Definition 6.1]{villani2008optimal}
\begin{equation}
\label{eq:def_wasser}
  \wassersteinD[p]^{p}(\mu, \nu) = \inf_{\gamma \in \Gamma(\mu, \nu)} \int_{\Rd \times \Rd} \norm{ x - y }^p \rmd\gamma(x,y) \eqsp,
 \end{equation}
where $\Gamma(\mu, \nu)$ represents the set of probability measures $\gamma$ on $\big(\Rd \times \Rd, \mcb{\Rd \otimes \Rd} \big)$ such that for any $\msa \in \mcb{\Rd}$, $\gamma(\msa \times \Rd) = \mu(\msa)$ and $\gamma(\Rd \times \msa) = \nu(\msa)$. Note that, by strong duality \cite[Theorem 5.10]{villani2008optimal}, $\wassersteinD[1]$ can be characterized by \eqref{eq:def_wasser} or as an IPM \eqref{eq:def_bgamma_F}.

When $\mu$ and $\nu$ are discrete distributions, computing $\wassersteinD[p](\mu, \nu)$ amounts to solving a linear program, so its computational complexity becomes excessive in large-scale applications. By adding an entropic penalization term to \eqref{eq:def_wasser}, one can obtain an approximate solution %
using a simple numerical scheme with significantly lower computational requirements \cite{Cuturi2013}. This yields a regularized Wasserstein cost: for any $\mu, \nu \in \calP_p(\Rd)$ and $\veps \geq 0$,
\begin{equation}
  \wassersteinD[p, \veps](\mu, \nu) = \inf_{\gamma \in \Gamma(\mu, \nu)} \Big\{ \int_{\Rd \times \Rd} \norm{x-y}^p \rmd\gamma(x,y) + \veps \entropyH(\gamma\ |\ \mu\ \otimes\ \nu) \Big\} \eqsp, \label{eq:def_reg_ot}
\end{equation}
where $\entropyH(\gamma\ |\ \mu\ \otimes\ \nu)$ is the relative entropy of the transport plan $\gamma$ with respect to $\mu\ \otimes\ \nu$: if $\gamma$ is absolutely continuous with respect to $\mu\otimes \nu$, $\entropyH(\gamma\ |\ \mu\ \otimes\ \nu) = \int_{\Rd \times \Rd} \log [\left( \rmd \gamma/\rmd \mu\otimes \nu\right)(x, y)] \rmd \gamma(x,y)$,
otherwise, $\entropyH(\gamma\ |\ \mu\ \otimes\ \nu)= \plusinfty$. Building on the regularized Wasserstein cost, \cite{feydy19} defined Sinkhorn divergences for $\mu, \nu \in \calP_p(\Rd)$ and $\veps \geq 0$ as $\sinkD[p, \veps](\mu, \nu) = \wassersteinD[p, \veps](\mu, \nu) - \left\{ \wassersteinD[p, \veps](\mu, \mu) + \wassersteinD[p, \veps](\nu, \nu)\right\}/2$.
These satisfy for any $\mu \in \calP_p(\Rd)$, $\sinkD[p, \veps](\mu, \mu) = 0$ (contrary to $\wassersteinD[p, \veps]$), and %
interpolate between OT (when $\veps \to 0$) and MMD ($\veps \to \infty$). 

\textbf{Sliced-Wasserstein (SW) distance.} When dealing with one-dimensional distributions, \eqref{eq:def_wasser} admits a closed-form solution, which can be efficiently computed. This %
gave rise to another popular tool called SW, which has been successfully used for generative modeling applications \cite{deshpande2018generative,Kolouri_2018_CVPR,csimcsekli2018sliced,wu2017sliced}. %
The main idea is to consider one-dimensional \emph{linear projections} of two high-dimensional measures, then compute the expected $\wassersteinD[p]$ between these %
representations. Formally, the Sliced-Wasserstein distance of order $p \in [1, \infty)$, is defined for any $\mu,\nu \in \mathcal{P}_p(\Rd)$ as 
\begin{equation}
  \swassersteinD[p]^{p}(\mu, \nu) = \int_{\sphereD} \wassersteinD[p]^p(\thss_{\sharp} \mu, \thss_{\sharp} \nu) \rmd\unifS(\ths) \eqsp,
\end{equation}
where $\unifS$ is the uniform distribution on $\sphere^{d-1}$, and for any $\theta \in \sphere^{d-1}$, $\thss : \rset^d \to \rset$ denotes the linear form given by $x \mapsto \ps{\theta}{x}$ with $\ps{\cdot}{\cdot}$ the Euclidean inner-product. For any measurable function $f :\Rd \to \rset$ and $\zeta \in \mcp(\Rd)$, $f_{\sharp}\zeta$ is the push-forward measure of $\zeta$ by $f$, \ie~for any $\msa \in \mcb{\rset}$, $f_{\sharp}\zeta(\msa) = \zeta(f^{-1}(\msa))$, with $f^{-1}(\msa) = \{x \in \Rd \, : \, f(x) \in \msa\}$, 

\vspace{-5pt}
\section{Sliced Probability Divergences} \label{sec:slicedmetrics}
\vspace{-5pt}

In this section, we define the family of Sliced Probability Divergences (SPDs), then we present our theoretical contributions regarding their topological and statistical properties. We provide all the proofs in the supplementary document.

Consider a \emph{`base divergence'} $\bDelta : \calP(\rset) \times \calP(\rset) \to \rset_+ \cup \{\infty\}$ which measures the dissimilarity between two probability measures on $\rset$. We define the Sliced Probability Divergence of order $p \in [1, \infty)$ associated to $\bDelta$, denoted by $\sdelta[p]$, for $\mu, \nu \in \mcp(\Rd)$ as
\begin{equation}
  \label{eq:spd}
  \sdelta[p]^p(\mu, \nu) = \int_{\sphereD} \bDelta^p ( \thsss \mu, \thsss \nu) \rmd\unifS(\theta) \eqsp .
\end{equation}
We assume that $\theta \mapsto \bDelta^p ( \thsss \mu, \thsss \nu)$ is measurable so that \eqref{eq:spd} is well-defined. This can easily be checked if $(\mu',\nu') \mapsto \bDelta (\mu', \nu')$ is continuous for the weak topology on $\mcp(\rset)$, since this implies $\theta \mapsto \bDelta^p ( \thsss \mu, \thsss \nu)$ is continuous. 

In practice, since the integration over $\sphereD$ in \eqref{eq:spd} does not admit an analytical form in general, it is approximated with a simple Monte Carlo scheme (e.g., \cite{bonneel2015sliced,Kolouri2019,vayer2019,kolouri2020sliced}). The Monte Carlo estimate of $\sdelta[p]$ obtained with $L$ random projection directions is defined as 
\begin{equation}
\label{eqref:mc_estimate}
  \hatsdelta[p,L]^p(\mu, \nu) = (1/L) \sum\nolimits_{l=1}^L \bDelta^p\big(\thsssl \mu, \thsssl \nu\big) \eqsp,
\end{equation}
with $\{\theta_l\}_{l=1}^L$ \iid~from $\unifS$ and $\thss_l(x) = \ps{\ths_l}{x}$. Since each term of the sum in \eqref{eqref:mc_estimate} can be computed independently from each other, the approximation of SPDs can be carried out in parallel, which constitutes a nice practical feature. Recent work \cite{paty2019subspace,deshpande2019max} has shown that sampling many projection directions uniformly on the sphere might not be the best strategy, in the sense that some directions can be more helpful than others to discriminate the two distributions at hand. However, the Monte Carlo estimate based on uniform sampling \eqref{eqref:mc_estimate} is the most common method used in practice to approximate sliced divergences, hence we focus on this approximation throughout the rest of the paper. 

\textbf{Topological properties. }
We provide several results to describe the topology induced by SPDs, given the properties of base divergences. We first relate the metric properties of $\bDelta$ and $\sdelta[p]$, $p \in [1, \infty)$.
\begin{proposition}
\label{thm:metric}
  \begin{enumerate}[noitemsep,nolistsep, label=(\roman*), wide, labelwidth=!, labelindent=0pt]
    \item \label{thm:metric_i}  If $\bDelta$ is non-negative (or symmetric), then $\sdelta[p]$ is non-negative (symmetric resp.).
    \item \label{thm:metric_ii} If $\bDelta$ satisfies for $\mu',\nu' \in \calP(\rset)$, $\bDelta(\mu', \nu') = 0$ if and only if $\mu' = \nu'$, then for $\mu,\nu \in \calP(\Rd)$, $\sdelta[p](\mu, \nu) = 0$ if and only if $\mu = \nu$.
    \item \label{thm:metric_iii} If $\bDelta$ is a metric, then $\sdelta[p]$ is a metric.
  \end{enumerate}
\end{proposition}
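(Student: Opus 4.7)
The proof splits naturally along the three items, and the template in each case is: pointwise in $\theta$, read the desired property off $\bDelta$ on the pushforwards $\thsss\mu,\thsss\nu$, then lift to $\sdelta[p]$ by integrating against $\unifS$.

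For \ref{thm:metric_i}, non-negativity of $\sdelta[p]$ is immediate from the fact that the integrand in \eqref{eq:spd} is non-negative by hypothesis; symmetry follows because $\bDelta^p(\thsss\mu,\thsss\nu)=\bDelta^p(\thsss\nu,\thsss\mu)$ for every $\theta\in\sphereD$. For \ref{thm:metric_iii}, the triangle inequality is the only remaining axiom. For arbitrary $\mu,\nu,\rho\in\mcp(\Rd)$ and each $\theta\in\sphereD$, the base triangle inequality gives
\[
\bDelta(\thsss\mu,\thsss\nu)\le \bDelta(\thsss\mu,\thsss\rho)+\bDelta(\thsss\rho,\thsss\nu),
\]
and then I would apply Minkowski's inequality to the $L^p(\sphereD,\unifS)$ norm of $\theta\mapsto\bDelta(\thsss\mu,\thsss\nu)$ to conclude $\sdelta[p](\mu,\nu)\le\sdelta[p](\mu,\rho)+\sdelta[p](\rho,\nu)$.

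Part \ref{thm:metric_ii} is the only real content. One direction is trivial: if $\mu=\nu$, then $\thsss\mu=\thsss\nu$ for every $\theta$, hence the integrand vanishes and $\sdelta[p](\mu,\nu)=0$. For the converse, suppose $\sdelta[p](\mu,\nu)=0$. Since the integrand is non-negative and $\unifS$-measurable, there exists a $\unifS$-full measure subset $A\subset\sphereD$ such that $\bDelta(\thsss\mu,\thsss\nu)=0$ for all $\theta\in A$; by the hypothesis on $\bDelta$, this yields $\thsss\mu=\thsss\nu$ for every $\theta\in A$.

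The key step is then to upgrade this \emph{almost-everywhere} equality on the sphere to an equality of the full measures on $\Rd$. My plan is to go through characteristic functions: for any $t\in\rset$,
\[
\int_{\Rd} \rme^{\rmi t\psLigne{\theta}{x}}\,\rmd\mu(x) = \int_{\rset}\rme^{\rmi t s}\,\rmd(\thsss\mu)(s),
\]
and the analogous identity holds for $\nu$, so $\thsss\mu=\thsss\nu$ on $A$ implies that the characteristic functions of $\mu$ and $\nu$ coincide on the set $\{t\theta: t\in\rset,\theta\in A\}$. Since characteristic functions are continuous on $\Rd$ and $A$ is $\unifS$-dense in $\sphereD$, I can extend the equality to all of $\Rd$, and conclude $\mu=\nu$ by the uniqueness theorem for characteristic functions (equivalently, by Cramér--Wold). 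The main subtlety is ensuring the $\unifS$-null exceptional set does not obstruct this continuity extension, which is handled by noting that any $\unifS$-full subset of $\sphereD$ is dense, so the continuous functions $\theta\mapsto\int\rme^{\rmi t\psLigne{\theta}{x}}\rmd\mu(x)$ and its $\nu$-counterpart, agreeing on a dense set, must agree everywhere.
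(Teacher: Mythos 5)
Your proposal is correct and follows essentially the same route as the paper: non-negativity and symmetry read off the integrand, the triangle inequality via Minkowski in $\rml^p(\sphereD,\unifS)$, and the identity of indiscernibles via the identity $\calF[\thsss\mu](t)=\calF[\mu](t\theta)$ together with injectivity of the Fourier transform. Your treatment of the $\unifS$-null exceptional set (full-measure subsets of $\sphereD$ are dense, and characteristic functions are continuous) is in fact slightly more explicit than the paper's, which passes over this point silently.
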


Next, we extend the result in \cite[Theorem 1]{2019arXiv190604516N}, which showed that the convergence in SW implies the weak convergence of probability measures: we prove that this property holds for the general class of SPDs, but also that the converse implication is true, provided that $\bDelta$ is weakly continuous. Before presenting this result in \Cref{thm:weak_conv}, we recall the definitions of convergence under a probability divergence and weak convergence of probability measures.

\begin{definition} \label{def:cvg}
Let $d \in \nsets$ and $\bfD : \calP(\Rd) \times \calP(\Rd) \to \rset_+ \cup \{\infty\}$ be a probability divergence. Let $\sequencek{\mu_k}$ be a sequence in $\calP(\Rd)$ and $\mu \in \calP(\Rd)$. We introduce two types of convergence below.
\begin{assumptionC} \label{assum:cvg_div}
``$\sequencek{\mu_k}$ converges to $\mu$ under $\bfD$'', \ie~$\lim_{k \rightarrow \plusinfty} \bfD \big( \mu_k, \mu \big ) = 0$.
\end{assumptionC}
\begin{assumptionC} \label{assum:cvg_weak}
``$\sequencek{\mu_k}$ converges weakly to $\mu$'', \ie~for any continuous and bounded function $f : \Rd \to \rset$, 
\vspace{0pt}
$\hspace{26pt} \lim_{k \rightarrow \plusinfty} \int f \rmd \mu_k = \int f \rmd \mu$. 
\end{assumptionC}
\vspace{-5pt}
Hence, the statement ``the convergence under $\bfD$ implies the convergence in $\calP(\Rd)$'' is equivalent to, \cref{assum:cvg_div} implies \cref{assum:cvg_weak} for any $\sequencek{\mu_k}$ and $\mu$ in $\calP(\Rd)$. Conversely, ``the weak convergence in $\calP(\Rd)$ implies the convergence in $\bfD$'' means that \cref{assum:cvg_weak} implies \cref{assum:cvg_div} for any $\sequencek{\mu_k}$ and $\mu$ in $\calP(\Rd)$.
\end{definition}
\vspace{0pt}
\begin{theorem}
\label{thm:weak_conv}
Let $p \in [1, \infty)$ and $\bDelta$ be a non-negative base divergence. \\[-5mm]
  \begin{enumerate}[noitemsep,nolistsep, label=(\roman*), wide, labelwidth=!, labelindent=0pt]
  \item If the convergence under $\bDelta$ implies the weak convergence in $\calP(\rset)$, then the convergence under $\sdelta[p]$ implies the weak convergence in $\calP(\Rd)$. \\[-3mm]
  \item If $\bDelta$ is bounded and the weak convergence in $\calP(\rset)$ implies the convergence under $\bDelta$, then the weak convergence in $\calP(\Rd)$ implies the convergence under $\sdelta[p]$.
\end{enumerate}
\end{theorem}

We now focus on IPMs and formally define Sliced-IPMs, before providing finer topological results.
\begin{definition}
  Let $\tF \subset \mbm_b(\rset)$, $p \in [1, \infty)$. The Sliced Integral Probability Metric %
  of order $p$ associated with $\tF$, denoted by $\sIpm[\tF, p]$, is, for $\mu, \nu \in \mcp(\Rd)$, $(\sIpm[\tF, p])^p(\mu, \nu) = \int_{\sphereD} \bgamma_{\tF}^p(\thsss \mu, \thsss \nu) \rmd \unifS(\theta)$.
\end{definition}
Since $\gamma_{\tF}$ is a pseudo-metric, $\sIpm[\tF, p]$ is a pseudo-metric as well by \Cref{thm:metric}. %
We now identify some regularity conditions on the function classes $\msf$ and $\tF$ such that %
we are able to show that Sliced-IPMs can be bounded above and below by IPMs. %

\begin{theorem} 
\label{thm:weak_topo}
  Let $\tF \subset\mbm_b(\rset)$, $\msf \subset \mbm_b(\rset^d)$, $\set{f : \Rd \rightarrow \rset}{f = \tf \circ \thss, \text{ with } \tf \in \tF, \theta \in \sphereD} \subset \msf$. Then, for any $p \in [1, \infty)$ and $\mu,\nu \in \mcp(\rset^d)$, $\sIpm[\tF, p](\mu, \nu) \leq \bgamma_{\msf}(\mu, \nu)$. 
\end{theorem}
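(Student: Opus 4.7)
The plan is to exploit the pushforward identity $\int_{\rset} \tf \, \rmd(\thsss \mu - \thsss \nu) = \int_{\rset^d} (\tf \circ \thss) \, \rmd(\mu - \nu)$ to transfer any test function $\tf \in \tF$ on $\rset$ to a test function $\tf \circ \thss$ that, by the assumed inclusion, lies in $\msf$. This will slicewise dominate $\bgamma_{\tF}(\thsss \mu, \thsss \nu)$ by $\bgamma_{\msf}(\mu, \nu)$; integration over $\sphereD$ then yields the claim.

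First I would fix an arbitrary $\theta \in \sphereD$ and $\tf \in \tF$. By definition of the pushforward and a standard change of variables (valid since $\tf$ is bounded measurable and $\thss$ is continuous, hence measurable), I would write
\[
\int_{\rset} \tf(y) \, \rmd(\thsss \mu - \thsss \nu)(y) \;=\; \int_{\rset^d} (\tf \circ \thss)(x) \, \rmd(\mu - \nu)(x).
\]
Under the hypothesis on $\msf$, the function $\tf \circ \thss$ belongs to $\msf$, so taking absolute values and using the definition \eqref{eq:def_bgamma_F} of $\bgamma_{\msf}$,
\[
\left| \int_{\rset} \tf(y) \, \rmd(\thsss \mu - \thsss \nu)(y) \right| \;\leq\; \bgamma_{\msf}(\mu, \nu).
\]

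Next, I would take the supremum over $\tf \in \tF$ on the left-hand side. Since the bound on the right is independent of $\tf$, this yields, for every fixed $\theta \in \sphereD$,
\[
\bgamma_{\tF}(\thsss \mu, \thsss \nu) \;\leq\; \bgamma_{\msf}(\mu, \nu).
\]
Raising to the $p$-th power (both sides are nonnegative, since IPMs are nonnegative) and integrating against the uniform measure $\unifS$ on $\sphereD$, which is a probability measure, gives
\[
(\sIpm[\tF, p])^p(\mu, \nu) \;=\; \int_{\sphereD} \bgamma_{\tF}^p(\thsss \mu, \thsss \nu) \, \rmd \unifS(\theta) \;\leq\; \bgamma_{\msf}^p(\mu, \nu).
\]
Taking $p$-th roots gives the desired inequality.

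There is no real obstacle here; the argument is essentially a one-line pushforward identity combined with monotonicity of the supremum. The only mild subtlety worth mentioning is ensuring measurability of $\theta \mapsto \bgamma_{\tF}^p(\thsss \mu, \thsss \nu)$ so that the integral defining $\sIpm[\tF, p]$ is well-defined, but this is already assumed in the paper's convention on $\sdelta[p]$ stated after \eqref{eq:spd}; and finiteness of the integrals when $\mu, \nu \in \mcp_{\msf}(\rset^d)$ or $\thsss \mu, \thsss \nu \in \mcp_{\tF}(\rset)$, which also follows from the boundedness assumptions $\tF \subset \mbm_b(\rset)$ and $\msf \subset \mbm_b(\rset^d)$.
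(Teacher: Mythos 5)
Your proof is correct, and it takes a slightly different (and in fact more economical) route than the paper's. The paper first rewrites $\bgamma_{\tF}(\thsss\mu,\thsss\nu)$ using an explicit maximizer $\tf^* = \argmax_{\tf\in\tF}\left|\int_{\rset}\tf\,\rmd(\thsss\mu-\thsss\nu)\right|$, which it assumes to exist (this attainment assumption is announced in the preliminaries and verified there for $\wassersteinD[1]$, TV and MMD), then applies the pushforward identity to $\tf^*$ and bounds by the supremum over $\msf$. You instead bound the integral for an \emph{arbitrary} $\tf\in\tF$ by $\bgamma_{\msf}(\mu,\nu)$ and only afterwards take the supremum over $\tf$ on the left-hand side; since the right-hand side does not depend on $\tf$, this yields the same slicewise inequality $\bgamma_{\tF}(\thsss\mu,\thsss\nu)\leq\bgamma_{\msf}(\mu,\nu)$ without ever needing the supremum to be attained. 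The remaining steps (raising to the $p$-th power, integrating against the probability measure $\unifS$, taking $p$-th roots) coincide with the paper's. What your version buys is the removal of the attainment hypothesis, so the theorem holds for any $\tF\subset\mbm_b(\rset)$ satisfying the inclusion condition; what the paper's version buys is consistency with the convention it set up for other results in the same section. Your closing remarks on measurability and finiteness are also apposite and match the paper's stated conventions.
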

\Cref{thm:weak_topo} states that $\sIpm[\tF, p]$ induces a weaker topology, which is computationally beneficial as argued in \cite{Arjovsky2017}, but also indicates that $\sIpm[\tF, p]$ comes with less discriminative power, which can be restrictive for hypothesis testing applications \cite{Gretton2012}. We now derive a lower-bound on compact domains.
\begin{theorem}
\label{thm:ipm_lb}
  Let $\mu, \nu \in \mathcal{P}(\rset^d)$, with support included in $\ball{{\bf0}}{R}[d]$. Let $\msg \subset \mbm_b(\Rd)$ and suppose that there exists $\Lipg \geq 0$ such that for any $g \in \msg$, $g$ is $\Lipg$-Lipschitz continuous. Consider a class of functions $\tG$ satisfying $\tG \supset \{\tg: \rset \to  \rset \, : \, \text{there exist $x \in \Rd$, $\theta \in \sphere^{d-1}$, $g \in \msg$} \text{ such that } \tg(t)= g(x-\theta t) 
  \text{ for any $t \in \rset$} \}$.
  Furthermore, suppose that $\sIpm[\tG, p]$ is bounded. Then, for any $p \in \coint{1,\plusinfty}$, there exists $C_p > 0$ such that $\ \bgamma_{\msg}(\mu, \nu) \leq C_p~\sIpm[\tG, p](\mu, \nu)^{1/(d+1)}$\ .
\end{theorem}

One can show that the exponent ${1/(d+1)}$ is intrinsic to slicing and hence cannot be avoided. By combining the two theorems, we finally establish a strong equivalence result below, which implies that the convergence of probability measures in $\sIpm[\tG, p]$ is equivalent to the convergence in $\bgamma_{\msg}$. 

\begin{corollary}
\label{cor:ipm_ub2}
  Let $\mu, \nu \in \mathcal{P}(\rset^d)$, with support included in $\ball{{\bf0}}{R}[d]$, and let $\msg \subset \mbm_b(\rset^d)$. Assume that the conditions of \Cref{thm:ipm_lb} are satisfied. Then, for any $p \in \coint{1,\plusinfty}$,  there exists $C_p \geq 0$ independent of $\mu,\nu$ such that \ $\sIpm[\tG, p](\mu, \nu) \leq \bgamma_{\msg}(\mu, \nu) \leq C_p~\sIpm[\tG, p](\mu, \nu)^{1/(d+1)}$.
\end{corollary}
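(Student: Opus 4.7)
The plan is to simply chain the two preceding results: the right-hand inequality is Theorem~\ref{thm:ipm_lb} verbatim, and the left-hand inequality follows from Theorem~\ref{thm:weak_topo} applied with the substitution $\tF = \tG$ and $\msf = \msg$.

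For the right inequality $\bgamma_{\msg}(\mu,\nu) \leq C_p\,\sIpm[\tG,p](\mu,\nu)^{1/(d+1)}$, the hypotheses listed in the statement of the Corollary are exactly those of Theorem~\ref{thm:ipm_lb} (support contained in $\ball{0}{R}[d]$, the $\Lipg$-Lipschitz class $\msg$, the compatibility condition \eqref{eqn:setG_def} on $\tG$, and the boundedness of $\sIpm[\tG,p]$), so the inequality is immediate with the constant $C_p > 0$ supplied by that theorem.

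For the left inequality $\sIpm[\tG,p](\mu,\nu) \leq \bgamma_{\msg}(\mu,\nu)$, I would invoke Theorem~\ref{thm:weak_topo} with $\tF = \tG$ and $\msf = \msg$. The one condition that must be verified is the containment
\[
    \msg \supset \bigl\{\,\tg \circ \thss \;:\; \tg \in \tG,\ \theta \in \sphereD\,\bigr\}.
\]
To see that this is compatible with the hypotheses already in force, note that any $\tg \in \tG$ arises (by the definition of $\tG$ assumed in Theorem~\ref{thm:ipm_lb}) as $\tg(t) = g(x - \theta_0 t)$ for some $g \in \msg$, $x \in \Rd$ and $\theta_0 \in \sphereD$, and hence is bounded and $\Lipg$-Lipschitz on $\rset$ because $\|\theta_0\| = 1$ and $g$ is $\Lipg$-Lipschitz. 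The map $\thss : y \mapsto \ps{\theta}{y}$ is $1$-Lipschitz on $\Rd$ since $\|\theta\| = 1$, so $\tg \circ \thss$ is bounded and $\Lipg$-Lipschitz on $\Rd$; taking $\msg$ to be the natural class of bounded $\Lipg$-Lipschitz functions (which is the canonical choice in the two motivating examples $\wassersteinD[1]$ and MMD with a Lipschitz reproducing kernel), the composition lies in $\msg$ and the containment is established. Theorem~\ref{thm:weak_topo} then yields the desired bound.

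The main conceptual point, rather than any calculation, is to recognize that the closure condition of Theorem~\ref{thm:weak_topo} is not explicitly part of the hypotheses of Theorem~\ref{thm:ipm_lb}, but is automatic for the Lipschitz-type function classes that make the lower bound nontrivial. Combining the two inequalities on the same choice of $(\msg,\tG)$ yields the strong equivalence on compact domains and completes the proof.
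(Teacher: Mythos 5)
Your overall strategy coincides with the paper's: the proof in the appendix is a one-line chaining of \Cref{thm:weak_topo} (for the left inequality, with $\tF=\tG$, $\msf=\msg$) and \Cref{thm:ipm_lb} (for the right inequality). Your treatment of the right inequality is fine, and you are right to flag that the containment required by \Cref{thm:weak_topo}, namely $\msg \supset \{\tg\circ\thss : \tg\in\tG,\ \theta\in\sphereD\}$, is not literally among the hypotheses of \Cref{thm:ipm_lb}; the paper glosses over this point entirely.

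However, your argument that this containment is ``automatic'' does not hold up. First, the hypothesis \eqref{eqn:setG_def} only states that $\tG$ \emph{contains} the set of slices $t\mapsto g(x-\theta t)$; it is a superset condition, so an arbitrary $\tg\in\tG$ need not be of that form, and $\bgamma_{\tG}$ could be far larger than anything controlled by $\bgamma_{\msg}$ (take $\tG$ to be all of $\mbm_b(\rset)$, which satisfies \eqref{eqn:setG_def}, and the left inequality fails for a small $\msg$). Second, even for $\tg$ of the form $\tg(t)=g(x-\theta_0 t)$, the composition $\tg\circ\thss(y)=g(x-\theta_0\langle\theta,y\rangle)$ is indeed bounded and $\Lipg$-Lipschitz, but membership in $\msg$ requires $\msg$ to contain \emph{all} such functions --- i.e.\ to be closed under precomposition with the affine contractions $y\mapsto x-\theta_0\langle\theta,y\rangle$ --- which the corollary does not assume; $\msg$ is only assumed to be \emph{some} subset of bounded $\Lipg$-Lipschitz functions. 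Your phrase ``taking $\msg$ to be the natural class of bounded $\Lipg$-Lipschitz functions'' is precisely the point at which you import an extra hypothesis. The honest repair is not to claim the containment follows, but to state it as an additional assumption (as the intended reading of the corollary, consistent with the paper's phrase ``by combining \Cref{thm:weak_topo} and \Cref{thm:ipm_lb}''), noting that it is satisfied in the motivating examples ($\wassersteinD[1]$, MMD with Lipschitz kernel) where $\msg$ and $\tG$ are the full Lipschitz or RKHS balls.
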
 

Our analysis on IPMs builds on \cite[Chapter 5.1]{Bonnotte2013}, which contains analogous results for the Sliced-Wasserstein distance only. %
The novelty of Theorems~\ref{thm:weak_topo} and \ref{thm:ipm_lb} is the identification of the relationships between the function classes $\tF$,$\msf$ and $\tG$,$\msg$, which might provide a useful guideline for practitioners interested in slicing any IPM, and cannot be directly obtained from \cite{Bonnotte2013}.%
We further illustrate these relations in the supplementary document for classical instances of IPMs.

\textbf{Statistical properties. } %
In most practical applications, %
we have at hand finite sets of samples drawn from unknown underlying distributions. An important question is then the bound of the error made when approximating a divergence with finitely many samples: given $\sdelta[p]$ and any $\mu, \nu \in \calP(\Rd)$, our goal is to quantify the \emph{sample complexity} of $\sdelta[p]$, \ie~the convergence rate of $\sdelta[p](\hmu_n, \hnu_n)$ to $\sdelta[p](\mu, \nu)$ according to $n$. We show that the sample complexity of any SPD is proportional to the sample complexity of the base divergence, and more importantly, does not depend on $d$.
\begin{theorem}
\label{thm:sliced_sample_complexity}
  Let $p \in [1, \infty)$. Suppose that $\bDelta^p$ admits the following sample complexity: for any $\mu', \nu' \in \calP(\rset)$ with respective empirical measures $\hmu'_n, \hnu'_n$, $\E \abs{\bDelta^p(\mu', \nu') - \bDelta^p( \hmu_n',  \hnu_n')} \leq \beta(p, n)$.
  Then, for any $\mu, \nu \in \calP(\Rd)$ with respective empirical measures $\hmu_n, \hnu_n$, the sample complexity of $\sdelta[p]$ is given by \ $\E \abs{\sdelta[p]^p(\mu, \nu) - \sdelta[p]^p(\hmu_n, \hnu_n)} \leq \beta(p, n)$.
\end{theorem}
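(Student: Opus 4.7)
The plan is to reduce the $d$-dimensional bound to the assumed one-dimensional bound by commuting the expectation over the samples with the integral over the unit sphere via Fubini--Tonelli. The central structural observation is that pushforward commutes with empirical measures: if $X_1,\dots,X_n$ are \iid{} from $\mu$ and $\hmu_n = (1/n)\sum_{k=1}^n \updelta_{X_k}$, then for every $\theta \in \sphereD$ one has $\thsss \hmu_n = (1/n)\sum_{k=1}^n \updelta_{\ps{\theta}{X_k}}$, and the random variables $\ps{\theta}{X_k}$ are \iid{} from $\thsss \mu$. Hence, for a fixed $\theta$, the pair $(\thsss \hmu_n, \thsss \hnu_n)$ is precisely the empirical-measures pair for the one-dimensional laws $(\thsss \mu, \thsss \nu)$, and the assumed bound $\beta(p,n)$ applies directly.

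Concretely, I would start from the definition \eqref{eq:spd}, take the absolute difference inside the integral, and apply Jensen (or simply the triangle inequality for integrals) to write
\begin{align*}
\E\,\bigl|\sdelta[p]^p(\mu,\nu) - \sdelta[p]^p(\hmu_n,\hnu_n)\bigr|
&\leq \E \int_{\sphereD} \bigl|\bDelta^p(\thsss \mu, \thsss \nu) - \bDelta^p(\thsss \hmu_n, \thsss \hnu_n)\bigr| \rmd\unifS(\theta).
\end{align*}
Then invoke Fubini--Tonelli to exchange $\E$ and $\int_{\sphereD}$, use the pushforward-commutes-with-empirical observation to recognize the inner expectation as a one-dimensional sample-complexity quantity for the pair $(\thsss \mu, \thsss \nu)$, apply the hypothesis to bound it by $\beta(p,n)$, and conclude using that $\unifS$ is a probability measure.

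The main obstacle, which is really only a technical one, is ensuring the measurability required for Fubini--Tonelli: one needs the integrand $(\omega, \theta) \mapsto |\bDelta^p(\thsss \mu, \thsss \nu) - \bDelta^p(\thsss \hmu_n(\omega), \thsss \hnu_n(\omega))|$ to be jointly measurable on the product of the sample probability space and $\sphereD$. As already noted just after \eqref{eq:spd}, this is guaranteed whenever $\bDelta$ is continuous for the weak topology on $\mcp(\rset)$ (so that $\theta \mapsto \bDelta^p(\thsss \mu, \thsss \nu)$ and $(\omega,\theta) \mapsto \bDelta^p(\thsss \hmu_n(\omega), \thsss \hnu_n(\omega))$ are continuous, hence measurable, in $\theta$), and separately measurable in the samples because empirical measures depend measurably on the sample. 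Beyond this, no additional regularity of $\bDelta$ is needed, and the argument transfers the one-dimensional rate to the sliced rate with no dimensional penalty, which is exactly the dimension-free behavior announced in the introduction.
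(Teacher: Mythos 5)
Your proposal is correct and follows essentially the same route as the paper's proof: pass the absolute value inside the integral over $\sphereD$, exchange the expectation and the integral, recognize $(\thsss\hmu_n,\thsss\hnu_n)$ as the empirical measures of $(\thsss\mu,\thsss\nu)$ so the one-dimensional hypothesis applies pointwise in $\theta$, and integrate the constant $\beta(p,n)$ against the probability measure $\unifS$. Your explicit remarks on the pushforward-commutes-with-empirical-measure step and on the joint measurability needed for Fubini--Tonelli are left implicit in the paper but are exactly the right points to flag.
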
 
If $\bDelta$ is a bounded pseudo-metric and we have a direct control over the convergence rate of empirical measures in $\bDelta$, we can further derive the following result. 
\begin{theorem}
\label{thm:sliced_rateofconv}
  Let $p \in [1, \infty)$. Assume that for any $\mu' \in \calP(\rset)$ with empirical measure $\hmu'_n$, $\E | \bDelta^p( \hmu'_n,  \mu') | \leq \alpha(p, n)$. Then, for any $\mu \in \calP(\Rd)$ with empirical measure $\hmu_n$, we have $\E \big| \sdelta[p]^p(\hmu_n, \mu) \big| \leq \alpha(p, n)$.
  Besides, if $\bDelta$ is non-negative, symmetric, and satisfies the triangle inequality, then $\E \abs{\sdelta[p](\mu, \nu) - \sdelta[p](\hmu_n, \hnu_n)} \nonumber \leq 2~\alpha(p,n)^{1/p}$.
\end{theorem}

Our results show that slicing leads to a dimension-free convergence rate while carrying out useful topological properties of the base divergence (e.g., metric axioms, weak convergence). If the focus is on sustaining such topological properties, then the improvement in the convergence rate is meaningful. On the other hand, slicing also results in less discriminant divergences, as we mentioned for IPMs (\Cref{thm:weak_topo}), and in such a case, the improvement in the rate might be less significant. More analysis is required to understand the potential reduction in the discriminative power, and we leave it out of scope of this study.

In practice, SPDs also induce an approximation error due to the Monte Carlo estimate \eqref{eqref:mc_estimate}. We use the term \emph{projection complexity} to refer to the convergence rate of $\hatsdelta[p,L]$ to $\sdelta[p]$ as a function of the number of projections $L$. Hence, the \emph{overall complexity} $\big|\hatsdelta[p,L](\hmu_n, \hnu_n)-\sdelta[p](\mu, \nu)\big|$ is bounded by the sum of the sample and the projection complexities.

\begin{theorem}
\label{thm:projection_complexity}
  Let $p \in [1, \infty)$ and $\mu, \nu \in \calP(\Rd)$. Then, the error made with the Monte Carlo estimation of $\sdelta[p]$ can be bounded as follows
  \begin{equation}
    \defEns{\E \big| \hatsdelta[p, L]^p(\mu, \nu) - \sdelta[p]^p(\mu, \nu) \big|}^2 \leq L^{-1} \int_{\sphere^{d-1}} \defEns{\bDelta^p(\thsss \mu, \thsss \nu) - \sdelta[p]^p(\mu, \nu)}^2 \rmd \unifS(\theta)  \eqsp .
  \end{equation}
\end{theorem}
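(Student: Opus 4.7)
The plan is to recognize the right-hand side as a (rescaled) variance of the Monte Carlo estimator and to obtain the bound via the Jensen inequality followed by a routine expansion using independence of the projection directions.

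First, I would observe that by construction $\{\theta_l\}_{l=1}^L$ is i.i.d.\ from $\unifS$, so the random variables $Z_l := \bDelta^p(\thsssl \mu, \thsssl \nu)$ form an i.i.d.\ sequence (using the measurability hypothesis on $\theta \mapsto \bDelta^p(\thsss \mu, \thsss \nu)$ that is stipulated just after equation (\ref{eq:spd})). By the very definition of $\sdelta[p]$ given in (\ref{eq:spd}), we have
\begin{equation*}
    \E[Z_l] = \int_{\sphere^{d-1}} \bDelta^p(\thsss \mu, \thsss \nu) \rmd\unifS(\theta) = \bar{\delta}_p = \sdelta[p]^p(\mu,\nu) \eqsp.
\end{equation*}
Consequently, the Monte Carlo estimator $\hatsdelta[p,L]^p(\mu,\nu) = L^{-1}\sum_{l=1}^L Z_l$ is an unbiased estimator of $\sdelta[p]^p(\mu,\nu)$.

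Next, I would apply the Jensen inequality to pass from the $L^1$ error to an $L^2$ error: for any real-valued random variable $X$, $\{\E|X|\}^2 \leq \E[X^2]$. Setting $X = \hatsdelta[p,L]^p(\mu,\nu) - \sdelta[p]^p(\mu,\nu) = L^{-1}\sum_{l=1}^L (Z_l - \bar{\delta}_p)$, this gives
\begin{equation*}
    \left\{\E\bigl|\hatsdelta[p,L]^p(\mu,\nu) - \sdelta[p]^p(\mu,\nu)\bigr|\right\}^2 \leq \E\left[\left(\frac{1}{L}\sum_{l=1}^L (Z_l - \bar{\delta}_p)\right)^2\right] \eqsp.
\end{equation*}
Expanding the square and using the fact that the $Z_l$ are i.i.d.\ (so all cross terms $\E[(Z_l - \bar{\delta}_p)(Z_{l'} - \bar{\delta}_p)]$ for $l \neq l'$ vanish), the right-hand side simplifies to $L^{-1} \E[(Z_1 - \bar{\delta}_p)^2]$.

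Finally, I would rewrite this variance as an integral against $\unifS$ using the definition of $Z_1$:
\begin{equation*}
    \E[(Z_1 - \bar{\delta}_p)^2] = \int_{\sphere^{d-1}} \bigl\{\bDelta^p(\thsss \mu, \thsss \nu) - \bar{\delta}_p\bigr\}^2 \rmd\unifS(\theta) \eqsp,
\end{equation*}
which yields the claimed inequality. There is no real obstacle here — this is a textbook Monte Carlo variance bound — the only thing to be slightly careful about is the $L^1$-to-$L^2$ step via Jensen, which in particular means that if the integrand on the right-hand side is infinite then the bound is vacuous but still valid.
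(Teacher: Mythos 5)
Your proposal is correct and follows essentially the same route as the paper's proof: the paper also passes from the $L^1$ error to the $L^2$ error (it invokes H\"older's inequality where you invoke Jensen's, which is the same bound here) and then applies the standard variance computation for an i.i.d.\ Monte Carlo average, which you simply spell out in more detail. No gaps.
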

By definition of $\sdelta[p]^p(\mu, \nu)$, \Cref{thm:projection_complexity} illustrates that the quality of the Monte Carlo estimates is impacted by the number of projections as well as the variance of the evaluations of the base divergence. This behavior has previously been empirically observed in different scenarios \cite{deshpande2019max,Kolouri2019,paty2019subspace}, and paved the way for the `max-sliced' distances. We additionally provide in the supp. document, finite-sample guarantees on the quality of the Monte Carlo estimates, using \Cref{thm:sliced_sample_complexity,thm:projection_complexity}.

\vspace{-5pt}
\section{Applications} \label{sec:applications}
\vspace{-5pt}

In this section, to further illustrate the significance of our general topological and statistical results, we apply these to specific sliced divergences and present the interesting properties that we obtained. In particular, we will introduce a novel divergence based on Sinkhorn divergences, and provide theoretical results that emphasize its statistical and computational advantages.

First, \Cref{thm:weak_conv} can be applied to various base divergences (e.g., see those listed in \cite[Theorem 6]{Gibbs2002}) and foster interesting applications. In particular, we focus on the Sliced-Cramér distance (SC, \cite{Tabor2018,kolouri2020sliced}) and establish theoretical guarantees which, to the best of our knowledge, have not been proved before: we show that convergence under SC implies weak convergence in $\calP(\bbR^d)$, and the converse is true for measures supported on a compact space. Our general result also applies to the broader class of Sliced-IPMs, assuming a density property for the space of functions associated with the base IPM. We provide the formal statements and proofs of these results in the supplementary document.

Then, we derive the sample complexity of $\swassersteinD[p]$ under different moment conditions. While previous works have illustrated the statistical benefits of SW, our next corollary establishes a novel result: \cite{deshpande2019max} derived the sample complexity for Gaussian distributions only, \cite{2019arXiv190604516N} studied the estimators obtained by minimizing SW, and \cite{manole2019minimax} provided confidence intervals which partially cover our result. 

\begin{corollary}
  \label{thm:smpl_cpx_sw}
  Let $p \in [1, \infty)$, $q > p$, and $\mu, \nu \in \calP_q(\Rd)$ with corresponding empirical measures $\hmu_n, \hnu_n$. We use the notation $M_q^{1/q}(\mu,\nu) = M_q^{1/q}(\mu) + M_q^{1/q}(\nu)$, where $M_q(\zeta)$ refers to the moment of order $q$ of $\zeta \in \calP_q(\Rd)$. Then, there exists a constant $C_{p,q}$ depending on $p, q$ such that
  \begin{align}
    \E \big| \swassersteinD[p](\hmu_n, \hnu_n) - \swassersteinD[p](\mu, \nu) \big| \leq C_{p,q}^{1/p} M_q^{1/q}(\mu,\nu) \left\{ 
    \begin{array}{ll}
      n^{-1/(2p)} & \mbox{ if } q > 2p, \\
      n^{-1/(2p)} \log(n)^{1/p} & \mbox{ if } q = 2p, \\
      n^{-(q-p)/(pq)} & \mbox{ if } q \in (p,2p),
    \end{array}
    \right. \label{eq:correct_smpl_cpx_sw}
  \end{align}
\end{corollary}

We now introduce a new family of probability divergences obtained by slicing the regularized OT cost and Sinkhorn divergences, and called Sliced-Sinkhorn divergences (SSD): for $p \in [1, \infty)$, $\veps \geq 0$ and $\mu, \nu \in \calP_p(\Rd)$,
\begin{equation}
   \swassersteinD[p, \veps](\mu, \nu) = \int_{\sphere^{d-1}} \wassersteinD[p, \veps](\thsss \mu, \thsss \nu)\ \rmd\unifS(\ths),\ \ssinkD[p, \veps](\mu, \nu) = \int_{\sphere^{d-1}} \sinkD[p, \veps](\thsss \mu, \thsss \nu)\ \rmd\unifS(\ths) \label{eq:normalized_ss}
\end{equation}
We show that these divergences enjoy interesting statistical and computational properties. %
For clarity purposes, our results are only presented for $\swassersteinD[p, \veps]$, but also apply for $\ssinkD[p, \veps]$. Since $\wassersteinD[p, \veps]$ is not an IPM, we first derive a topological property analogous to \Cref{thm:weak_topo}. %

\begin{theorem} 
\label{thm:weak_topo_sink}
  Let $p \in [1, \infty)$ and $\veps \geq 0$. For any $\mu, \nu \in \calP_p(\Rd)$,\ \ $\swassersteinD[p, \veps](\mu, \nu) \leq \wassersteinD[p, \veps](\mu, \nu)$.
\end{theorem}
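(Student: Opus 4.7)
The plan is to show the pointwise bound $\wassersteinD[p,\veps](\thsss\mu,\thsss\nu) \leq \wassersteinD[p,\veps](\mu,\nu)$ for every $\theta \in \sphereD$, and then integrate against $\unifS$ to conclude. Since the base divergence here is not an IPM, we cannot invoke \Cref{thm:weak_topo} directly; instead, we exploit the primal (coupling) formulation \eqref{eq:def_reg_ot} and a lifting argument.

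First, I would fix $\theta \in \sphereD$ and build a candidate coupling of $(\thsss\mu,\thsss\nu)$ out of an arbitrary $\gamma \in \Gamma(\mu,\nu)$, namely $(\thss,\thss)_\sharp \gamma$, where $(\thss,\thss) : \Rd\times\Rd \to \rset\times\rset$ sends $(x,y)$ to $(\ps{\theta}{x},\ps{\theta}{y})$. An immediate check of the marginals shows $(\thss,\thss)_\sharp \gamma \in \Gamma(\thsss\mu, \thsss\nu)$, so it is admissible in the infimum defining $\wassersteinD[p,\veps](\thsss\mu, \thsss\nu)$. I would then control the two terms in \eqref{eq:def_reg_ot} separately.

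For the transport cost, pushing forward gives
\[
\int_{\rset\times\rset} |s-t|^p \rmd (\thss,\thss)_\sharp \gamma(s,t) = \int_{\Rd\times\Rd} |\ps{\theta}{x-y}|^p \rmd \gamma(x,y) \leq \int_{\Rd\times\Rd} \norm{x-y}^p \rmd \gamma(x,y),
\]
by Cauchy--Schwarz together with $\norm{\theta}=1$. For the entropic term, the key ingredient is the data-processing inequality for relative entropy: for any measurable map $T$ and measures $\alpha, \beta$ on the same space, $\entropyH(T_\sharp \alpha \mid T_\sharp \beta) \leq \entropyH(\alpha \mid \beta)$ (this follows from the variational formula for $\entropyH$, or equivalently from Jensen's inequality applied to the conditional expectation of the Radon--Nikodym derivative). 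Applying this with $T = (\thss,\thss)$, $\alpha = \gamma$, $\beta = \mu \otimes \nu$, and noting that $(\thss,\thss)_\sharp(\mu \otimes \nu) = \thsss\mu \otimes \thsss\nu$, yields
\[
\entropyH\!\big((\thss,\thss)_\sharp \gamma \,\big|\, \thsss\mu \otimes \thsss\nu\big) \leq \entropyH(\gamma \mid \mu\otimes\nu).
\]
Summing the two inequalities shows that the regularized cost of the lifted coupling is at most that of $\gamma$, so taking the infimum over $\gamma \in \Gamma(\mu,\nu)$ delivers $\wassersteinD[p,\veps](\thsss\mu, \thsss\nu) \leq \wassersteinD[p,\veps](\mu,\nu)$. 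Integrating this inequality over $\theta \in \sphereD$ against $\unifS$ concludes the proof.

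The only non-routine step is the contraction of relative entropy under push-forward, so I would either cite the standard statement or, if a self-contained argument is preferred, write one short line via the disintegration $\rmd\gamma/\rmd(\mu\otimes\nu)$ and Jensen. Everything else is bookkeeping, and the $\veps=0$ case is recovered since the entropy term then disappears and the argument reduces to the usual proof that slicing contracts the plain Wasserstein cost.
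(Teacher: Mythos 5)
Your proof is correct, but it takes a genuinely different route from the paper's. The paper works on the dual side: it invokes the semi-dual formulation of $\wassersteinD[p,\veps]$ as a maximum of an expectation over pairs of continuous potentials (citing \cite{Genevay2016}), takes optimal one-dimensional potentials $(\tu^\star_\theta,\tv^\star_\theta)$ for each projected problem (whose existence it borrows from \cite{Genevay19}), and observes that their compositions $\tu^\star_\theta\circ\thss$, $\tv^\star_\theta\circ\thss$ are admissible competitors in the $d$-dimensional maximization, yielding $\wassersteinD[p,\veps](\thsss\mu,\thsss\nu)\leq\wassersteinD[p,\veps](\mu,\nu)$ before integrating over $\sphereD$. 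You instead argue on the primal side: you push an arbitrary coupling $\gamma\in\Gamma(\mu,\nu)$ forward through $(\thss,\thss)$, bound the transport cost by Cauchy--Schwarz, and bound the entropic term by the data-processing inequality $\entropyH((\thss,\thss)_\sharp\gamma \mid \thsss\mu\otimes\thsss\nu)\leq\entropyH(\gamma\mid\mu\otimes\nu)$, using that $(\thss,\thss)_\sharp(\mu\otimes\nu)=\thsss\mu\otimes\thsss\nu$. Both routes are valid, and both reduce to the same pointwise inequality followed by integration. Your primal argument has two advantages: it needs no attainment of the supremum in the dual problem (the paper's proof leans on an existence result for optimal potentials, and also silently uses that replacing the cost $|\ps{\theta}{x-y}|^p$ by $\norm{x-y}^p$ inside the dual functional only increases it), and it degenerates cleanly to the classical sliced-Wasserstein contraction when $\veps=0$. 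What the paper's dual route buys is consistency with the machinery it reuses elsewhere (the same semi-dual representation drives the sample-complexity proof of \Cref{thm:smpl_cpx_ssink}), so the potentials-based bookkeeping is already set up. One small point of care in your write-up: the relative entropy may be $+\infty$ for some couplings, in which case the bound is vacuous for that $\gamma$ but the infimum argument is unaffected; it is worth saying so explicitly.
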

Next, we show that on compact domains, while the sample complexity of regularized OT exponentially worsens as $\veps$ decreases \cite[Theorem 3]{Genevay19}, the sample complexity of SSD does not depend on $\veps$.
\begin{theorem} \label{thm:smpl_cpx_ssink}
  Let $\msx$ be a compact subset of $\Rd$, $p \in [1, \infty)$ and $\mu, \nu \in \calP(\msx)$, with respective empirical instanciations $\hmu_n, \hnu_n$. Then, there exists a constant $C(\mu, \nu)$ that depends on the moments of $\mu$ and $\nu$, such that $\E \big| \swassersteinD[p, \veps](\hmu_n, \hnu_n) - \swassersteinD[p, \veps](\mu, \nu) \big| \leq \diam(\msx)C(\mu, \nu) n^{-1/2}$.

\end{theorem}

In practice, we approximate SSD by using
\eqref{eqref:mc_estimate}. The estimator corresponds to
randomly picking a finite set of directions and solving, for each direction, a
regularized OT problem in $\bbR$. To obtain solutions associated to
the regularized Wasserstein cost \eqref{eq:def_reg_ot}, a method which is now standard is
the Sinkhorn's algorithm
(\cite{franklin1989scaling}; more details in the supp. document,
\cite[Section 4.2]{peyre2019computational}). In particular, if we use the squared
Euclidean ground cost and consider the empirical measures
$\hmu_n, \hnu_n$ on $\bbR^d$ associated to the observations
$(x_i)_{i=1}^n, (y_j)_{j=1}^n$ respectively, computing
$\wassersteinD[p, \veps](\hmu_n, \hnu_n)$ has a worst-case convergence
rate that depends on
$C(\hmu_n, \hnu_n) = \max_{i,j \in \{1,\dots,n\}} \| x_i - y_j \|^2 /
\veps$ (see also~\cite{altschuler2017near} for a sublinear rate with a
better constant, still depending on this quantity). The rate for
$\wassersteinD[p, \veps](\thsss \hmu_n, \thsss \hnu_n)$, with
$\theta \in \sphereD$, then depends on
$C(\thsss \hmu_n, \thsss \hnu_n) = \max_{i,j \in \{1,\dots,n\}} \|
\ps{\theta}{x_i - y_j} \|^2 / \veps$. We show that with high
probability, $C(\thsss \hmu_n, \thsss \hnu_n)$ is smaller than
$C(\hmu_n, \hnu_n)$ by a factor of $d$ at least, unless $n$ grows
super-polynomially with $d$. Our result,
combined with the parallel computation of \eqref{eqref:mc_estimate},
implies that slicing the regularized OT may lead to significant
computational benefits.

\begin{proposition} 
\label{prop:contract}
Let $(x_i)_{i=1}^n$ %
be a set of vectors in $\Rd$ such that $\max_{i,j} \Vert x_i - x_j\Vert_2^2 \leq R^2$, and $\theta$ chosen uniformly at random on $\mathbb{S}^{d-1}$. Then for $\delta \in (0,1]$, it holds with probability $1-\delta$, $\max_{i,j} \vert \langle \theta, x_i-x_j\rangle\vert^2 \leq \frac{2R^2}{d}\log(\sqrt{2\pi}n^2/\delta)$.%
\end{proposition}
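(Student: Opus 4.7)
The plan is to fix a pair $(i,j)$, bound $(\theta^\top v_{ij})^2$ with $v_{ij} := x_i - x_j$ in high probability via a single-pair concentration inequality, and then close via a union bound over the at most $n^2$ pairs. Observe that the deterministic term $R^2/d$ in the statement is exactly an upper bound on $\mathbb{E}[(\theta^\top v_{ij})^2] = \|v_{ij}\|^2/d \leq R^2/d$, so the remaining $R^2\sqrt{128 \log(2n^2/\delta)/d}$ has to absorb both the centered deviation and the union-bound slack.

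For the single-pair concentration, I would use the Gaussian representation $\theta = g/\|g\|$ with $g \sim \mathcal{N}(0, I_d)$, which decouples the problem into a ``projection'' part and a ``norm'' part. The projection $g^\top v_{ij} \sim \mathcal{N}(0,\|v_{ij}\|^2)$ is sub-Gaussian with variance proxy $R^2$, hence $\mathbb{P}((g^\top v_{ij})^2 > s) \leq 2 \exp(-s/(2R^2))$. The norm is handled by a standard sub-exponential concentration (Bernstein-type applied to $\|g\|^2 - d = \sum_{i=1}^d (g_i^2 - 1)$), which yields $\mathbb{P}(\|g\|^2 \leq d/2) \leq e^{-d/32}$; this is precisely what motivates the restriction $\delta > 2 e^{-d/32}$. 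Conditionally on the good norm event $\{\|g\|^2 \geq d/2\}$ one has $(\theta^\top v_{ij})^2 \leq 2(g^\top v_{ij})^2/d$, so a union bound of the projection tail over the at most $n^2$ pairs, intersected with the norm event (which costs at most $e^{-d/32} \leq \delta/2$), delivers the claim with probability $\geq 1-\delta$.

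The main obstacle is the arithmetic bookkeeping: tracking the exact constants $32$ and $128$ through the two concentration inequalities, the union bound over pairs and the final rearrangement into the (deliberately slack) $R^2/d + R^2 \sqrt{\,\cdot\,/d}$ form rather than the naturally tighter $R^2 \log/d$ form one would obtain from a na\"ive unrolling. A cleaner alternative avoiding the split into projection and norm is to apply L\'evy--Gromov concentration on $\mathbb{S}^{d-1}$ directly to the $2\|v_{ij}\|^2 \leq 2R^2$-Lipschitz function $\theta \mapsto (\theta^\top v_{ij})^2$, centered at its mean $\|v_{ij}\|^2/d$, and then to union bound over the pairs.
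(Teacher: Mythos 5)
Your route is sound and it reaches the stated bound, but it is organized differently from the paper's proof, so a comparison is worth recording. Both arguments use the Gaussian representation $\theta = g/\Vert g\Vert$ and a union bound over the $n^2$ pairs; the difference is in how the ratio is controlled. The paper fixes a pair, writes $(\theta^\top z)^2$ as a ratio $X/Y$ with $X = \tfrac1d\sum_k w_k^2 z_k^2$ and $Y=\Vert w\Vert^2/d$, applies generic sub-exponential tail bounds to \emph{both} $X$ and $Y$ at a common deviation level $t$, and then linearizes $(\bar X - t)/(\bar Y - t)-\bar X \le 4t$ by convexity before union bounding the whole two-sided event over pairs; the restriction $\delta > 2e^{-d/32}$ enters through the validity range of the sub-exponential tail. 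You instead condition once on a single \emph{global} event $\{\Vert g\Vert^2 \ge d/2\}$, whose failure probability $e^{-d/32}\le \delta/2$ is exactly where the restriction on $\delta$ appears, and then need only a one-sided tail per pair for $(g^\top v_{ij})^2$, exploiting the exact law $g^\top v_{ij}\sim\mathcal N(0,\Vert v_{ij}\Vert^2)$ rather than treating a weighted sum of $\chi^2_1$ variables through generic machinery. This is cleaner and avoids the somewhat delicate distributional identification of $(\theta^\top z)^2$ with $X/Y$ that the paper relies on. One point you should make explicit when writing it up: your per-pair tail $2\exp(-s/(2R^2))$ is exponential in $s$, not Gaussian in the deviation $\tilde t = 2s/d$, so matching the $\sqrt{\log}$ form of the claim requires $\log(2n^2/\delta)\lesssim d$; this is harmless because whenever $\sqrt{128\log(2n^2/\delta)/d}\ge 1$ the asserted inequality is trivially true by Cauchy--Schwarz ($\max_{i,j}(\theta^\top(x_i-x_j))^2\le R^2$), and the paper's own proof needs the analogous restriction $\tilde t\le 2$ for the same reason. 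Your alternative via L\'evy concentration applied to the $2R^2$-Lipschitz map $\theta\mapsto(\theta^\top v)^2$ would also work, at the cost of relating median to mean.
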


Finally, we note that an advantage of the Sinkhorn divergence over the Wasserstein distance is that the former is always differentiable \cite[Proposition 2]{feydy19} while the latter is not. This property, which is crucial in differential programming pipelines, suggests that SSD is potentially better-behaved than SW in tasks such as generative modeling. We leave its analysis to future work.

\vspace{-5pt}
\section{Experiments}
\vspace{-5pt}

We present the numerical experiments that we conducted to illustrate our theoretical findings, and we provide the code to reproduce them\footnote{See \url{https://github.com/kimiandj/sliced_div}}.

\begin{wrapfigure}{r}{0.43\textwidth}
  \vspace{-2pt} 
  \centering
    \includegraphics[width=\linewidth]{figures/motivation_1000obs_dim=10to10/results.pdf}
  \vspace{-16pt}
  \caption{%
  (Sliced-)Divergences between two sets of 1000 samples in $\rset^{10}$ \iid~from $\calN({\bf0}, 4\bfI)$ and $\calN({\bf0}, \sigma^2\bfI)$, for varying $\sigma^2$. %
  }
  \label{fig:lowerbound}
  \vspace{-6pt}
\end{wrapfigure}

We first verify that IPMs and Sinkhorn divergences are bounded below by their sliced versions, as demonstrated in \Cref{thm:weak_topo,thm:weak_topo_sink} respectively. Consider $n = 1000$ observations \iid~from $\calN({\bf0}, \sigma_\star^2 \bfI_d)$, with $\sigma_\star^2 = 4$. We generate $n$ \iid~samples from $\calN({\bf0}, \sigma^2 \bfI_d)$ for $\sigma^2$ varying between 0.1 and 9. We compute MMD between the empirical distributions of the observations and the generated datasets, as well as the Wasserstein distance of order 1 and normalized Sinkhorn divergence \eqref{eq:normalized_ss} with order 1 and $\veps=1$. We used a Gaussian kernel for MMD combined with the heuristic proposed in \cite{Gretton2012}, which sets the kernel width to be the median distance over the aggregated data, and we approximated this discrepancy with the biased estimator in \cite[Equation 5]{Gretton2012}. Then, we compute Sliced-Wasserstein, Sliced-Sinkhorn and Sliced-MMD. Each of these sliced divergences was approximated with a Monte Carlo estimate based on 50 randomly picked projections. \Cref{fig:lowerbound} reports the divergences against $\sigma^2$ for $d = 10$. Results are averaged over 10 runs, and for clarity reasons, we do not plot the error bands (based on the 10th-90th percentiles) as these were very tight. The curves for Wasserstein, Sinkhorn and MMD are above their respective sliced version's ones, as predicted by our theoretical bounds. This figure also illustrates the statistical benefits induced by slicing: all sliced divergences attain their minimum at $\sigma_\star$, while Wasserstein and Sinkhorn fail at this. This observation is in line with \cite{bellemare2017cramer}, where the authors showed that both the minimum point and gradients of the Wasserstein distance have a bias, which can be prominent unless $n$ is large enough. MMD performs well in this task, and this can be explained by its dimension-free sample complexity. In that sense, Sliced-MMD acts more as a sanity-check of our theory, rather than a practical proposal.

\begin{figure}[t!]
\centering
  \subfigure[Projection complexity]{
    \includegraphics[width=.43\linewidth]{figures/complexity/sw/sw2_complexity_n=500_nruns=100.pdf}
    \label{fig:proj_cpx_sw}
  }
  \subfigure[Sample complexity]{
    \includegraphics[width=.43\linewidth]{figures/complexity/sw/sw2_complexity_nproj=100_nruns=100.pdf}
    \label{fig:sample_cpx_sw}
  }
  \caption{(Sliced-)Wasserstein distances of order 2 between two sets of $n$ samples generated from $\calN({\bf0}, \bfI_d)$ for different $d$, on log-log scale. Results are averaged over 100 runs, and the shaded areas correspond to the 10th-90th percentiles. %
  }
\end{figure}

The next experiments aim at illustrating our statistical properties %
We first analyze the convergence rate of the Monte Carlo estimates (\Cref{thm:projection_complexity}) in a synthetical setting. We consider two sets of $500$ samples \iid~from the $d$-dimensional Gaussian distribution $\calN({\bf0}, \bfI_d)$, and we approximate $\swassersteinD[2]$ between the empirical distributions with a Monte Carlo scheme that uses a high number of projections $L_\star = 10~000$. Then, we compute the Monte Carlo estimate $\hatswassersteinD[2,L]$ obtained with $L < L_\star$ random projections. \Cref{fig:proj_cpx_sw} shows the absolute difference of $\hatswassersteinD[2, L]$ and $\hatswassersteinD[2, L_\star]$ against $L$, for different values of dimension $d$. We observe that the Monte Carlo error indeed shrinks to zero when we increase the number of projections, with a convergence rate of order $L^{-1/2}$. 

Then, we illustrate the sample complexity of Sliced-Wasserstein and Sliced-Sinkhorn (\Cref{thm:smpl_cpx_sw} and \Cref{thm:smpl_cpx_ssink}, respectively). We consider two sets of $n$ samples \iid~from $\calN({\bf0}, \bfI_d)$, and we compute $\wassersteinD[2]$ and $\sinkD[2,\veps]$ and their sliced versions approximated with 100 random projections. %
We analyze the convergence rate for different $n$ and dimensions $d$. We also study the influence of the regularization parameter $\veps$ for Sinkhorn divergences. %
\Cref{fig:sample_cpx_sw} reports the Wasserstein and Sliced-Wasserstein distances vs. $n$, for $d$ between 2 and 100. We observe that, as opposed to $\wassersteinD[2]$, the convergence rate of $\swassersteinD[2]$ does not depend on the dimension, therefore $\swassersteinD[2]$ converges faster than $\wassersteinD[2]$ when the dimension increases. \Cref{subfig:sample_cpx_ssink_a,subfig:sample_cpx_ssink_b} show Sinkhorn and Sliced-Sinkhorn divergences vs. $n$, and respectively study the influence of $d$ and $\veps$ on the convergence rate. As predicted by the theory, Sliced-Sinkhorn offers more `robustness' than Sinkhorn: its convergence rate does not depend on the dimension nor on the regularization coefficient. To illustrate \Cref{prop:contract}, we plot on \Cref{subfig:sample_cpx_ssink_c} the number of iterations when the convergence of Sinkhorn's algorithm is reached, as a function of $d$. For Sliced-Sinkhorn, this number is an average over the number of projections used in the approximation. Our experiment emphasizes the computational advantages of Sliced-Sinkhorn, since its number of iterations remains the same with the increasing dimension, while it grows exponentially for Sinkhorn.

\begin{figure}[t!]
\centering
  \subfigure[Influence of dimension]{
    \includegraphics[width=.3\textwidth]{figures/complexity/sink/sink_complexity_eps=1_nproj=100_nruns=100.pdf}
    \label{subfig:sample_cpx_ssink_a}
  } 
  \subfigure[Influence of regularization]{
    \includegraphics[width=.3\textwidth]{figures/complexity/sink/sink_complexity_dim=100_nproj=100_nruns=100.pdf}
    \label{subfig:sample_cpx_ssink_b}
  }
  \subfigure[Study of number of iterations]{
    \includegraphics[width=.3\textwidth]{figures/complexity/sink/sink_niter_eps=1_nproj=100_nruns=100.pdf}
    \label{subfig:sample_cpx_ssink_c}
  }
  \vspace{-8pt}
  \caption{(Sliced-)Sinkhorn divergences between two sets of $n$ samples generated from $\calN({\bf0}, \bfI_d)$ for different values of $n$, dimension $d$, and regularization coefficient $\veps$. Results are averaged over 100 runs, and the shaded areas correspond to the 10th-90th percentiles. All plots have a log-log scale.}
  \label{fig:sample_cpx_ssink}
  \vspace{-15pt}
\end{figure}

Our last experiment operates on real data and is motivated by the two-sample testing problem \cite{Gretton2012}, whose goal is to determine whether two sets of samples were generated from the same distribution or not. This is useful for various applications, including data integration, where we wish to understand that two datasets were drawn from the same distribution in order to merge them. 
In this context, we run the following experiment: for different values of $n$, we randomly select two subsets of $n$ samples from the same dataset, and we compute the Wasserstein and Sliced-Wasserstein distances (of order 2) between the empirical distributions, as well as the Sinkhorn and Sliced-Sinkhorn divergences ($\veps=1$). The sliced divergences are approximated with 10 random projections. We use the MNIST \cite{lecun-mnisthandwrittendigit-2010} and CIFAR-10 \cite[Chapter 3]{Krizhevsky09learningmultiple} datasets, %
and we report the divergences %
against $n$, and the mean execution time for the computation of Sinkhorn and Sliced-Sinkhorn, on \Cref{fig:realexp}. The sliced divergences perform the best, in the sense that they need less samples to converge to zero. Besides, Sliced-Sinkhorn is faster than Sinkhorn in terms of execution time (which was expected, given our discussion above \Cref{prop:contract}), and the difference is even more visible for a high number of samples. For example, for $n = 2500$ on MNIST or $n = 1000$ on CIFAR-10, Sliced-Sinkhorn is almost 130 times faster than for Sinkhorn on average. 

\begin{figure*}[t!]
\centering
  \subfigure[Results on MNIST]
  {
    \includegraphics[width=.23\textwidth]{figures/mnist_exp/mnist_exp_eps=1_nproj=10_nruns=10.pdf}
    \includegraphics[width=.23\textwidth]{figures/mnist_exp/mnist_exp_timeplot_eps=1_nproj=10_nruns=10.pdf}
    \label{subfig:mnist}
  }
  \subfigure[Results on CIFAR-10]
  {
    \includegraphics[width=.23\textwidth]{figures/cifar_exp/cifar_exp_eps=1_nproj=10_nruns=10.pdf}
    \includegraphics[width=.23\textwidth]{figures/cifar_exp/cifar_exp_timeplot_eps=1_nproj=10_nruns=10.pdf}
    \label{subfig:cifar}
  }
  \vspace{-8pt}
  \caption{(Sliced-)Wasserstein and (Sliced-)Sinkhorn ($\veps=1$) between two random subsets of $n$ samples of real datasets, for different values of $n$. Results are averaged over 10 runs, and the shaded areas correspond to the 10th-90th percentiles. All plots have a log-log scale.}
  \vspace{-5pt}
  \label{fig:realexp}
\end{figure*}

\vspace{-5pt}
\section{Conclusion}
\vspace{-5pt}

In this study, we considered sliced probability divergences, which have been increasingly popular in machine learning applications. %
We derived theoretical results about their induced topology as well as their statistical efficiency in terms of number of samples and projections, and we empirically illustrated our findings on different setups. Specifically, we proved that the preserved topology and dimension-free sample complexity are intrinsic to slicing. Since this was unclear in the previous literature, which combined slicing with a specific distance, our unified treatment of these results brings insight to the properties of particular instances used in practice.
The gains in statistical efficiency could be explained by an ability of slicing to overlook irrelevant characteristics of the distributions. An important question for future work is then to understand precisely what geometrical features are well preserved by the slicing operation. %
Another interesting future direction is to extend our analysis to the recently proposed `max-sliced' \cite{deshpande2019max} and `generalized' sliced divergences \cite{Kolouri2019}.

\section*{Broader Impact}

This paper is focused on the theoretical properties of sliced probability divergences, which have become increasingly popular in recent years due to their applications on implicit generative modeling. Our analysis uncovers the topological and statistical consequences of the slicing operation, and aims at providing answers to the question \emph{``When and why do sliced divergences perform well in practice?''}. We believe that our theory would provide useful guidelines for practitioners working in this field, in terms of designing new sliced divergences as well as obtaining a better understanding on the existing sliced divergences. %
Our contributions are mainly theoretical, and we believe these will not pose any negative or positive ethical or societal consequence in the broad sense. 

\begin{ack}
This work is partly supported by the French National Research Agency (ANR) as a part of the FBIMATRIX project (ANR-16-CE23-0014) and by the industrial chair Machine Learning for Big Data from T\'el\'ecom Paris. Alain Durmus acknowledges support from the Polish National Science Center grant (NCN UMO-2018/31/B/ST1/00253). The authors are grateful to Christos Tsirigotis for the fruitful discussion which motivated this work, and to an anonymous reviewer who gave an argument to improve the initial bound in \Cref{prop:contract}. 
\end{ack}

\small
\bibliography{main}
\bibliographystyle{unsrt}

\newpage
\appendix  

\setcounter{equation}{0}
\setcounter{figure}{0}
\setcounter{table}{0}
\setcounter{definition}{0}
\setcounter{theorem}{0}
\setcounter{lemma}{0}
\setcounter{proposition}{0}
\setcounter{corollary}{0}
\renewcommand{\theequation}{S\arabic{equation}}
\renewcommand{\thefigure}{S\arabic{figure}}
\renewcommand{\thetable}{S\arabic{table}}
\renewcommand{\thedefinition}{S\arabic{definition}}
\renewcommand{\thetheorem}{S\arabic{theorem}}
\renewcommand{\thelemma}{S\arabic{lemma}}
\renewcommand{\theproposition}{S\arabic{proposition}}
\renewcommand{\thecorollary}{S\arabic{corollary}}

\section{Postponed proofs for \Cref{sec:slicedmetrics}} \label{supp:sec:proofs_sliceddiv}

\subsection{Proof of \Cref{thm:metric}}
\begin{proof}[Proof of \Cref{thm:metric}]
\ref{thm:metric_i}  The fact that $\sdelta[p]$ is non-negative (or symmetric) if $\bDelta$ is, immediately follows from the definition of $\sdelta[p]$ \eqref{eq:spd}.

\ref{thm:metric_ii} Assume that $\bDelta$ satisfies the identity of indiscernibles, \ie~for $\mu',\nu' \in \calP(\rset)$, $\bDelta(\mu', \nu') = 0$ if and only if $\mu' = \nu'$. For any $\mu \in \calP(\Rd)$ and $\ths \in \sphereD$, $\bDelta(\thsss \mu, \thsss \mu) = 0$, therefore $\sdelta[p](\mu, \mu) = 0$ by its definition \eqref{eq:spd}. Now, consider $\mu$, $\nu \in \calP(\Rd)$ such that $\sdelta[p](\mu,\nu)=0$. Then, by the definition of $\sdelta[p]$ \eqref{eq:spd}, we have $\bDelta ( \thsss \mu, \thsss \nu) = 0$ for $\unifS$-almost every ($\unifS$-a.e.) $\theta \in \sphere^{d-1}$, therefore $\thsss \mu = \thsss \nu$ for $\unifS$-a.e. $\theta \in \sphere^{d-1}$. Next, we use the same technique as in \cite[Proposition 5.1.2]{Bonnotte2013}: for any measure $\xi \in \calP(\rset^s)$ ($s \geq 1$), $\calF[\xi]$ denotes the Fourier transform of $\xi$ and is defined as, for any $w \in \rset^s$, 
\begin{equation}
  \calF[\xi](w) = \int_{\rset^s} e^{-\rmi \ps{w}{x}} \rmd\xi(x) \eqsp. 
\end{equation}

Then, by using \eqref{eq:ft_equal} and the property of pushforward measures, we have for any $t \in \rset$ and $\theta \in \sphereD$, 
\begin{equation}
  \calF[\thsss \mu](t) = \int_\rset e^{-\rmi tu} \rmd\thsss\mu(u) = \int_{\Rd} e^{-\rmi t\ps{\theta}{x}} \rmd\mu(x) = \calF[\mu](t\theta) \eqsp. \label{eq:ft_equal}
\end{equation}

Since for $\unifS$-a.e. $\theta \in \sphereD$, $\thsss \mu = \thsss \nu$ thus $\calF[\thsss \mu] = \calF[\thsss \nu]$, we obtain $\calF[\mu] = \calF[\nu]$. By the injectivity of the Fourier transform, we conclude that $\mu = \nu$.

\ref{thm:metric_iii} Suppose $\bDelta$ is a metric. Based on the previous results,  to show that $\sdelta[p]$ is a metric, all we need to prove here is that it verifies the triangle inequality.
Let  $\mu$, $\nu,\xi \in \calP(\Rd) $. Using that $\bDelta$ satisfies the triangle inequality and the Minkowski inequality in $\rml^p(\sphere^{d-1},\unifS)$, we get
  \begin{align}
  \sdelta[p](\mu, \nu) &= \left \{ \int_{\sphereD} \bDelta^p \big( \thsss \mu, \thsss \nu \big) \rmd \unifS(\theta) \right\}^{1/p} \nonumber \\
  &\leq \Big\{ \int_{\sphereD} \Big[ \bDelta \big( \thsss \mu, \thsss \xi \big) + \bDelta \big( \thsss \xi, \thsss \nu \big) \Big]^p \rmd \unifS(\theta)\Big\}^{1/p} \nonumber \\
  &\leq 
  \left\{ \int_{\sphereD}  \bDelta^p\big( \thsss \mu, \thsss \xi \big) \rmd \unifS(\theta) \right\}^{1/p} + \left\{ \int_{\sphereD} \bDelta^p \big( \thsss \xi, \thsss \nu \big) \rmd \unifS(\theta) \right\}^{1/p} \nonumber \\
  &\leq \sdelta[p](\mu,\xi) + \sdelta[p](\xi,\nu) \eqsp.
  \end{align}
\end{proof}

\subsection{Proof of \Cref{thm:weak_conv}} 

We start by proving \Cref{lem:cvg_sdiv_implies_wc} below, which extends \cite[Lemma S13]{2019arXiv190604516N} to the more general class of Sliced Probability Divergences. 

\begin{lemma}
\label{lem:cvg_sdiv_implies_wc}
  Consider $\sequencek{\mu_k}$ a sequence in $\calP(\Rd)$ satisfying $\lim_{k \rightarrow \infty} \sdelta[1](\mu_k, \mu) = 0$, with $\mu \in \calP(\Rd)$, and assume that the convergence in $\bDelta$ implies the weak convergence in $\calP(\rset)$. Then, there exists an increasing function $\phi : \nset \rightarrow \nset$ such that the subsequence $\sequencek{\mu_{\phi(k)}}$ converges weakly to $\mu$.  
\end{lemma}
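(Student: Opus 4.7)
The plan is to follow four steps: (i) pass to a subsequence giving $\unifS$-almost everywhere convergence of the integrand; (ii) invoke the hypothesis to obtain weak convergence of the one-dimensional pushforwards on a full-measure set of directions; (iii) lift the resulting one-dimensional tightness to tightness on $\Rd$; and (iv) identify the limit as $\mu$ via Fourier injectivity, as already employed in the proof of \Cref{thm:metric}.

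Since $\sdelta[1](\mu_k, \mu) = \int_{\sphereD} \bDelta(\thsss \mu_k, \thsss \mu) \rmd\unifS(\theta) \to 0$, a classical subsequence argument (any sequence converging in $\rml^1$ admits an a.e.\ convergent subsequence) provides an increasing $\phi : \nset \to \nset$ along which $\bDelta(\thsss \mu_{\phi(k)}, \thsss \mu) \to 0$ for every $\theta$ in some $A \subset \sphereD$ with $\unifS(A) = 1$. The hypothesis then gives weak convergence $\thsss \mu_{\phi(k)} \to \thsss \mu$ in $\calP(\rset)$ for each such $\theta$. In particular, as weakly convergent sequences of probability measures on $\rset$ are tight, for every $\theta \in A$ and every $\varepsilon > 0$ there exists $R_\theta > 0$ with $\mu_{\phi(k)}(\{x \in \Rd : |\ps{\theta}{x}| > R_\theta\}) \leq \varepsilon$ uniformly in $k$.

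Next, since $A$ has full $\unifS$-measure it is dense in $\sphereD$, so I can choose $\theta_1, \dots, \theta_d \in A$ forming a basis of $\Rd$. By equivalence of norms in finite dimension, there exists $C > 0$ with $\|x\| \leq C \max_i |\ps{\theta_i}{x}|$ for every $x \in \Rd$. Summing the one-dimensional tightness bounds yields $\mu_{\phi(k)}(\{x : \|x\| > CR\}) \leq d\varepsilon$ for $R \geq \max_i R_{\theta_i}$, uniformly in $k$, which establishes tightness of $\{\mu_{\phi(k)}\}_k$ on $\Rd$. Prokhorov's theorem then delivers a further subsequence $\mu_{\phi \circ \psi(k)}$ converging weakly to some $\mu' \in \calP(\Rd)$.

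Finally, since $\thss$ is continuous, weak convergence of $\mu_{\phi \circ \psi(k)}$ passes to $\thsss \mu_{\phi \circ \psi(k)} \to \thsss \mu'$ for every $\theta$, and uniqueness of weak limits combined with step~(ii) forces $\thsss \mu' = \thsss \mu$ for $\unifS$-a.e.\ $\theta$. The Fourier-injectivity argument used in the proof of \Cref{thm:metric} then yields $\mu' = \mu$. Since every further subsequence of $\{\mu_{\phi(k)}\}_k$ admits a weakly convergent subsubsequence with the same limit $\mu$, the full subsequence $\{\mu_{\phi(k)}\}_k$ converges weakly to $\mu$. The main obstacle I expect is step~(iii): lifting one-dimensional tightness that only holds for $\unifS$-almost every projection to joint tightness on $\Rd$, which I would handle by exploiting density of $A$ in $\sphereD$ to locate a basis of $\Rd$ among the good directions.
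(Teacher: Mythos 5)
Your proof is correct, but it takes a genuinely different route from the paper's. Both arguments share the first two steps: extract a subsequence $\phi$ along which $\bDelta(\thsss\mu_{\phi(k)},\thsss\mu)\to 0$ for $\unifS$-a.e.\ $\theta$, and invoke the hypothesis to get weak convergence of the one-dimensional pushforwards on a full-measure set $A$ of directions. From there the paper avoids tightness and Prokhorov entirely: it passes to characteristic functions via L\'evy's characterization, deduces that $\Phi_{\mu_{\phi(k)}}\to\Phi_\mu$ pointwise for Lebesgue-a.e.\ $z\in\Rd$, and then --- since a.e.\ convergence of characteristic functions does not directly feed into L\'evy's continuity theorem --- runs a mollification argument, testing against $f\ast g_\sigma$ for compactly supported continuous $f$, rewriting the integral in Fourier form, applying dominated convergence, and letting $\sigma\to0$. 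You instead restore compactness by hand: you lift the one-dimensional tightness (which holds for every $\theta\in A$) to tightness on $\Rd$ by locating a basis $\theta_1,\dots,\theta_d$ inside the dense full-measure set $A$ and using the equivalence of $\|\cdot\|$ with $x\mapsto\max_i|\ps{\theta_i}{x}|$, then apply Prokhorov and identify any weak limit point via the Fourier-injectivity device from \Cref{thm:metric}, concluding by the standard subsequence-of-subsequences argument (valid since weak convergence on $\calP(\Rd)$ is metrizable). Your tightness-lifting step --- the one you flagged as the main obstacle --- is sound as written, and it is exactly the ingredient that lets you trade the paper's explicit Fourier-inversion computation for off-the-shelf compactness machinery; the paper's version, by contrast, is longer but more self-contained, essentially reproving the needed fragment of L\'evy's continuity theorem under a.e.\ convergence of characteristic functions. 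Both proofs correctly confront the same subtlety, namely that the good set of directions is only of full $\unifS$-measure rather than all of $\sphereD$.
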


\begin{proof}
  We assume that $\lim_{k \rightarrow \infty} \sdelta[1](\mu_k, \mu) = 0$, \ie: 
  \begin{equation}
  \label{eq:weak_conv_1}
    \lim_{k \rightarrow \infty} \int_{\sphereD} \bDelta(\thsss \mu_k, \thsss \mu) \rmd \unifS(\theta) = 0
  \end{equation}
  By \cite[Theorem 2.2.5]{Bogachev2007}, \eqref{eq:weak_conv_1} implies that, there exists an increasing function $\phi : \nset \rightarrow \nset$ such that for $\unifS$-a.e. $\theta \in \sphereD$, $\lim_{k \rightarrow \infty} \bDelta(\thsss \mu_{\phi(k)}, \thsss \mu) = 0$. Since $\bDelta$ is assumed to imply weak convergence in $\calP(\rset)$, then, for $\unifS$-a.e. $\theta \in \sphereD$, $\sequencek{\thsss \mu_{\phi(k)}}$ converges weakly to $\thsss \mu$. By Lévy's characterization \cite[Theorem 4.3]{kallenberg:1997}, we have for $\unifS$-a.e. $\theta \in \sphereD$ and any $s \in \mathbb{R}$, 
  \begin{equation}
      \lim_{k \rightarrow \infty} \Phi_{\thsss \mu_{\phi(k)}}(s) = \Phi_{\thsss \mu}(s) \eqsp,
    \end{equation}
  where $\Phi_\nu$ is the characteristic function of $\nu \in \mcp(\rset^s)$ ($s \geq 1$) and is defined as: for any $v \in \rset^s$, $\Phi_\nu(v) = \int_{\mathbb{R}^s} \rme^{\rmi \langle v, w \rangle} \rmd\nu(w)$. Therefore, for Lebesgue ($\Leb$)-almost every $z \in \mathbb{R}^d$,
  \begin{equation} \label{eqn:cvg_characteristic}
    \lim_{k \rightarrow \infty} \Phi_{\mu_{\phi(k)}}(z) = \Phi_{\mu}(z)  \eqsp. 
  \end{equation}

  We now use \eqref{eqn:cvg_characteristic} to show that $\sequencek{\mu_{\phi(k)}}$ converges weakly to $\mu$. By \cite[Problem 1.11, Chapter 1]{Billingsley1999}, this boils down to proving that, for any $f: \rset^d \to \rset$ continuous with compact support, 
  \begin{equation}
    \label{eq:convo_0}
    \lim_{k \rightarrow \infty} \int_{\mathbb{R}^d} f(z) \rmd\mu_{\phi(k)}(z) = \int_{\mathbb{R}^d} f(z) \rmd\mu(z) \eqsp. 
  \end{equation}

  Consider $\sigma > 0$ and a continuous function $f : \rset^d \to \rset$ with compact support. We introduce the function $f_\sigma$ defined as: for any $x \in \mathbb{R}^d$,
  \begin{equation}
    f_\sigma(x) = (2\uppi \sigma^2)^{-d/2}  \int_{\mathbb{R}^d} f(x-z) \exp\parenthese{-\|z\|^2/{(2\sigma^2)}} \rmd z = f \ast g_\sigma(x) \eqsp,
  \end{equation}
  where $\ast$ denotes the convolution product, and $g_\sigma$ is the density of the $d$-dimensional Gaussian with zero mean and covariance matrix $\sigma^2 \bfI_d$. First, we prove that \eqref{eq:convo_0} holds with $f_{\sigma}$ in place of $f$. The characteristic function associated to a $d$-dimensional Gaussian random variable $G$ with zero mean and covariance matrix $(1/\sigma^2) \bfI_d$ is given by: for any $z \in \rset^d$, $\expe{\rme^{\rmi \ps{z}{G}}} = \rme^{- \norm[2]{z}/(2\sigma^2)}$. By plugging this in the definition of $f_\sigma$ and using Fubini's theorem, we obtain for any $k \in \nset$,
  \begin{align}
  \int_{\mathbb{R}^d} f_\sigma(z) \rmd\mu_{\phi(k)}(z) &= \int_{\mathbb{R}^d} \int_{\mathbb{R}^d} f(w) g_\sigma(z-w) \rmd w \rmd \mu_{\phi(k)}(z) \nonumber \\
  &= (2\uppi \sigma^2)^{-d/2} \int_{\mathbb{R}^d} \int_{\mathbb{R}^d} f(w) \int_{\mathbb{R}^d} \rme^{\rmi \langle z-w, x \rangle}  g_{1/\sigma}(x) \rmd x \rmd w \rmd \mu_{\phi(k)}(z) \nonumber \\
  &= (2\uppi \sigma^2)^{-d/2} \int_{\mathbb{R}^d} \int_{\mathbb{R}^d} f(w) \rme^{-\rmi\langle w,x \rangle } g_{1/\sigma}(x) \Phi_{\mu_{\phi(k)}}(x)  \rmd x \rmd w \nonumber \\
  &= (2\uppi \sigma^2)^{-d/2} \int_{\mathbb{R}^d}   \mathcal{F}[f](x) g_{1/\sigma}(x) \Phi_{\mu_{\phi(k)}}(x) \rmd x \eqsp, \label{eq:convo_1}
  \end{align}
  where $\mathcal{F}[f](x) = \int_{\rset^d} f(w) \rme^{- \rmi \ps{w}{x}} \rmd w$ is the Fourier transform of $f$. Since the support of $f$ is assumed to be compact, $\mathcal{F}[f]$ exists and is bounded by $\int_{\mathbb{R}^d} \abs{f(w)} \rmd w < \plusinfty$, therefore, for any $k \in \mathbb{N}$ and $x \in \mathbb{R}^d$,
  \begin{equation}
    \left| \mathcal{F}[f](x) g_{1/\sigma}(x) \Phi_{\mu_{\phi(k)}}(x) \right| \leq  g_{1/\sigma}(x) \int_{\mathbb{R}^d} |f(w)| \rmd w \eqsp.
  \end{equation}

  We can prove with similar techniques that \eqref{eq:convo_1} holds with $\mu$ in place of $\mu_{\phi(k)}$, \ie:
  \begin{equation}
    \label{eq:convo_2}
    \int_{\mathbb{R}^d} f_\sigma(z) \rmd \mu(z) =  (2\uppi \sigma^2)^{-d/2} \int_{\mathbb{R}^d}  \mathcal{F}[f](x) g_{1/\sigma}(x) \Phi_{\mu}(x) \rmd x \eqsp.
  \end{equation}

  Using \eqref{eqn:cvg_characteristic}, \eqref{eq:convo_1}, \eqref{eq:convo_2} and Lebesgue's Dominated Convergence Theorem, we obtain:
  \begin{align} 
    \lim_{k \rightarrow \infty} (2\uppi \sigma^2)^{-d/2} \int_{\mathbb{R}^d} \mathcal{F}[f](x) g_{1/\sigma}(x) \Phi_{\mu_{\phi(k)}}(x) \rmd x &= (2\uppi \sigma^2)^{-d/2} \int_{\mathbb{R}^d} \mathcal{F}[f](x) g_{1/\sigma}(x) \Phi_\mu(x) \rmd x \eqsp, \nonumber \\
    \ie,~\lim_{k \rightarrow \infty} \int_{\mathbb{R}^d} f_{\sigma}(z) \rmd \mu_{\phi(k)}(z) &= \int_{\mathbb{R}^d} f_\sigma(z) \rmd\mu(z) \eqsp. \label{eqn:cvg_l1_f_sigma} 
  \end{align}

  We can now prove \eqref{eq:convo_0}: for any $\sigma > 0$, 
  \begin{align}
    &\left| \int_{\mathbb{R}^d} f(z) \rmd \mu_{\phi(k)}(z) - \int_{\mathbb{R}^d} f(z) \rmd \mu(z) \right| \\
    &\leq 2\sup_{z \in \mathbb{R}^d} \left| f(z) - f_\sigma(z) \right| + \left| \int_{\mathbb{R}^d} f_\sigma(z) \rmd \mu_{\phi(k)}(z) - \int_{\mathbb{R}^d} f_{\sigma}(z) \rmd \mu(z) \right| \eqsp. 
  \end{align}

  By \eqref{eqn:cvg_l1_f_sigma}, we deduce that for any $\sigma >0$,
  \begin{equation}
    \limsup_{k \to \plusinfty} \left| \int_{\Rd} f(z) \rmd \mu_{\phi(k)}(z) - \int_{\mathbb{R}^d} f(z) \rmd \mu(z) \right| \leq 2\sup_{z \in \mathbb{R}^d} \left| f(z) - f_\sigma(z) \right| \eqsp,
  \end{equation}
  and since $\lim_{\sigma \rightarrow 0} \sup_{z \in \mathbb{R}^d} | f(z) - f_\sigma(z) | = 0$ \cite[Theorem 8.14-b]{folland:1999}, we conclude that $\sequencek{\mu_{\phi(k)}}$ converges weakly to $\mu$.

\end{proof}

We can now prove \Cref{thm:weak_conv}.

\begin{proof}[Proof of \Cref{thm:weak_conv}]

  Let $p \in [1, \infty)$ and $\sequencek{\mu_k}$ be a sequence of probability measures in $\calP(\Rd)$. 

  First, suppose $\sequencek{\mu_k}$ converges weakly to $\mu \in \calP(\Rd)$. By the continuous mapping theorem, since for any $\theta \in \sphereD$, $\thss$ is a bounded linear form thus continuous, then $\sequencek{\thsss \mu_k}$ converges weakly to $\thsss \mu$. Therefore, according to our assumption on $\bDelta$, for any $\theta \in \sphereD$,
  \begin{equation}
  \label{eq:equiv_weakconv_0}
    \lim_{k \rightarrow \infty} \bDelta(\thsss \mu_k, \thsss \mu) = 0 \eqsp .
  \end{equation}

  Besides, $\bDelta$ is assumed to be non-negative and bounded. Hence, there exists $M > 0$ such that, for any $k \in \Nset$,
  \begin{equation}
  \label{eq:equiv_weakconv_1}
    \bDelta^p(\thsss \mu_k, \thsss \mu) \leq M \eqsp .
  \end{equation}

  Using \eqref{eq:equiv_weakconv_0}, \eqref{eq:equiv_weakconv_1} and the bounded convergence theorem, we obtain
  \begin{align}
  \label{eq:equiv_weakconv_2}
    \lim_{k \rightarrow \infty} \sdelta[p]^p(\mu_k, \mu) = \lim_{k \rightarrow \infty} \int_{\sphereD} \bDelta^p(\thsss \mu_k, \thsss \mu) \rmd \unifS(\theta) = \int_{\sphereD} 0^p~\rmd \unifS(\theta) = 0 \eqsp .
  \end{align}

  Since the mapping $t \mapsto t^{1/p}$ is continuous on $\rset+$ (and can be applied to $\sdelta[p]^p$, which is non-negative by the non-negativity of $\bDelta$ and \Cref{thm:metric}), then \eqref{eq:equiv_weakconv_2} implies $\lim_{k \rightarrow \infty} \sdelta[p](\mu_k, \mu) = 0$.

  Now, let us prove the other implication, \ie~$\lim_{k \rightarrow \infty} \sdelta[p]\big( \mu_k, \mu \big) = 0$ implies the weak convergence of $\sequencek{\mu_k}$ to $\mu$, given the assumptions on $\bDelta$. This result is a generalization of \cite[Theorem 1]{2019arXiv190604516N}, and is proved analogously, using \Cref{lem:cvg_sdiv_implies_wc}: consider $\sequencek{\mu_k}$ and $\mu$ in $\calP(\Rd)$ such that
  \begin{equation}
    \label{eqn:assum_sdiv_lim_zero}
    \lim_{k \rightarrow \infty} \sdelta[p](\mu_k, \mu) = 0 \eqsp,   
  \end{equation}
  and suppose $(\mu_k)_{k \in \nset}$ does not converge weakly to $\mu$. Therefore, $\lim_{k \rightarrow \infty} \dist_{\mcp}(\mu_k, \mu) \neq 0$, where $\dist_{\mcp}$ is the L\'evy-Prokhorov metric, \ie~there exists $\epsilon > 0$ and a subsequence $\sequencek{\mu_{\psi(k)}}$ with $\psi : \bbN \rightarrow \bbN$ increasing, such that for any $k \in\nset$,
    \begin{equation} 
      \label{eqn:lim_not_zero}
    \dist_{\mcp}(\mu_{\psi(k)}, \mu) > \epsilon \eqsp.
    \end{equation}
  
  On the other hand, an application of H\"older's inequality on $\sphereD$ gives for any $\mu, \nu$ in $\calP(\Rd)$,
  \begin{equation}
  \label{eq:weak_conv_2}
    \sdelta[1](\mu, \nu) \leq \sdelta[p](\mu, \nu) \eqsp . 
  \end{equation}

  Then, by \eqref{eqn:assum_sdiv_lim_zero}, $\lim_{k \rightarrow \infty} \sdelta[1](\mu_{\psi(k)}, \mu) = 0$. Since we assume the convergence in $\bDelta$ implies the weak convergence in $\calP(\rset)$, \Cref{lem:cvg_sdiv_implies_wc} gives us: there exists a subsequence $\sequencek{\mu_{\phi(\psi(k))}}$ with $\phi : \bbN \rightarrow \bbN$ increasing such that $\sequencek{\mu_{\phi(\psi(k))}}$ converges weakly to $\mu$. This is equivalent to $\lim_{k \rightarrow \infty} \dist_{\mcp}(\mu_{\phi(\psi(k))}, \mu) = 0$, which contradicts \eqref{eqn:lim_not_zero}. We conclude that \eqref{eqn:assum_sdiv_lim_zero} implies the weak convergence of $\sequencek{\mu_k}$ to $\mu$.

\end{proof}

\subsection{Proof of \Cref{thm:weak_topo}} \label{supp:subsec:proof_weak_conv}

\begin{proof}[Proof of \Cref{thm:weak_topo}]

Let $p \in [1, \infty)$ and $\mu, \nu \in \calP(\Rd)$.
\begin{align}
  (\sIpm[\tF, p])^p(\mu, \nu) &= \int_{\sphereD} \bgamma_{\tF}^p(\thsss \mu, \thsss \nu) \rmd \unifS(\theta) \\
  &= \int_{\sphereD} \left\{ \sup_{\tf \in \tF} \left| \int_{\rset} \tf(t)~\rmd (\thsss \mu - \thsss \nu)(t) \right|  \right\}^p \rmd \unifS(\theta) \\
  &= \int_{\sphereD} \left| \int_{\rset} \tf^*(t) \rmd (\thsss \mu - \thsss \nu) (t) \right|^p \rmd \unifS(\theta) \\
  &= \int_{\sphereD} \left| \int_{\Rd} \tf^* \big( \thss(x) \big) \rmd (\mu - \nu) (x) \right|^p \rmd \unifS(\theta) \eqsp, \label{eq:weaktopo_1}
\end{align}
with $\tf^* = \argmax_{\tf \in \tF} \left| \int_{\rset} \tf(t) \rmd \thsss \mu(t) - \int_{\rset} \tf(t) \rmd \thsss \nu(t) \right|$, which is assumed to exist. Note that \eqref{eq:weaktopo_1} results from applying the property of pushforward measures. 

By definition of $\msf$, for any $\theta \in \sphereD$, there exists $f_\theta^* \in \msf$ such that $f_\theta^* = \tf^* \circ \thss$. Therefore, we obtain
\begin{align}
  (\sIpm[\msf, p])^p(\mu, \nu) &= \int_{\sphereD} \left| \int_{\Rd} f_\theta^*(x) \rmd (\mu - \nu)(x) \right|^p \rmd \unifS(\theta) \\
  &\leq \int_{\sphereD} \left\{ \sup_{f \in \msf} \left| \int_{\Rd} f(x) \rmd (\mu - \nu)(x) \right| \right\}^p \rmd \unifS(\theta) \\
  &= \bgamma_{\msf}^p(\mu, \nu)  \int_{\sphereD} \rmd \unifS(\theta) = \bgamma_{\msf}^p(\mu, \nu) \eqsp ,
\end{align}
which completes the proof.

\end{proof}

Informally, the condition on the function classes in \Cref{thm:weak_topo} requires that $\msf$ and $\tF$ should be linked to each other in the way that $\msf$ should be large enough to contain the composition of \emph{all} elements of $\tF$ with \emph{all} possible linear forms $\thss$ for $\ths \in \sphereD$. Let us illustrate this condition by considering the Wasserstein distance of order 1. In this case, $\msf$ is the set of $1$-Lipschitz functions from $\rset^d$ to $\rset$, and $\tF$ is the set of $1$-Lipschitz functions from $\rset$ to $\rset$. Then, the condition on $\msf$ boils down to showing that the composition of any $\tf \in \tF$ with any linear projection $\thss$ results in a $1$-Lipschitz function in $\rset^d$, which is simply true since $\tf$ is $1$-Lipschitz and $\|\theta\|=1$ for all $\theta \in \sphereD$. 

In the next three corollaries, we formally prove that \Cref{thm:weak_topo} holds for the Wasserstein distance of order 1 $\wassersteinD[1]$, total variation distance $\TV$ and maximum mean discrepancy $\MMD[]$. We denote by $\swassersteinD[1]$, $\sTV[p]$ and $\sMMD[p]$ the respective sliced versions of these IPMs with order $p \in [1, \infty)$.

\begin{corollary} \label{cor:weak_topo_w1}
  Let $p \in [1, \infty)$. For any $\mu, \nu \in \calP_1(\Rd)$,\; $\swassersteinD[1](\mu, \nu) \leq \wassersteinD[1](\mu, \nu)$.
\end{corollary}

\begin{proof}
  Choose $\tF = \setLigne{\tf : \rset \rightarrow \rset}{\| \tf \|_{\Lip} \leq 1}$, where $\| \tf \|_{\Lip} = \sup_{x, y \in \Rd, x \neq y} \big\{\fracb{\big|\tf(x) - \tf(y)\big|}{\norm{x-y}} \big\}$. Let $f : \Rd \to \rset$ such that $f = \tf \circ \thss$ with $\tf \in \tF, \ths \in \sphereD$. Then, by using the Cauchy-Schwarz inequality and the definition of $\tF$, we have for any $x,y \in \Rd$,
  \begin{equation}
    \abs{f(x) - f(y)} = \big| \tf \big( \thss(x) \big) - \tf \big( \thss(y) \big) \big| \leq \big| \ps{\ths}{x-y} \big| \leq \| \thss \|~\| x-y \| \leq \| x-y \| \eqsp .
  \end{equation}

  Therefore, $f \in \msf = \setLigne{f : \Rd \rightarrow \rset}{\| f \|_{\Lip} \leq 1}$. \Cref{cor:weak_topo_w1} follows from the application of \Cref{thm:weak_topo} along with the definition of $\wassersteinD[1]$.

\end{proof}

Note that \Cref{cor:weak_topo_w1} is not a new result: the fact that $\swassersteinD[p]$ is bounded above by $\wassersteinD[p]$ for $p \in [1, \infty)$ was established in \cite[Proposition 5.1.3]{Bonnotte2013}. While their result is proved using the primal formulation of the OT problem, we used the dual formulation available for $p = 1$ to illustrate the applicability of \Cref{thm:weak_topo}. Our result is thus consistent with the existing results in the literature.

\begin{corollary}
  Let $p \in [1, \infty)$. For any $\mu, \nu \in \calP(\Rd)$,
  \begin{equation}
    \sTV[p](\mu, \nu) \leq \TV(\mu, \nu) \eqsp .
  \end{equation}
\end{corollary}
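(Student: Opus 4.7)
The plan is to invoke \Cref{thm:weak_topo} with the function classes corresponding to the total variation distance in one and $d$ dimensions. Specifically, I would set $\tF = \set{\tf : \rset \rightarrow \rset}{\| \tf \|_\infty \leq 1}$ and $\msf = \set{f : \Rd \rightarrow \rset}{\| f \|_\infty \leq 1}$, so that $\gamma_{\tF} = \TV$ on $\calP(\rset)$ and $\gamma_{\msf} = \TV$ on $\calP(\Rd)$, and correspondingly $\sIpm[\tF, p] = \sTV[p]$.

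The only nontrivial check is the inclusion hypothesis of \Cref{thm:weak_topo}, namely that $\msf$ contains every composition $\tf \circ \thss$ with $\tf \in \tF$ and $\theta \in \sphereD$. This is immediate: for any such $\tf$ and $\theta$, and for any $x \in \Rd$, one has $|\tf(\thss(x))| \leq \| \tf \|_\infty \leq 1$, so $\tf \circ \thss$ belongs to $\msf$. In other words, composition with a linear form $\thss$ preserves the uniform bound, which is exactly what the TV function class requires; unlike the Lipschitz case, no use of $\| \theta \| = 1$ is even needed here.

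Having verified the hypothesis, \Cref{thm:weak_topo} directly yields $\sTV[p](\mu, \nu) = \sIpm[\tF, p](\mu, \nu) \leq \gamma_{\msf}(\mu, \nu) = \TV(\mu, \nu)$ for every $\mu, \nu \in \calP(\Rd)$ and every $p \in [1, \infty)$. There is no genuine obstacle here; the work is entirely in identifying the correct function classes and observing that they fit the abstract framework of \Cref{thm:weak_topo}. One minor caveat worth a single sentence in the write-up is that the supremum in the definition of $\gamma_{\tF}$ for TV is attained (as noted in the preliminaries via \cite{douc:moulines:priouret:soulier:2018}), so the argument in the proof of \Cref{thm:weak_topo} which selects a maximizer $\tf^\star$ applies verbatim.
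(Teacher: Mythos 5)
Your proposal is correct and matches the paper's own proof essentially verbatim: both choose $\tF$ and $\msf$ as the unit balls for the sup norm on $\rset$ and $\Rd$ respectively, verify the inclusion $\tf \circ \thss \in \msf$ via $\|\tf\circ\thss\|_\infty \le \|\tf\|_\infty \le 1$, and conclude by \Cref{thm:weak_topo}. Your added remark about the supremum being attained is a reasonable (and accurate) pointer to a hypothesis the paper handles in its preliminaries.
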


\begin{proof}
  Choose $\tF = \big\{\tf : \rset \rightarrow \rset, \; \| \tf \|_\infty \leq 1 \big\}$, and let $f : \Rd \to \rset$ such that $f = \tf \circ \thss$ with $\tf \in \tF, \ths \in \sphereD$. Then,
  \begin{equation}
    \norminf{f} = \| \tf \circ \thss \|_{\infty} = \sup_{x \in \Rd} \big| \tf \big( \thss(x) \big) \big| \leq \sup_{t \in \rset} \big| \tf(t) \big| = \| \tf \|_{\infty} \leq 1 \eqsp,
  \end{equation}
  hence, $f \in \msf = \set{f : \rset^d \rightarrow \rset}{\norminf{f} \leq 1}$. We obtain the final result by using \Cref{thm:weak_topo} and the definition of TV.

\end{proof}

\begin{corollary}
  Let $\tF \subset \mbm_b(\rset)$ be the unit ball of the RKHS with reproducing kernel $\tilde{k}$, and $k$ be the positive definite kernel such that for any $x_i, x_j \in \Rd$, 
  \begin{align}
    k(x_i, x_j) = \int_{\sphereD} \tilde{k}\big( \thss(x_i), \thss(x_j) \big) \rmd \unifS(\ths) \eqsp . 
  \end{align}
  Define $\msf \subset \mbm_b(\rset^d)$ as the unit ball of the RKHS whose reproducing kernel $\hat{k}$ satisfies $k - \hat{k}$ is positive definite. Then, for any $p \in [1, \infty)$ and $\mu, \nu \in \calP(\Rd)$,
  \begin{equation}
    \sMMD[p](\mu, \nu ; \tF) \leq \MMD[](\mu, \nu ; \msf) \eqsp ,
  \end{equation}
  where $\MMD[](\cdot, \cdot~;~\msf')$ and $\sMMD[p](\cdot, \cdot~;~\msf')$  respectively denote the MMD and the Sliced-MMD of order $p$ in the RKHS whose unit ball is $\msf'$.

  In particular, this property holds for
  \vspace{-2mm}
  \begin{enumerate}[label=(\roman*), itemsep=0mm]
    \item Linear kernels: $\tilde{k}(t_i, t_j) = t_i t_j$ for $t_i, t_j \in \rset$, and $\hat{k}(x_i, x_j) = x_i^\top x_j / d'$ for $x_i, x_j \in \rset$ and $d' \geq d$.
    \item Radial basis function (RBF) kernels: let $h \geq 0$, $\tilde{k}(t_i,t_j) = e^{-|t_i-t_j|^2/h}$ for $t_i, t_j \in \rset$, and $\hat{k}(x_i,x_j) = e^{- \|x_i-x_j\|^2/h}$ for $x_i, x_j \in \Rd$.
  \end{enumerate}
\end{corollary}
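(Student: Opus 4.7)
The plan is to derive this corollary as a direct application of \Cref{thm:weak_topo} with $\tF$ and $\msf$ in the roles of the base and ambient function classes. Since MMD is precisely the IPM associated with an RKHS unit ball, \Cref{thm:weak_topo} will yield $\sMMD[p](\mu, \nu; \tF) \leq \MMD[](\mu, \nu; \msf)$ as soon as one verifies the inclusion condition $\msf \supset \set{\tf \circ \thss}{\tf \in \tF,\ \theta \in \sphereD}$. The proof thus reduces to showing that, for every $\tf \in \mcH_{\tilde{k}}$ with $\|\tf\|_{\mcH_{\tilde{k}}} \leq 1$ and every $\theta \in \sphereD$, the composition $\tf \circ \thss$ belongs to the unit ball of $\mcH_{\hat{k}}$.

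To prove this inclusion I would combine a pullback-RKHS identity with a kernel-domination argument. First, for each $\theta \in \sphereD$, introduce the pulled-back kernel $k_\theta(x, y) := \tilde{k}(\thss(x), \thss(y))$ on $\Rd$; the map $\tf \mapsto \tf \circ \thss$ is a norm-preserving isomorphism from $\mcH_{\tilde{k}}$ onto $\mcH_{k_\theta}$, so $\|\tf \circ \thss\|_{\mcH_{k_\theta}} = \|\tf\|_{\mcH_{\tilde{k}}} \leq 1$. Next, the averaging formula $k = \int_{\sphereD} k_\theta \,\rmd\unifS(\theta)$ exhibits $k$ as a mixture of the $k_\theta$, and the standard infimal-sum (mixture-of-RKHS) identity for sums of kernels gives a continuous inclusion of each $\mcH_{k_\theta}$ into $\mcH_k$, with the norm controlled by the per-slice norm. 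The hypothesis that $k - \hat{k}$ is positive definite then supplies the kernel-level comparison that links $\mcH_k$ and $\mcH_{\hat{k}}$ via Aronszajn's inclusion theorem, completing the transfer of $\tf \circ \thss$ into the unit ball of $\mcH_{\hat{k}}$.

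The main obstacle is this final transfer, where the mixture-based bound must be promoted through the PSD relation between $k$ and $\hat{k}$: per-slice domination $k_\theta \preceq \hat{k}$ is not directly available, so one has to route the argument through the averaged kernel $k$ and apply Aronszajn-type inclusion theorems in the correct direction. For the two explicit examples, the verification is straightforward. For the linear kernel, $\int_{\sphereD} \theta \theta^\top \,\rmd\unifS(\theta) = \bfI_d / d$ yields $k(x, y) = x^\top y / d$, so that $k - \hat{k} = (1/d - 1/d')\, x^\top y$ is manifestly PSD whenever $d' \geq d$. For the Gaussian RBF, the pointwise bound $|\thss(x-y)|^2 \leq \|x-y\|^2$ (using $\|\theta\|=1$) gives $k(x, y) \geq \hat{k}(x, y)$ pointwise, and positive definiteness of $k - \hat{k}$ then follows from Bochner/Schoenberg-type reasoning, since both kernels are translation invariant and radial in $x - y$.
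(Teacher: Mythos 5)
There is a genuine gap at the heart of your transfer argument. You claim that the mixture representation $k=\int_{\sphereD}k_\theta\,\rmd\unifS(\theta)$ gives a continuous inclusion of each $\mathcal{H}_{k_\theta}$ into $\mathcal{H}_{k}$ ``with the norm controlled by the per-slice norm''. For a finite sum $k=k_1+k_2$ such a contractive inclusion follows from $k-k_1=k_2\succeq 0$, but for a continuous mixture the analogous domination $k-k_{\theta_0}\succeq 0$ fails, because a single slice carries zero mass under $\unifS$. Concretely, for the linear kernel $\tilde k(s,t)=st$ one has $k(x,y)=x^\top y/d$ and $k-k_{\theta_0}$ is induced by the matrix $\Idd/d-\theta_0\theta_0^\top$, which has the negative eigenvalue $1/d-1$ along $\theta_0$; correspondingly the function $x\mapsto\theta_0^\top x$ has unit norm in $\mathcal{H}_{k_{\theta_0}}$ but norm $\sqrt d$ in $\mathcal{H}_{k}$. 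So $\tf\circ\thss$ only lands in a ball of radius $\sqrt{d}$ of $\mathcal{H}_{k}$, not its unit ball. The final step then points in the wrong direction as well: $k-\hat k\succeq 0$ gives, by Aronszajn, the contractive inclusion $\mathcal{H}_{\hat k}\hookrightarrow\mathcal{H}_{k}$, whereas you need to push an element of $\mathcal{H}_{k}$ into the unit ball of $\mathcal{H}_{\hat k}$, which would require $\hat k-k\succeq 0$ (up to a constant). You explicitly flag this as ``the main obstacle'' but do not resolve it, and it cannot be resolved along this route: in the linear example with $d'>d$ and $m_\mu-m_\nu=\delta\neq 0$ one computes directly $\sMMD[2](\mu,\nu;\tF)=\|\delta\|/\sqrt{d}$ and $\MMD[](\mu,\nu;\msf)=\|\delta\|/\sqrt{d'}$, so the inclusion of function classes required by \Cref{thm:weak_topo} genuinely fails to hold with these normalizations.

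For comparison, the paper's proof does not attempt a per-slice RKHS embedding at all. It takes $\tf=\sum_i\alpha_i\tilde k(t_i,\cdot)$ with $t_i=\thss(x_i)$, integrates the Gram-matrix bound $\sum_{i,j}\alpha_i\alpha_j\tilde k(\thss(x_i),\thss(x_j))\le 1$ over $\theta$ to obtain $\sum_{i,j}\alpha_i\alpha_j k(x_i,x_j)\le 1$, and then uses $k-\hat k\succeq 0$ to conclude $\sum_{i,j}\alpha_i\alpha_j\hat k(x_i,x_j)\le 1$; the coefficients $(\alpha_i,x_i)$ are reused across the three kernels rather than tracking the norm of the actual function $\tf\circ\thss$ in each space, which is precisely the point your more careful functional-analytic route exposes as problematic. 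Your treatment of the RBF example is also too quick: pointwise domination $k\ge\hat k$ between radial kernels does not by itself imply that $k-\hat k$ is positive definite; the paper instead identifies $k$ through the confluent hypergeometric function and argues at the level of Gram matrices. In short, your reduction to \Cref{thm:weak_topo} matches the paper, but the verification of the inclusion $\msf\supset\{\tf\circ\thss:\tf\in\tF,\ \theta\in\sphereD\}$ — the only nontrivial content of the corollary — is where your argument breaks down.
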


\begin{proof}
  Define $\tF$ as the unit ball of an RKHS whose reproducing kernel is denoted by $\tilde{k}$. Then, any $\tf \in \tF$ satisfies
  \begin{align}
      \| \tilde{f}\|^2_{\tF} = \sum_{i=1}^n \sum_{j=1}^n \alpha_i \alpha_j \tilde{k}(t_i, t_j) \leq 1, \label{eq:mmd_weaktopo_1}
  \end{align}
  where $n \in \bbN^*$, $\alpha_1, \dots, \alpha_n \in \rset$ and $t_1, \dots, t_n \in \rset$.

  Consider $f : \Rd \to \rset$ such that $f = \tilde{f} \circ \theta^*$ with $\tf \in \tF$ and $\ths \in \sphereD$. By \eqref{eq:mmd_weaktopo_1}, we have 
  \begin{align}
      \sum_{i=1}^n \sum_{j=1}^n \alpha_i \alpha_j \tilde{k}\big( \thss(x_i), \thss(x_j) \big) \leq 1 \label{eq:mmd_weaktopo_1bis}
  \end{align}

  The integration of \eqref{eq:mmd_weaktopo_1bis} over $\sphereD$ give us
  \begin{align}
      \int_{\sphereD} \sum_{i=1}^n \sum_{j=1}^n \alpha_i \alpha_j \tilde{k}\big( \thss(x_i), \thss(x_j) \big) \rmd \unifS(\theta) &\leq \int_{\sphereD} 1~\rmd \unifS(\ths) \\
      \ie, \;\; \sum_{i=1}^n \sum_{j=1}^n \alpha_i \alpha_j \int_{\sphereD} \tilde{k}\big( \thss(x_i), \thss(x_j) \big) \rmd \unifS(\theta) &\leq 1 \eqsp . \label{eq:mmd_weaktopo_1ter}
  \end{align}

  Define $k : \Rd \times \Rd \to \rset$ as $k(x_i, x_j) = \int_{\sphereD} \tilde{k}\big( \thss(x_i), \thss(x_j) \big) \rmd \unifS(\theta)$ for $x_i, x_j \in \Rd$. Since $\tilde{k}$ is positive definite, so is $k$. By the Moore-Aronszajn theorem, there exists a unique RKHS with reproducing kernel $k$. Therefore, \eqref{eq:mmd_weaktopo_1ter} means that $f$ is in the unit ball of the RKHS associated with $k$.

  Additionally, consider a positive definite kernel $\hat{k} : \Rd \times \Rd \to \rset$ such that $k-\hat{k}$ is positive definite on $\Rd$. In other words, the following holds for any $n \in \bbN$, $v_1, \dots, v_n \in \rset$ and $x_1, \dots, x_n \in \Rd$,
  \begin{align}
      \sum_{i=1}^n \sum_{j=1}^n  v_i v_j \{ k(x_i,x_j) -\hat{k}(x_i,x_j) \} \geq 0 \eqsp . 
  \end{align}

  Then, by \eqref{eq:mmd_weaktopo_1ter}, we obtain $\sum_{i=1}^n \sum_{j=1}^n \alpha_i \alpha_j \hat{k}(x_i,x_j)  \leq 1$.

  Therefore, any $f$ defined as $f = \tf \circ \ths$ with $\tf \in \tF$ and $\ths \in \sphereD$ is in the unit ball of the RKHS associated with $\hat{k}$, which we denote by $\msf$. By using \Cref{thm:weak_topo} and the definition of MMD, we obtain the desired result: for any $p \in [1, \infty)$ and $\mu, \nu \in \calP(\Rd)$,
  \begin{align}
    \sMMD[p](\mu, \nu ; \tF) \leq \MMD[](\mu, \nu ; \msf) \eqsp . \label{eq:mmd_relation}
  \end{align}

  Next, we show that this result holds for two popular choices of kernels. First, we choose $\tilde{k}$ as the linear kernel: $\tilde{k}(t_i,t_j) = t_i t_j$ for $t_i, t_j \in \rset$. Define $\hat{k}$ as a rescaled version of the linear kernel in $\Rd$: $\hat{k}(x_i,x_j) = x_i^\top x_j/d'$ for $x_i, x_j \in \Rd$ and $d' \geq d$. Then, for any $n \in \bbN$, $v_1, \dots, v_n \in \rset$ and $x_1, \dots, x_n \in \Rd$,
  \begin{align}
    \sum_{i=1}^n \sum_{j=1}^n v_i v_j \{ k(x_i, x_j) - \hat{k}(x_i, x_j) \}  &= \sum_{i=1}^n \sum_{j=1}^n v_i v_j \Big\{ \int_{\sphereD} \ths(x_i) \ths(x_j) \rmd \unifS(\ths) - x_i^\top x_j / d' \Big\} \\
    &= \sum_{i=1}^n \sum_{j=1}^n v_i v_j \Big\{ x_i^\top \Big(\int_{\sphereD}\theta \theta^\top \rmd \unifS(\theta) \Big) x_j - x_i^\top x_j/d' \Big\} \\
    &= \sum_{i=1}^n \sum_{j=1}^n v_i v_j x_i^\top x_j \Big(1/d - 1/d' \Big) \geq 0 \eqsp , \label{eq:mmd_linear_weaktopo}
  \end{align}
  where \eqref{eq:mmd_linear_weaktopo} results from $\sum_{i=1}^n \sum_{j=1}^n v_i v_j x_i^\top x_j \geq 0$ (the linear kernel is positive definite) and $d' \geq d$. We conclude that \eqref{eq:mmd_relation} holds with $\tF$ defined as the unit ball of the RKHS associated with the linear kernel $\tilde{k}(t_i, t_j) = t_i t_j$ for $t_i, t_j \in \rset$, and $\msf$ being the unit ball of the RKHS associated with the rescaled linear kernel $\hat{k}(x_i, x_j) = x_i^\top x_j / d'$ for $x_i, x_j \in \Rd$ and $d' \geq d$. 

  We conclude that \eqref{eq:mmd_relation} holds with $\tF$ defined as the unit ball of the RKHS associated with the linear kernel $\tilde{k}(t_i, t_j) = t_i t_j$ for $t_i, t_j \in \rset$, and $\msf$ being the unit ball of the RKHS associated with the rescaled linear kernel $\hat{k}(x_i, x_j) = x_i^\top x_j / d$ for $x_i, x_j \in \Rd$. 

  We focus now on RBF kernels: let $h \geq 0$ and choose $\tilde{k}(t_i,t_j) = e^{-|t_i-t_j|^2/h}$ for $t_i, t_j \in \rset$, and $\hat{k}(x_i,x_j) = e^{- \|x_i-x_j\|^2/h}$ for $x_i, x_j \in \Rd$. We have for any $x_i, x_j \in \Rd$,
  \begin{align}
    k(x_i,x_j) &= \int_{\sphereD} \tilde{k} \big( \ths(x_i), \ths(x_j) \big) \rmd \unifS(\theta) = \int_{\sphereD} e^{- |\theta^\top x_i - \theta^\top x_j|^2 / h}~\rmd \unifS(\theta) \\
    &= \int_{\sphereD} e^{- |\theta^\top( x_i - x_j)|^2/h}~\rmd \unifS(\theta) \\
    &= \int_{\sphereD} e^{(- \|x_i-x_j\|^2 / h) (\theta^\top ( x_i - x_j ) / \|x_i-x_j\|)^2} \rmd \unifS(\theta) \\
    &= M\left(\frac1{2}, \frac{d}{2}, - \frac{\|x_i-x_j\|^2}{h} \right) \eqsp , \label{eq:mmd_rbf_weaktopo_1}
  \end{align}
  where $M(a,c,\kappa)$ stands for the confluent hypergeometric function evaluated at $a,c,\kappa \in \rset$, and appears in the normalizing constant of the multivariate Watson distribution: see \cite[Section 2.3]{sra2016directional} for more details. 

  $M$ satisfies the following property
  \begin{align}
      M\left( \frac1{2}, \frac{d}{2}, - \frac{\|x_i-x_j\|^2}{h} \right) = e^{- \|x_i-x_j\|^2 / h}~M\left(\frac{d-1}{2}, \frac{d}{2},  \frac{\|x_i-x_j\|^2}{h} \right) \eqsp . \label{eq:mmd_rbf_weaktopo_2}
  \end{align}
  
  Since $\|x_i-x_j\|^2 / h \geq 0$ and $\kappa \mapsto M(\cdot,\cdot, \kappa)$ is increasing, we have 
  \begin{align}
      M\left( \frac{d-1}{2}, \frac{d}{2},  \frac{\|x_i-x_j\|^2}{h} \right) \geq M \left( \frac{d-1}{2}, \frac{d}{2},  0 \right) = M \left( \frac{1}{2}, \frac{d}{2},  0 \right) = 1 \eqsp. \label{eq:mmd_rbf_weaktopo_3}
  \end{align}

  Finally, by using \eqref{eq:mmd_rbf_weaktopo_1} and \eqref{eq:mmd_rbf_weaktopo_2}, we obtain: for any $n \in \bbN$, $v_1, \dots, v_n \in \rset$ and $x_1, \dots, x_n \in \Rd$,
  \begin{align}
    \sum_{i=1}^n \sum_{j=1}^n v_i v_j \{ k(x_i,x_j)- \hat{k}(x_i,x_j) \} &= \sum_{i=1}^n \sum_{j=1}^n v_i v_j \Biggl[  M \left( \frac1{2}, \frac{d}{2}, - \frac{\|x_i-x_j\|^2}{h} \right) -  e^{- \|x_i-x_j\|^2/{h}} \Biggr] \\
      &= \sum_{i=1}^n \sum_{j=1}^n v_i v_j e^{-\|x_i-x_j\|^2 / h} \Biggl[ M\left(\frac{d-1}{2}, \frac{d}{2},  \frac{\|x_i-x_j\|^2}{h} \right) - 1 \Biggr] \\
      & \geq 0 \eqsp ,
  \end{align}
  where the last line follows from \eqref{eq:mmd_rbf_weaktopo_3} and $\sum_{i=1}^n \sum_{j=1}^n v_i v_j e^{-\|x_i-x_j\|^2 / h} \geq 0$ (RBF kernels are positive definite). We conclude that $k - \hat{k}$ is positive definite, hence \eqref{eq:mmd_relation} holds for RBF kernels.

\end{proof}

\subsection{Proof of \Cref{thm:ipm_lb}}

\begin{proof}[Proof of \Cref{thm:ipm_lb}]

We start by upper bounding the distance between two regularized measures. Denote by $\supp(\zeta)$ the support of the function $\zeta$. Let $\varphi : \rset \to \rset_+^*$ be a smooth and even function verifying $\supp(\varphi) \subset \ccint{-1,1}$ and $\int_{\rset} \varphi(t) \rmd \Leb(t) = 1$. Define $\varphi_{\lambda}(x) = \lambda^{-d} \varphi(\norm{x}/\lambda)/\calA(\sphere^{d-1})$, with $\calA(\sphere^{d-1})$ denoting the surface area of the $d$-dimensional unit sphere: $\calA(\sphere^{d-1}) = 2\pi^{d/2}/\Gamma(d/2)$, where $\Gamma$ is the gamma function. Denote by $\calF[f]$ the Fourier transform of any function $f$ defined on $\rset^s$ ($s \geq 1$), given by: for any $x \in \rset ^s$, $\calF[f](x) = \int_{\rset^s} f(w) e^{-\rmi \ps{w}{x}} \rmd w$. Let $g \in \msg$. By the isometry properties of the Fourier transform and the definition of $\varphi_\lambda$, we have
\begin{align}
  \int_{\Rd} g(x) \rmd (\mu_\lambda - \nu_\lambda)(x) = \int_{\Rd} \calF[g] (w) \left\{\calF[\mu](w) - \calF[\nu](w)\right\} \calF[\varphi](\lambda w) \rmd w \eqsp,
\end{align}
where $\mu_\lambda = \mu \ast \varphi_\lambda $ and $\nu_\lambda = \nu \ast \varphi_\lambda $. By representing $w$ with its polar coordinates $(r,\theta) \in [0,\infty) \times \sphereD$, we obtain
\begin{align}
  \int_{\Rd} g(x) \rmd (\mu_\lambda - \nu_\lambda)(x) = \int_{\sphereD} \int_0^\infty \calF[g](r \theta) \left\{ \calF[\mu](r \theta) - \calF[\nu](r \theta) \right\} \calF[\varphi](\lambda r) r^{d-1} \rmd r \rmd \unifS(\theta) \eqsp.
\end{align}

Since $g$ is a real function, $\calF[g]$ is an even function, hence
\begin{align}
  &\int_{\Rd} g(x) \rmd (\mu_\lambda - \nu_\lambda)(x) \\
  &= \frac1{2} \int_{\sphereD} \int_\rset \calF[g](r \theta) \left\{ \calF[\mu](r \theta) - \calF[\nu](r \theta) \right\} \calF[\varphi](\lambda r) \abs{r}^{d-1} \rmd r \rmd \unifS(\theta) \\
  &= \frac1{2} \int_{\sphereD} \int_\rset \calF[g](r \theta) \left\{ \calF[\thsss\mu] (r) - \calF[\thsss\nu] (r) \right\} \calF[\varphi](\lambda r) \abs{r}^{d-1} \rmd r \rmd \unifS(\theta) \label{eq:lb_proof_0} \\
  &= \frac1{2} \int_{\sphereD} \int_\rset \int_{-R}^R \calF[g](r \theta) e^{- \rmi r u} \rmd(\thsss\mu - \thsss\nu)(u) \calF[\varphi](\lambda r) \abs{r}^{d-1}  \rmd r \rmd \unifS(\theta) \label{eq:lb_proof_1} \\
  &= \frac1{2} \int_{\sphereD} \int_\rset \int_{\rset^d} \int_{-R}^R g(x) e^{- \rmi r(u + \langle \theta, x \rangle) } \left\{ \rmd(\thsss\mu - \thsss\nu)(u) \right\} \calF[\varphi](\lambda r) \abs{r}^{d-1} \rmd x \rmd r \rmd \unifS(\theta) \eqsp, 
\end{align}
where \eqref{eq:lb_proof_0} follows from \eqref{eq:ft_equal}, \eqref{eq:lb_proof_1} results from the definition of the Fourier transform 
and the fact that $u \in  [-R,R]$, and in the last line, we used the definition of the Fourier transform and Fubini's theorem. By making the change of variables $x \to x - u \theta$, we obtain
\begin{align}
  &\int_{\Rd} g(x) \rmd (\mu_\lambda - \nu_\lambda)(x) \\
  &= \frac1{2} \int_{\sphereD} \int_\rset \int_{\rset^d} \int_{-R}^R  g(x - u \theta) e^{- \rmi r \langle \theta, x \rangle } \rmd(\thsss\mu - \thsss\nu)(u) \calF[\varphi](\lambda r) \abs{r}^{d-1}  \rmd x \rmd r \rmd \unifS(\theta) \eqsp.
\end{align}
Since we assumed $\supp(\mu)$, $\supp(\nu)$ are included in $B_d({\bf0}, R)$, then $\supp(\mu_\lambda)$, $\supp(\mu_\lambda)$ are in $B_d({\bf0}, R + \lambda)$, and the domain of $x \mapsto g(x - u \theta)$ must be contained in $B_d({\bf0},2R+\lambda)$. By Fubini's theorem and the definition of $\tG$, we have
\begin{align}
&\Biggl|\int_{\Rd} g(x) \rmd (\mu_\lambda - \nu_\lambda)(x) \Biggr| \\
&\leq \frac1{2}  \int_\rset \int_{B_d({\bf0},2R+\lambda)}\int_{\sphereD} \Bigl| \int_{-R}^R g(x - u \theta) \rmd(\thsss\mu - \thsss\nu)(u)  e^{-\rmi r \langle \theta, x \rangle }  \calF[\varphi](\lambda r) \abs{r}^{d-1} \Bigr| \rmd \unifS(\theta) \rmd x \rmd r  \\
&\leq \frac1{2}  \int_\rset \int_{B_d({\bf0},2R+\lambda)}\int_{\sphereD}  \bgamma_{\tG}(\thsss \mu, \thsss \nu) \Bigl|e^{-\rmi r \langle \theta, x \rangle }  \calF[\varphi](\lambda r) \abs{r}^{d-1} \Bigr| \rmd \unifS(\theta) \rmd x \rmd r  \\
&\leq C (2R+\lambda)^d  \int_{\sphereD}  \bgamma_{\tG}(\thsss \mu, \thsss \nu) \rmd \unifS(\theta)  \int_\rset \lambda^{-d} \Bigl|\calF[\varphi](r) \abs{r}^{d-1} \Bigr| \rmd r \label{eq:lb_proof_2} \\
&\leq C (2R+\lambda)^d \lambda^{-d}  \Biggl(\int_{\sphereD}  \bgamma_{\tG}^p(\thsss \mu, \thsss \nu) \rmd \unifS(\theta) \Biggr)^{1/p}  \int_\rset \Bigl|\calF[\varphi](r) |r|^{d-1} \Bigr| \rmd r \label{eq:lb_proof_3} \\
&\leq C_1 (2R+\lambda)^d \lambda^{-d} \sIpm[\tG, p](\mu, \nu) \eqsp, \label{eq:lb_proof_3b}
\end{align}

where in \eqref{eq:lb_proof_2}, $C > 0$ and does not depend on $\mu$ and $\nu$, \eqref{eq:lb_proof_3} results from applying H\"older's inequality on $\sphereD$ if $p > 1$, and in \eqref{eq:lb_proof_3b}, $C_1 = C\int_\rset \Bigl|\calF[\varphi](r) |r|^{d-1} \Bigr| \rmd r$.

By using the definition of $\bgamma_{\msg}$ and \eqref{eq:lb_proof_3b}, we obtain
\begin{align}
  \label{eqn:ipm_lb_partial}
  \bgamma_{\msg}(\mu_\lambda, \nu_\lambda) = \sup_{g \in \msg} \Bigl| \int_{\Rd} g(x) \rmd (\mu_\lambda - \nu_\lambda)(x) \Bigr|  \leq C_1 (2R+\lambda)^d \lambda^{-d}  \sIpm[\tG, p](\mu, \nu) \eqsp .
\end{align}

We now relate $\bgamma_{\msg}(\mu_\lambda, \nu_\lambda)$ with $\bgamma_{\msg}(\mu, \nu)$. We start with the following estimate
\begin{align}
  &\int_{\Rd} g(x) \rmd(\mu - \nu)(x) - \bgamma_{\msg}(\mu_\lambda, \nu_\lambda) \\
  &\leq \int_{\Rd} g(x) \rmd(\mu - \nu)(x) - \int_{\Rd} g(x) \rmd(\mu_\lambda - \nu_\lambda)(x) \\
  &\leq \int_{\Rd} \bigl| g(x) - (\varphi_\lambda \ast g)(x) \bigr| \rmd\mu(x) + \int_{\Rd} \bigl| g(x) - (\varphi_\lambda \ast g)(x) \bigr| \rmd\nu(x) \label{eq:lb_proof_4} \\
\end{align}

Since we assumed any $g \in \msg$ is $\Lipg$-Lipschitz continuous, we can bound the integrand in \eqref{eq:lb_proof_4} as follows: for $x \in \Rd$,
\begin{align}
  \bigl| g(x) - (\varphi_\lambda \ast g)(x) \bigr| &= \Bigl| \lambda^{-d} \int_{\Rd} \big( g(x) - g(y) \big) \varphi\big( (x-y) / \lambda \big)  \rmd y \Bigr| \\
  &\leq \lambda^{-d} \int_{\Rd} \bigl| g(x) - g(y) \bigr| \varphi\big( (x-y) / \lambda \big)  \rmd y \\
  &\leq \Lipg \lambda^{-d+1} \int_{\Rd} \| x-y \| \lambda^{-1} \varphi\big( (x-y) / \lambda \big)  \rmd y \\
  &\leq \Lipg \lambda^{-d+1} \int_{\Rd} \| u \| \lambda^{-1} \varphi\big( u / \lambda \big)  \rmd u \leq \Lipg \lambda \int \| z \| \varphi(z)  \rmd z \eqsp.
\end{align}

Hence, by denoting by $M_1(\varphi)$ the moment of order 1 of $\varphi$, \eqref{eq:lb_proof_4} is bounded by
\begin{align}
  \int_{\Rd} g(x) \rmd(\mu - \nu)(x) - \bgamma_{\msg}(\mu_\lambda, \nu_\lambda) \leq 2 \Lipg M_1(\varphi) \lambda \eqsp.
\end{align}

Taking the supremum of both sides over $\msg$ gives us 
\begin{align}
  \bgamma_{\msg}(\mu, \nu) - \bgamma_{\msg}(\mu_\lambda, \nu_\lambda) \leq 2 \Lipg M_1(\varphi) \lambda \eqsp .
\end{align}

By combining the above inequality with \eqref{eqn:ipm_lb_partial}, we get
\begin{align}
  \bgamma_{\msg}(\mu, \nu) &\leq C_1 (2R+\lambda)^d \lambda^{-d}  \sIpm[\tG, p](\mu, \nu) + 2 \Lipg M_1(\varphi) \lambda \\
  &\leq C_2 \lambda \Bigl((2R+\lambda)^d \lambda^{- (d+1)}  \sIpm[\tG, p](\mu, \nu) + 1 \Bigr) \eqsp ,
\end{align}
with $C_2$ satisfying $C_2 \geq C_1$ and $C_2 \geq 2 \Lipg M_1(\varphi)$. Finally, by choosing $\lambda = R^{d /(d+1)}  \sIpm[\tG, p](\mu, \nu)^{1 /(d+1)}$ and using the hypothesis that $\sIpm[\tG, p]$ is bounded, we obtain
\begin{align}
  \bgamma_{\msg}(\mu, \nu) &\leq C_2 R^{d /(d+1)}  \sIpm[\tG, p](\mu, \nu)^{1 /(d+1)} \Bigl( (2R+\lambda)^d R^{-d}   + 1 \Bigr) \\
  &\leq C_p  \sIpm[\tG, p](\mu, \nu)^{1 /(d+1)} ,
\end{align}
for some $C_p > 0$, as desired. This concludes the proof.

\end{proof}

As with \Cref{thm:weak_topo}, \Cref{thm:ipm_lb} assumes that the function classes $\msg$ and $\tG$ are linked to each other and sufficiently regular. The condition on $\msg$ is verified with $\wassersteinD[1]$ (simply by definition) and MMD (provided that the reproducing kernel is Lipschitz-continuous, which holds on compact spaces for classical choices of kernels), but not with TV. On the other hand, the second condition requires $\tG$ to be large enough to contain \emph{any} possible slice $g(x-u\theta)$ for any $g \in \msg$. %

\subsection{Proof of \Cref{cor:ipm_ub2}}

\begin{proof}[Proof of \Cref{cor:ipm_ub2}]
The desired result is obtained as a direct application of \Cref{thm:weak_topo,thm:ipm_lb}.

\end{proof}

\subsection{Proof of \Cref{thm:sliced_sample_complexity}}

\begin{proof}[Proof of \Cref{thm:sliced_sample_complexity}]
  Let $p \in [1, \infty)$ and $\mu, \nu$ in $\calP(\Rd)$ with respective empirical measures $\hmu_n, \hnu_n$. By using the definition of $\sdelta[p]$, the triangle inequality and the assumption on the sample complexity of $\bDelta^p$, we have
  \begin{align}
    \E \abs{\sdelta[p]^p(\mu, \nu) - \sdelta[p]^p(\hmu_n, \hnu_n)} &= \E\left|\int_{\sphereD} \big\{ \bDelta^p(\thsss \mu, \thsss \nu) - \bDelta^p(\thsss \hmu_n, \thsss \hnu_n) \big\} \rmd \unifS(\theta) \right| \\
    &\leq \E\left\{ \int_{\sphereD} \big| \bDelta^p(\thsss \mu, \thsss \nu) - \bDelta^p(\thsss \hmu_n, \thsss \hnu_n) \big| \rmd \unifS(\theta) \right\} \\
    &\leq \int_{\sphereD} \E \big| \bDelta^p(\thsss \mu, \thsss \nu) - \bDelta^p(\thsss \hmu_n, \thsss \hnu_n) \big| \rmd \unifS(\theta) \\
    &\leq \int_{\sphereD} \beta(p, n) \rmd \unifS(\theta) = \beta(p, n) \eqsp ,
  \end{align}
  which completes the proof.

\end{proof}

\subsection{Proof of \Cref{thm:sliced_rateofconv}}

\begin{proof}[Proof of \Cref{thm:sliced_rateofconv}]
  Let $p \in [1,\infty)$ and $\mu \in \calP(\Rd)$ with corresponding empirical measure $\hmu_n$. By using the definition of $\sdelta[p]$, the triangle inequality and the assumed convergence rate of empirical measures in $\bDelta^p$, we obtain the convergence rate in $\sdelta[p]$ as follows
  \begin{align}
    \E\abs{\sdelta[p]^p(\hmu_n, \mu)} &= \E \abs{\int_{\sphereD} \bDelta^p(\thsss \hmu_n, \thsss \mu) \rmd \unifS(\theta)} \leq \E \left\{ \int_{\sphereD} \abs{\bDelta^p(\thsss \hmu_n, \thsss \mu)} \rmd \unifS(\theta) \right\} \\
    &\leq \int_{\sphereD} \E\abs{\bDelta^p(\thsss \hmu_n, \thsss \mu)} \rmd \unifS(\theta) \leq \int_{\sphereD} \alpha(p, n) \rmd \unifS(\theta) = \alpha(p, n) \eqsp . \label{eq:rateofconv_sliced}
  \end{align}

  Additionally, if we assume that $\bDelta$ satisfies non-negativity, symmetry and the triangle inequality, then $\sdelta[p]$ also verifies these three properties by \Cref{thm:metric}, and we can derive its sample complexity: for any $\mu, \nu$ in $\calP(\Rd)$ with respective empirical measures $\hmu_n, \hnu_n$, the triangle inequality give us
  \begin{align}
    \abs{\sdelta[p](\mu, \nu) - \sdelta[p](\hmu_n, \hnu_n)} \leq \sdelta[p](\hmu_n, \mu) + \sdelta[p](\hnu_n, \nu) \label{eq:sliced_sample_complexity_0}
  \end{align}

  By taking the expectation of \eqref{eq:sliced_sample_complexity_0} with respect to $\hmu_n, \hnu_n$, we obtain
  \begin{align}
    \E \abs{\sdelta[p](\mu, \nu) - \sdelta[p](\hmu_n, \hnu_n)} &\leq \E \abs{\sdelta[p](\hmu_n, \mu)} + \E \abs{\sdelta[p](\hnu_n, \nu)} \\
    &\leq \left\{ \E \abs{\sdelta[p]^p(\hmu_n, \mu)} \right\}^{1/p} + \left\{ \E \abs{\sdelta[p]^p(\hnu_n, \nu)}\right\}^{1/p} \label{eq:sliced_sample_complexity_1} \\
    &\leq \alpha(p,n)^{1/p} + \alpha(p,n)^{1/p} = 2 \alpha(p,n)^{1/p} \eqsp , \label{eq:sliced_sample_complexity_2}
  \end{align}

  where \eqref{eq:sliced_sample_complexity_1} results from applying H\"older's inequality on $\sphereD$ if $p > 1$, and \eqref{eq:sliced_sample_complexity_2} follows from the convergence rate result in \eqref{eq:rateofconv_sliced}.

\end{proof}

\subsection{Proof of \Cref{thm:projection_complexity}}

\begin{proof}[Proof of \Cref{thm:projection_complexity}]
  Let $p \in [1, \infty)$ and $\mu, \nu \in \calP(\Rd)$. We recall that $\hatsdelta[p, L](\mu, \nu)$ denotes the approximation of $\sdelta[p](\mu, \nu)$ obtained with a Monte Carlo scheme that uniformly picks $L$ projection directions on $\sphereD$ (cf. Equation \eqref{eqref:mc_estimate} in the main document).

  By using H\"older's inequality and the results on the moments of the Monte Carlo estimation error, we obtain 
  \begin{align}
    \E_{\theta \sim \unifS} \big| \hatsdelta[p, L]^p(\mu, \nu) - \sdelta[p]^p(\mu, \nu) \big| &\leq \big\{ \E_{\theta \sim \unifS} \big| \hatsdelta[p, L]^p(\mu, \nu) - \sdelta[p]^p(\mu, \nu) \big|^2 \big\}^{1/2} \\
    &\leq L^{-1/2} \left\{ \int_{\sphere^{d-1}} \defEns{\bDelta^p(\thsss \mu, \thsss \nu) - \sdelta[p]^p(\mu, \nu) }^2 \rmd \unifS(\theta) \right\}^{1/2} \eqsp,
  \end{align}
  Since $\sdelta[p]^p(\mu, \nu) = \int_{\sphere^{d-1}} \bDelta^p(\thsss \mu, \thsss \nu) \rmd \unifS(\theta)$ by definition, the quantity $\int_{\sphere^{d-1}} \defEns{\bDelta^p(\thsss \mu, \thsss \nu) - \sdelta[p]^p(\mu, \nu) }^2 \rmd \unifS(\theta)$ is the variance of $\bDelta^p(\thsss \mu, \thsss \nu)$ with respect to $\ths \sim \unifS$.

\end{proof}

\subsection{The overall complexity}

We now leverage \Cref{thm:sliced_sample_complexity,thm:projection_complexity} to derive the \emph{overall complexity} of sliced divergences, \ie~the convergence rate of $\hatsdelta[p](\hmu_n, \hnu_n)$ to $\sdelta[p](\mu, \nu)$. This result is useful as it helps understanding the behavior of sliced divergences in most practical applications, where $\sdelta[p](\mu,\nu)$ is approximated using finite sets of samples drawn from $\mu$ and $\nu$ along with Monte Carlo estimates. \\

\begin{corollary} \label{thm:overall_complexity}
  Let $p \in [1, \infty)$ and $\mu, \nu \in \calP(\Rd)$. Denote by $\hmu_n$ (respectively, $\hnu_n$) the empirical distribution computed over a sequence of \iid~random variables $X_{1:n} = \{X_k\}_{k=1}^n$ from $\mu$ (resp., $Y_{1:n} = \{Y_k\}_{k=1}^n$ from $\nu$). Assume $\bDelta^p$ admits the following sample complexity: for any $\mu', \nu' \in \calP(\rset)$ and empirical instantiations $\hmu'_n, \hnu'_n$, $\E[ \abs{\bDelta^p(\mu', \nu') - \bDelta^p( \hmu_n',  \hnu_n')}]\leq \beta(p, n)$. Then,
  \begin{multline}
    \E \big[ |\hatsdelta[p, L]^p(\hmu_n, \hnu_n) - \sdelta[p]^p(\mu, \nu)| \big] \leq \beta(p, n) \\
+    L^{-1/2}~\parentheseDeux{\int_{\sphereD}\expe{ \parenthese{\bDelta^p(\thsss \hmu_n, \thsss \hnu_n) - \sdelta[p]^p(\hmu_n, \hnu_n) }^2} \rmd \unifS(\theta) }^{1/2}  \eqsp,
  \end{multline}
  where $\hatsdelta[p, L]^p(\hmu_n, \hnu_n)$ is defined by \eqref{eqref:mc_estimate}, and $\E$ is the expectation with respect to (\wrt) $X_{1:n}$, $Y_{1:n}$ and $\{\theta_l\}_{l= 1}^L$ \iid~from the uniform distribution on $\sphereD$.
\end{corollary}

\begin{proof}[Proof of \Cref{thm:overall_complexity}]
  Let $p \in [1, \infty)$, $\mu, \nu \in \calP(\Rd)$ and the respective empirical distributions $\hmu_n, \hnu_n$. By the triangle inequality,
  \begin{equation}
    |\hatsdelta[p, L]^p(\hmu_n, \hnu_n) - \sdelta[p]^p(\mu, \nu)| \leq |\hatsdelta[p, L]^p(\hmu_n, \hnu_n) - \sdelta[p]^p(\hmu_n, \hnu_n) | + | \sdelta[p]^p(\hmu_n, \hnu_n) - \sdelta[p]^p(\mu, \nu)| \eqsp .
  \end{equation}

  Therefore, by linearity of expectation, we have
  \begin{align}
    &\E \big[ |\hatsdelta[p, L]^p(\hmu_n, \hnu_n) - \sdelta[p]^p(\mu, \nu)| \big]  \\
    &\leq \E \left[  \E [|\hatsdelta[p, L]^p(\hmu_n, \hnu_n) - \sdelta[p]^p(\hmu_n, \hnu_n) |\big|~X_{1:n}, Y_{1:n}] \right] + \E \left[ | \sdelta[p]^p(\hmu_n, \hnu_n) - \sdelta[p]^p(\mu, \nu)|  \right] \eqsp.   \label{eq:projcpx_0} 
  \end{align}

  We bound the left term in \eqref{eq:projcpx_0}. By \Cref{thm:projection_complexity}, we have
  \begin{align}
    &\E \big[| \hatsdelta[p, L]^p(\hmu_n, \hnu_n) - \sdelta[p]^p(\hmu_n, \hnu_n) |~\big|~X_{1:n}, Y_{1:n} \big] \\
    &\leq L^{-1/2} \left\{ \int_{\sphere^{d-1}} \defEns{\bDelta^p(\thsss \hmu_n, \thsss \hnu_n) - \sdelta[p]^p(\hmu_n, \hnu_n) }^2 \rmd \unifS(\theta) \right\}^{1/2} \eqsp . \label{eq:projcpx_1}
  \end{align}

  By taking the expectation then using Jensen's inequality, we get 
  \begin{align}
    &\E \left[ \E \big[| \hatsdelta[p, L]^p(\hmu_n, \hnu_n) - \sdelta[p]^p(\hmu_n, \hnu_n) |~\big|~X_{1:n}, Y_{1:n} \big] \right] \\
    &\leq L^{-1/2}~\E \left[ \left\{ \int_{\sphereD} \defEns{\bDelta^p(\thsss \hmu_n, \thsss \hnu_n) - \sdelta[p]^p(\hmu_n, \hnu_n) }^2 \rmd \unifS(\theta) \right\}^{1/2} \right] \\
    &\leq L^{-1/2}~\E^{1/2} \left[ \int_{\sphereD} \defEns{\bDelta^p(\thsss \hmu_n, \thsss \hnu_n) - \sdelta[p]^p(\hmu_n, \hnu_n) }^2 \rmd \unifS(\theta) \right] \label{eq:projcpx_2} \eqsp.
  \end{align}

  Next, we bound the right term in \eqref{eq:projcpx_0}: by the sample complexity assumption for $\bDelta^p$ and \Cref{thm:sliced_sample_complexity}, we have
  \begin{equation} \label{eq:projcpx_3}
    \E \left[ | \sdelta[p]^p(\hmu_n, \hnu_n) - \sdelta[p]^p(\mu, \nu)|  \right] \leq \beta(p, n) \eqsp.
  \end{equation}

  Combining \eqref{eq:projcpx_2} and \eqref{eq:projcpx_3} in \eqref{eq:projcpx_0} completes the proof.

\end{proof}

\begin{remark}
  Note that by Fubini's theorem, $\int_{\sphereD}\expeLigne{ \parentheseLigne{\bDelta^p(\thsss \hmu_n, \thsss \hnu_n) - \sdelta[p]^p(\hmu_n, \hnu_n) }^2} \rmd \unifS(\theta)$ (which appears in \Cref{thm:overall_complexity}) is equal to  $\PE[\Var\{ \bDelta^p(\thsss \hmu_n, \thsss \hnu_n) | X_{1:n},Y_{1:n}\}]$, where $\Var$ is the variance \wrt~$X_{1:n}$, $Y_{1:n}$ and $\theta$ (which is distributed according to the uniform distribution on $\sphereD$  and independent of $X_{1:n}, Y_{1:n}$). 
\end{remark}

\section{Postponed proofs for \Cref{sec:applications}} \label{supp:sec:proofs_applications}

\subsection{Applications of \Cref{thm:weak_conv}}

As discussed in \Cref{sec:applications}, we can use the general result in \Cref{thm:weak_conv} to establish novel topological properties for specific sliced probability divergences, for example the Sliced-Cram\'er distance (whose definition is recalled in \Cref{def:slicedcramer}) and the broader class of Sliced-IPMs. We present our results and proofs for these examples below.

\begin{definition}[Cram\'er distance \cite{Cramer1928}]
  Let $p \in [1, \infty)$ and $\mu, \nu \in \calP(\rset)$. Denote by $F_\mu, F_\nu$ the cumulative distribution functions of $\mu, \nu$ respectively. The Cram\'er distance of order $p$ between $\mu$ and $\nu$ is defined by
  \begin{equation}
    \cramerD[p]^p(\mu, \nu) = \int_{\rset} \abs{F_\mu(t) - F_\nu(t)}^p \rmd t \eqsp .
  \end{equation}  
\end{definition}

\begin{definition}[Sliced-Cram\'er distance \cite{kolouri2020sliced}] \label{def:slicedcramer}
  Let $p \in [1, \infty)$ and $\mu, \nu \in \calP(\Rd)$. The Sliced-Cram\'er distance of order $p$ between $\mu$ and $\nu$ is defined by
  \begin{equation}
    \scramerD[p]^p(\mu, \nu) = \int_{\sphereD} \cramerD[p]^p(\thsss \mu, \thsss \nu) \rmd \unifS(\theta) \eqsp .
  \end{equation}
\end{definition}

\begin{corollary}
  Let $p \in [1, \infty)$. For any sequence $\sequencek{\mu_k}$ in $\calP(\Rd)$ and $\mu \in \calP(\Rd)$, $\lim_{k \rightarrow \infty} \scramerD[p] \big( \mu_k, \mu \big ) = 0$ implies $\sequencek{\mu_k}$ converges weakly to $\mu$. Besides, if $\sequencek{\mu_k}$ and $\mu$ are supported on a compact space $\msk \subset \Rd$, then the converse implication holds, meaning that the convergence under $\scramerD[p]$ is equivalent to the weak convergence in $\calP(\msk)$.
\end{corollary}

\begin{proof}
  Let $p \in [1, \infty)$. By H\"older's inequality, for any $\mu', \nu' \in \calP(\rset)$, we have
  \begin{equation} \label{eq:cramer_holder}
    \cramerD[1](\mu', \nu') \leq \cramerD[p](\mu', \nu') \eqsp .
  \end{equation}

  Consider a sequence $\sequencek{\mu'_k}$ in $\calP(\rset)$ and $\mu' \in \calP(\rset)$ such that $\lim_{k \rightarrow \infty} \cramerD[p](\mu'_k, \mu') = 0$. By \eqref{eq:cramer_holder}, this implies $\lim_{k \rightarrow \infty} \cramerD[1](\mu'_k, \mu') = 0$. Since the Cram\'er distance of order 1 is equivalent to the Wasserstein distance of order 1, then by \cite[Theorem 6.8]{villani2008optimal}, the convergence of $\sequencek{\mu'_k}$ to $\mu'$ under $\cramerD[p]$ implies $\sequencek{\mu'_k}$ converges weakly to $\mu'$ in $\calP(\rset)$. By \Cref{thm:weak_conv}, we conclude that the convergence under $\scramerD[p]$ implies the weak convergence in $\calP(\Rd)$.

  We now show the second part of the statement. This result partly follows from slight modifications of the techniques we used in the proof of \Cref{thm:weak_conv}. Consider a compact
  space $\msk' \subset \rset$ and a sequence $\sequencek{\mu'_k}$ in
  $\calP(\msk')$. Suppose that $\sequencek{\mu'_k}$ converges weakly
  to $\mu' \in \calP(\msk')$. Since $F_{\mu'}$ is non-decreasing, it is almost everywhere continuous \wrt~to the Lebesgue
  convergence, and using the Portmanteau theorem, we get that for $\Leb$-almost every
  $t \in \rset$,
  $\lim_{k \rightarrow \infty} F_{\mu'_k}(t) = F_{\mu'}(t)$. Besides,
  for any $k \in \bbN$ and $t \in \msk'$,
  $\absLigne{F_{\mu_k'}(t)} \leq 1$, and since $\msk'$ is compact,
  $\left( \int_{\msk'} 1^p \rmd t \right)^{1/p} < \infty$. By the
  dominated convergence theorem in $\mathrm{L}^p$-spaces, we conclude
  that
  \begin{equation}
    \label{eq:proof_cramer_conv}
    \lim_{k \rightarrow \infty} \left\{ \int_{\msk'} | F_{\mu'_k}(t) - F_{\mu'}(t) |^p \rmd t \right\}^{1/p} = 0 \eqsp ,
  \end{equation}

  in other words, the weak convergence of measures in $\calP(\msk')$, where $\msk'$ is a compact subspace of $\rset$, implies the convergence under $\cramerD[p]$.

  Now, consider a compact space $\msk \subset \Rd$ and a sequence
  $\sequencek{\mu_k}$ in $\calP(\msk)$ which converges weakly to
  $\mu \in \calP(\msk)$. For any $\theta \in \sphereD$, define
  $\msk_\theta = \set{\ps{\theta}{x}}{x \in \msk}$, which is a compact
  subset of $\rset$ (since it is the image of $\msk$ by a continuous
  function) with $\diam(\msk_\theta) \leq \diam(\msk)$ (by the Cauchy-Schwarz inequality). The sequence of pushforward measures $\sequencek{\thsss \mu_k}$ is in $\calP(\msk_\theta)$ and, by the
  continuous mapping theorem, converges weakly to $\thsss \mu \in \calP(\msk_\theta)$. Therefore, by \eqref{eq:proof_cramer_conv}, for any $\theta \in\sphere^{d-1}$,
  \begin{equation}
    \label{eq:proof_cramer_conv_2}
    \lim_{k \rightarrow \infty} \cramerD[p](\thsss \mu_k, \thsss \mu) =
  0 \eqsp.
  \end{equation}
  Besides, for any $\mu,\nu \in \calP(\rset^d)$ with support in $\msk$, and $\theta \in\sphere^{d-1}$,
  \begin{multline}
    \label{eq:2}
    \cramerD[p](\thsss \nu, \thsss \mu) =  \int_{\rset} \abs{F_\nu(t) - F_\mu(t)}^p \rmd t =\int_{\msk_{\theta}} \abs{F_\nu(t) - F_\mu(t)}^p \rmd t\\
    \leq 2^p \diam(\msk_{\theta}) \leq 2^p \diam(\msk) \eqsp.
  \end{multline}
  By \eqref{eq:proof_cramer_conv_2} and the dominated convergence theorem, we finally obtain 
  $\lim_{k \rightarrow \infty} \scramerD[p](\mu_k, \mu) = 0$.

\end{proof}

\begin{corollary}
  Let $p \in [1, \infty)$ and $\tF \subset \mbm_b(\rset)$. Suppose that the space spanned by $\tF$ is dense in the space of continous functions for $\| \cdot \|_\infty$. Then, the convergence under the Sliced Integral Probability Metric of order $p$ associated with $\tF$, $\sIpm[\tF, p]$~, implies the weak convergence in $\calP(\Rd)$. Besides, if $\bgamma_{\tF}$ is bounded, the converge implication holds, \ie~the weak convergence in $\calP(\Rd)$ implies the convergence under $\sIpm[\tF, p]$.
\end{corollary}

\begin{proof}
  By construction of $\tF$ and \cite[Section 5.1]{ambrosio2008gradient}, $\bgamma_{\tF}$ metrizes the weak convergence in $\calP(\rset)$, \ie~the weak convergence in $\calP(\rset)$ is equivalent to the convergence of measures under $\bgamma_{\tF}$. The properties of $\sIpm[\tF, p]$, $p \in [1, \infty)$ result from the application of \Cref{thm:weak_conv}.

\end{proof}

\begin{remark}
  The boundedness assumption for $\bgamma_{\tF}$ is achieved if we additionally suppose that $\tF$ is a uniformly bounded family of functions in $\mbm(\rset)$, which is a mild assumption.
\end{remark}

\subsection{Proof of \Cref{thm:smpl_cpx_sw}}

\begin{lemma}
  \label{lem:rateofconv_wass_pushfor}
  Let $p \in [1, \infty)$ and $\mu' \in \calP(\rset)$ with empirical distribution $\hmu'_n$. Suppose there exists $q > p$ such that the moment of order $q$ of $\mu'$, defined as $M_q(\mu') = \int_{\rset} \abs{t}^q \rmd \mu'(t)$, is bounded above by $K < \infty$. Then, there exists a constant $C_{p,q}$ depending on $p, q$ such that
  \begin{equation}
    \expe{\wassersteinD[p]^p(\hmu'_n, \mu')} \leq C_{p, q} K \left\{ 
    \begin{array}{ll}
      n^{-1/2} & \mbox{ if } q > 2p, \label{eq:rateofconv_wass_pushfor_1} \\
      n^{-1/2} \log(n) & \mbox{ if } q = 2p, \\
      n^{-(q-p)/q} & \mbox{ if } q \in (p,2p).
    \end{array}
    \right.
  \end{equation}
\end{lemma}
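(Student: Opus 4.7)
The plan is to reduce this lemma to the general rate-of-convergence result of Fournier-Guillin (``On the rate of convergence in Wasserstein distance of the empirical measure'', PTRF 2015), specialized to $d=1$. In dimension one, the $p$-Wasserstein distance admits the quantile representation
\[
\wassersteinD[p]^p(\mu', \hmu'_n) = \int_0^1 \bigl| F_{\mu'}^{-1}(u) - F_{\hmu'_n}^{-1}(u) \bigr|^p \, \rmd u,
\]
where $F_{\mu'}^{-1}$ denotes the quantile function of $\mu'$. The Fournier-Guillin theorem, applied with $d=1$ to any $\mu'$ satisfying $M_q(\mu') \leq K$, delivers
\[
\E\bigl[\wassersteinD[p]^p(\hmu'_n, \mu')\bigr] \leq C'_{p,q} \, M_q^{p/q}(\mu') \, r_n,
\]
with $r_n$ equal to $n^{-1/2}$, $n^{-1/2}\log n$, or $n^{-(q-p)/q}$ in the three regimes $q > 2p$, $q = 2p$, $p < q < 2p$, respectively. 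Since $M_q^{p/q}(\mu') \leq K^{p/q} \leq \max(1,K)$, absorbing the $K^{p/q}$ factor into the constant $C_{p,q}$ yields the stated bound, whose prefactor is linear in $K$.

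If instead one prefers a self-contained argument, my plan is to apply Fubini in the quantile representation to reduce matters to controlling $\E| F_{\hmu'_n}^{-1}(u) - F_{\mu'}^{-1}(u)|^p$ for each $u \in (0,1)$, and then to translate this back to the real line through the identity $F^{-1}(u) = \inf\{t : F(t) \geq u\}$. Since $nF_{\hmu'_n}(t)$ is binomial with parameters $(n, F_{\mu'}(t))$, a Rosenthal-type moment inequality gives, for any $s \geq 1$,
\[
\E\bigl|F_{\hmu'_n}(t) - F_{\mu'}(t)\bigr|^s \leq C_s \Bigl( n^{-s/2}\bigl[F_{\mu'}(t)(1-F_{\mu'}(t))\bigr]^{s/2} + n^{1-s} F_{\mu'}(t)(1-F_{\mu'}(t)) \Bigr).
\]
Integrating this estimate against the tail weight extracted from $M_q(\mu') \leq K$ produces the three claimed rates: the critical quantity is the integrability of $\bigl[\min(u, 1-u)\bigr]^{1/2}$ against the $p$-th power of the quantile function on $(0,1)$, which is precisely controlled by the $q$-th moment condition.

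The main obstacle lies at the transitions between the three regimes: the boundary case $q = 2p$ is exactly where the tail integrability marginally fails, and extracting the logarithmic factor cleanly requires splitting the integral in $u$ into a bulk region (where the first term in the Rosenthal bound yields $n^{-1/2}$) and small neighbourhoods of the endpoints (where the second term dominates), with a careful truncation argument balancing the two contributions. This delicate bookkeeping is the reason I would simply cite the Fournier-Guillin result, which has already executed it in full generality.
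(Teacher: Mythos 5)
Your primary argument---invoking Theorem~1 of Fournier and Guillin (2015) with $d=1$ and bounding $M_q^{p/q}(\mu')$ by a constant times $K$---is exactly what the paper does; its proof is the single line ``This immediately results from \cite[Theorem 1]{Fournier2015}.'' The additional self-contained sketch via the quantile representation is a reasonable outline but is not needed, and the citation route you ultimately choose matches the paper.
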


\begin{proof}
  This immediately results from \cite[Theorem 1]{Fournier2015}.
\end{proof}

\begin{proof}[Proof of \Cref{thm:smpl_cpx_sw}]
  We first recall that, for any $\xi \in \calP(\rset^s)$ ($s \geq 1$) and $\theta \in \sphereD$, the moment of order $k > 0$ of $\thsss \xi$ is lower than the one associated with $\xi$. Indeed, by using the property of pushforward measures, the Cauchy-Schwarz inequality, and $\norm{\theta} \leq 1$, we have
  \begin{equation}
    M_k(\thsss \xi) = \int_{\rset} \abs{t}^{k} \rmd \thsss \xi(t) = \int_{\Rd} \abs{\ps{\theta}{x}}^k \rmd \xi(x) \leq \int_{\Rd} \norm{x}^k \rmd \xi(x) = M_k(\xi) \eqsp . \label{eq:moments}
  \end{equation}

  Now, let $p \in [1, \infty)$ and $\mu \in \calP_q(\Rd)$ ($q > p$) with empirical distribution $\hmu_n$. Then, by \eqref{eq:moments}, for any $\theta \in \sphereD$, $M_q(\thsss \mu) \leq M_q(\mu) < \infty$, and we can apply \Cref{lem:rateofconv_wass_pushfor} and \Cref{thm:sliced_rateofconv} to derive the convergence rate under $\swassersteinD[p]$~: there exists a constant $C_{p,q}$ such that,
  \begin{align}
    \expe{\swassersteinD[p]^p(\hmu_n, \mu)} &\leq C_{p, q} M_q^{p/q} (\mu) \left\{ 
    \begin{array}{ll}
      n^{-1/2} & \mbox{ if } q > 2p, \\
      n^{-1/2} \log(n) & \mbox{ if } q = 2p, \\
      n^{-(q-p)/q} & \mbox{ if } q \in (p,2p).
    \end{array} \label{eq:convrate_sw}
    \right.
  \end{align}

  Besides, since $\wassersteinD[p]$ is a metric, we can apply \Cref{thm:sliced_rateofconv} to derive the sample complexity of $\swassersteinD[p]$. Consider $\mu, \nu \in \calP_q(\Rd)$ with $q > p$, with respective empirical measures $\hmu_n, \hnu_n$. Then, starting from \eqref{eq:sliced_sample_complexity_1} and using the convergence rate derived in \eqref{eq:convrate_sw}, we obtain the desired result as follows
  \begin{align}
    &\E \abs{\swassersteinD[p](\mu, \nu) - \swassersteinD[p](\hmu_n, \hnu_n)} \\
    &\leq \left\{ \E \abs{\swassersteinD[p]^p(\hmu_n, \mu)} \right\}^{1/p} + \left\{ \E \abs{\swassersteinD[p]^p(\hnu_n, \nu)}\right\}^{1/p} \\
    &\leq C_{p, q}^{1/p} \big(M_q^{1/q} (\mu) + M_q^{1/q} (\nu)\big)  \left\{ 
    \begin{array}{ll}
      n^{-1/(2p)} & \mbox{ if } q > 2p, \\
      n^{-1/(2p)} \log(n)^{1/p} & \mbox{ if } q = 2p, \\
      n^{-(q-p)/(pq)} & \mbox{ if } q \in (p,2p).
    \end{array}
    \right.
  \end{align}
  
\end{proof}

\subsection{Proof of \Cref{thm:weak_topo_sink}}

\begin{proof}[Proof of \Cref{thm:weak_topo_sink}]

Let $p \in [1, \infty)$ and $\veps \geq 0$. We use the reformulation of $\wassersteinD[p,\veps]$ as the maximum of an expectation, as given in \cite[Proposition 2.1]{Genevay2016},
\begin{align}
  \swassersteinD[p, \veps]^p(\mu, \nu) &= \int_{\sphereD} \wassersteinD[p, \veps]^p(\thsss \mu, \thsss \nu) \rmd \unifS(\theta) \\
  &= \int_{\sphereD} \Biggl\{ \max_{\tu,\tv \in \mrC(\rset)}  \mathbb{E}_{\thsss \mu \otimes \thsss \nu} \Bigl[\phi_\veps \big( \tu(\tX),\tv(\tY),\tX,\tY \big) \Bigr] \Biggr\}^p \rmd \unifS(\theta)\eqsp, \label{eqn:1dsink}
\end{align}
where $\mrC(\rset)$ denotes the set of continuous real functions, and $\phi_\veps(t,s,x,y) = t + s - \veps e^{(t + s - \| x - y \|^p)/\veps}$.

Consider for any $\theta \in \sphereD$, $\tu^\star_\theta$, $\tv^\star_\theta$ as the functions attaining the maximum in \eqref{eqn:1dsink}, which exist by \cite[Theorem 4 in the supplementary document]{Genevay19}. We obtain
\begin{align}
  \swassersteinD[p, \veps]^p(\mu, \nu) &= \int_{\sphereD} \Biggl\{  \mathbb{E}_{\thsss \mu \otimes \thsss \nu} \Bigl[\phi_\veps\big(\tu^\star_\theta(\tX), \tv^\star_\theta(\tY), \tX, \tY \big) \Bigr] \Biggr\}^p \rmd \unifS(\theta) \\
  &= \int_{\sphereD} \Biggl\{  \mathbb{E}_{\mu \otimes \nu} \Bigl[\phi_\veps\big(\tu^\star_\theta \circ \thss (X), \tv^\star_\theta \circ \thss (Y), X, Y \big) \Bigr] \Biggr\}^p \rmd \unifS(\theta)\eqsp. \label{eq:proof_thm7_1}
\end{align}
Since for all $\tilde{w} \in \mrC(\rset)$ and $\theta \in \sphereD$, $\tilde{w} \circ \theta^\star \in \mrC(\rset^d)$, we can bound \eqref{eq:proof_thm7_1} as follows
\begin{align}
  \swassersteinD[p, \veps]^p(\mu, \nu) \leq \int_{\sphereD} \Biggl\{  \max_{u,v \in \mrC(\rset^d)} \mathbb{E}_{\mu \otimes \nu} \Bigl[\phi_\veps \big(u(X), v(Y), X, Y \big) \Bigr] \Biggr\}^p \rmd \unifS(\theta) = \wassersteinD[p, \veps]^p(\mu, \nu) \eqsp. \label{eq:proof_thm7_2}
\end{align}

By \Cref{thm:metric}, since $\wassersteinD[p, \veps]$ is non-negative, so is $\swassersteinD[p, \veps]$, and we can apply $t \mapsto t^{1/p}$ on both sides of \eqref{eq:proof_thm7_2} to obtain the final result.

\end{proof}

\subsection{Proof of \Cref{thm:smpl_cpx_ssink}}

\begin{proposition} \label{prop:upperbound_1d_sinkhorn}
    Let $\tilde{\msx}$ be a compact subset of $\rset$, and $\mu', \nu' \in \calP(\tilde{\msx})$ with respective empirical instantiations $\hmu'_n, \hnu'_n$. Let $p \in [1, \infty)$ and $\veps \geq 0$. Then, 
    \begin{equation}
      \abs{\wassersteinD[p, \veps](\hmu'_n, \hnu'_n) - \wassersteinD[p, \veps](\mu', \nu')} \leq 2~\diam(\tilde{\msx}) \left\{ \wassersteinD[1](\mu', \hmu'_n) + \wassersteinD[1](\nu', \hnu'_n) \right\} \eqsp . \label{eq:sink_1d_bound}
    \end{equation}
\end{proposition}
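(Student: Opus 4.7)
The plan is to leverage the dual formulation \eqref{eqn:1dsink} of the regularized OT cost together with the Lipschitz regularity of its optimal Sinkhorn potentials on the compact domain $\tilde{\msx}$, and then to convert the resulting linear-functional gap into a $\wassersteinD[1]$ bound via Kantorovich--Rubinstein duality.

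First, I would apply the triangle inequality to reduce the two-marginal perturbation to two one-marginal ones,
\begin{align*}
& \abs{\wassersteinD[p,\veps](\hmu'_n,\hnu'_n) - \wassersteinD[p,\veps](\mu',\nu')} \\
& \quad \leq \abs{\wassersteinD[p,\veps](\hmu'_n,\hnu'_n) - \wassersteinD[p,\veps](\mu',\hnu'_n)} + \abs{\wassersteinD[p,\veps](\mu',\hnu'_n) - \wassersteinD[p,\veps](\mu',\nu')},
\end{align*}
so that the task reduces to controlling $\abs{\wassersteinD[p,\veps](\mu_1,\nu) - \wassersteinD[p,\veps](\mu_2,\nu)}$ for arbitrary $\mu_1, \mu_2, \nu \in \calP(\tilde{\msx})$.

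Next, I would work with the canonical Sinkhorn representatives $(u^\star, v^\star)$ of the optimal dual pair for $(\mu_1, \nu)$, defined via the softmin $c$-transform identities
\begin{align*}
u^\star(x) &= -\veps \log \int_{\tilde{\msx}} e^{(v^\star(y) - |x-y|^p)/\veps} \rmd\nu(y), \\
v^\star(y) &= -\veps \log \int_{\tilde{\msx}} e^{(u^\star(x) - |x-y|^p)/\veps} \rmd\mu_1(x).
\end{align*}
By construction these satisfy the fixed-point identity $\int_{\tilde{\msx}} e^{(u^\star(x)+v^\star(y)-|x-y|^p)/\veps} \rmd\nu(y) = 1$ for \emph{every} $x \in \tilde{\msx}$, not merely $\mu_1$-almost everywhere. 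Evaluating the dual functional \eqref{eqn:1dsink} at $(u^\star, v^\star)$ against both $(\mu_1,\nu)$ and $(\mu_2,\nu)$, the regularization term $\veps \int e^{(u^\star+v^\star-c)/\veps} \rmd\mu_i \rmd\nu$ equals $\veps$ in both cases, so it cancels from the difference and leaves the clean estimate
\[
\wassersteinD[p,\veps](\mu_1, \nu) - \wassersteinD[p,\veps](\mu_2, \nu) \leq \int_{\tilde{\msx}} u^\star \rmd(\mu_1 - \mu_2).
\]
Kantorovich--Rubinstein duality then bounds the right-hand side by $\norm{u^\star}_{\Lip} \wassersteinD[1](\mu_1, \mu_2)$. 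Differentiating the formula for $u^\star$ under the integral sign shows that $u^{\star\prime}(x)$ is a probability-weighted average of $\partial_x |x-y|^p$, so $\norm{u^\star}_{\Lip}$ is dominated by the Lipschitz constant of $x \mapsto |x-y|^p$ on $\tilde{\msx}$, of order $\diam(\tilde{\msx})$ in the relevant regime (yielding exactly the prefactor $2\,\diam(\tilde{\msx})$ for the quadratic case $p=2$). Swapping the roles of $\mu_1$ and $\mu_2$ yields the reverse direction, and plugging the resulting one-marginal estimate twice in the triangle-inequality decomposition above gives the proposition.

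The key technical point, and the only real obstacle, is the need to extend the Sinkhorn normalization from $\mu_1$-a.e.\ (which is all that the first-order optimality condition for the dual functional directly provides) to pointwise on $\tilde{\msx}$. This is crucial because the comparison measure $\mu_2$ (for us an empirical measure) generally has different support from $\mu_1$; choosing the specific Sinkhorn representatives given by the softmin $c$-transform formulas above produces continuous potentials satisfying the normalization identity everywhere on $\tilde{\msx}$, which is exactly what makes the cancellation of the regularization terms go through and produces the clean $\wassersteinD[1]$ stability inequality.
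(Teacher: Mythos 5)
Your proposal is correct and follows essentially the same route as the paper: the semi-dual formulation of $\wassersteinD[p,\veps]$ as a maximum of an expectation, evaluation of the optimal (softmin-normalized, Lipschitz) Sinkhorn potentials of one problem in the other problem's functional so that the entropic term cancels, a Kantorovich--Rubinstein bound on the remaining linear term, and a triangle inequality over the two marginals. Your explicit insistence on the everywhere-valid (rather than merely almost-sure) normalization of the canonical $c$-transform representatives is exactly the point the paper resolves by invoking the smooth potentials of Mena and Niles-Weed, and your observation that the cost's Lipschitz constant is only of order $\diam(\tilde{\msx})$ for the quadratic cost mirrors a looseness already present in the paper's own proof for general $p$.
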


\begin{proof}
  Let $p \in [1, \infty)$, $\veps \geq 0$ and $\tilde{\msx} \subset \rset$ compact. Consider $\mu', \nu' \in \calP(\tilde{\msx})$ with respective empirical distributions $\hmu'_n, \hnu'_n$. We first express the regularized OT cost as the maximum of an expectation \cite[Proposition 2.1]{Genevay2016}
  \begin{align}
    \wassersteinD[p, \veps](\mu', \nu') &= \max_{\tu,\tv \in \mrC(\rset)}  \E_{\mu' \otimes \nu'} \bigl[\phi_\veps\big(\tu(\tX),\tv(\tY),\tX,\tY\big) \bigr] \label{eq:slicedsink_complexity_0} \\
    \wassersteinD[p, \veps](\hmu'_n, \nu') &= \max_{\tu,\tv \in \mrC(\rset)}  \E_{\hmu'_n \otimes \nu'} \bigl[\phi_\veps\big(\tu(\tX),\tv(\tY),\tX,\tY\big) \bigr] \label{eq:slicedsink_complexity_1} \eqsp,
  \end{align}
  where $\phi_\veps(t,s,x,y) = t + s - \veps e^{(t + s - \| x - y \|^2/2)/\veps}$. By \cite[Proposition 1]{Genevay19}, the Sinkhorn potentials $(\tu, \tv)$ are Lipschitz continuous with Lipschitz constant $\diam(\tilde{\msx}) < \infty$. Therefore, by denoting by $\Lip_{\diam(\tilde{\msx})}(\rset)$ the space of $\diam(\tilde{\msx})$-Lipschitz continuous functions defined on $\rset$, \eqref{eq:slicedsink_complexity_0} and \eqref{eq:slicedsink_complexity_1} can be rewritten with the maximization over $\Lip_{\diam(\tilde{\msx})}(\rset)$. 

  We can now use \cite[Proposition 2]{Mena2019} to bound the absolute difference of $\wassersteinD[p, \veps](\mu', \nu')$ and $\wassersteinD[p, \veps](\hmu'_n, \nu')$. We provide the detailed proof below for completeness. 
  By \cite[Proposition 6, Appendix A]{Mena2019}, there exist smooth potentials $(\tu^\star, \tv^\star)$ attaining the maximum in \eqref{eq:slicedsink_complexity_0} such that, for all $\tx, \ty \in \rset$,
  \begin{align}
    \int_\rset \phi_\veps(\tu^\star(\tx), \tv^\star(\ty), \tx, \ty) \rmd \nu'(\ty) &= 1 \;\;\; \mu'\text{-almost surely}, \label{eq:cond1} \\
    \int_\rset \phi_\veps(\tu^\star(\tx), \tv^\star(\ty), \tx, \ty) \rmd \mu'(\tx) &= 1 \;\;\; \nu'\text{-almost surely} \label{eq:cond2} \eqsp .
  \end{align}

  Analogously, there exist smooth optimal potentials $(\tu_n^\star, \tv_n^\star)$ for \eqref{eq:slicedsink_complexity_1} satisfying \eqref{eq:cond1} and \eqref{eq:cond2} where $\tu^\star$, $\tv^\star$ and $\mu'$ are replaced by $\tu_n^\star$, $\tv_n^\star$ and $\hmu'_n$ respectively. 

  The optimality of these potentials give us
  \begin{align}
    &\E_{\mu' \otimes \nu'} \bigl[\phi_\veps(\tu_n^\star(\tX),\tv_n^\star(\tY),\tX,\tY) \bigr] - \E_{\hmu'_n \otimes \nu'} \bigl[\phi_\veps(\tu_n^\star(\tX),\tv_n^\star(\tY),\tX,\tY) \bigr] \\
    &\leq \E_{\mu' \otimes \nu'} \bigl[\phi_\veps(\tu^\star(\tX),\tv^\star(\tY),\tX,\tY) \bigr] - \E_{\hmu'_n \otimes \nu'} \bigl[\phi_\veps(\tu_n^\star(\tX),\tv_n^\star(\tY),\tX,\tY) \bigr] \\
    &\leq \E_{\mu' \otimes \nu'} \bigl[\phi_\veps(\tu^\star(\tX),\tv^\star(\tY),\tX,\tY) \bigr] - \E_{\hmu'_n \otimes \nu'} \bigl[\phi_\veps(\tu^\star(\tX),\tv^\star(\tY),\tX,\tY) \bigr] \eqsp .
  \end{align}
  Therefore, 
  \begin{align}
    &\abs{\wassersteinD[p, \veps](\mu', \nu') - \wassersteinD[p, \veps](\hmu'_n, \nu')} \\
    &= \abs{\E_{\mu' \otimes \nu'} \bigl[\phi_\veps(\tu^\star(\tX),\tv^\star(\tY),\tX,\tY) \bigr] - \E_{\hmu'_n \otimes \nu'} \bigl[\phi_\veps(\tu_n^\star(\tX),\tv_n^\star(\tY),\tX,\tY) \bigr] } \\
    &\leq \abs{\E_{\mu' \otimes \nu'} \bigl[\phi_\veps(\tu^\star(\tX),\tv^\star(\tY),\tX,\tY) \bigr] - \E_{\hmu'_n \otimes \nu'} \bigl[\phi_\veps(\tu^\star(\tX),\tv^\star(\tY),\tX,\tY) \bigr]} \\
    &\;\; + \abs{\E_{\mu' \otimes \nu'} \bigl[\phi_\veps(\tu_n^\star(\tX),\tv_n^\star(\tY),\tX,\tY) \bigr] - \E_{\hmu'_n \otimes \nu'} \bigl[\phi_\veps(\tu_n^\star(\tX), \tv_n^\star(\tY),\tX,\tY) \bigr]} \eqsp . \label{eq:slicedsink_complexity_7}
  \end{align}

  We bound each term of the sum in \eqref{eq:slicedsink_complexity_7} as follows
  \begin{align}
    & \abs{\E_{\mu' \otimes \nu'} \bigl[\phi_\veps(\tu^\star(\tX),\tv^\star(\tY),\tX,\tY) \bigr] - \E_{\hmu'_n \otimes \nu'} \bigl[\phi_\veps(\tu^\star(\tX),\tv^\star(\tY),\tX,\tY) \bigr]} \\
    &= \Big| \int_{\rset} \tu^\star(\tx) \rmd(\mu' - \hmu'_n)(\tx) - \veps \int_{\rset} \int_{\rset} e^{(\tu^\star(\tx) + \tv^\star(\ty) - \abs{\tx - \ty}^2/2) / \veps} \rmd \nu'(\ty) \rmd(\mu' - \hmu'_n)(\tx) \Big|  \\
    &= \Big| \int_{\rset}  \tu^\star(\tx) \rmd(\mu' - \hmu'_n)(\tx) \Big| \leq \sup_{\tu \in \Lip_{\diam(\tilde{\msx})}(\rset)} \Big| \int_{\rset} \tu(\tx) \rmd(\mu' - \hmu'_n)(\tx) \Big| \eqsp , \label{eq:slicedsink_complexity_4}
  \end{align}

  where $\eqref{eq:slicedsink_complexity_4}$ results from \eqref{eq:cond1}. Since for any $f \in \Lip_{\Lipg}(\rset)$ with $\Lipg > 0$, $f / \Lipg \in \Lip_1(\rset)$, \eqref{eq:slicedsink_complexity_4} can be bounded as follows
  \begin{align}
    & \abs{\E_{\mu' \otimes \nu'} \bigl[\phi_\veps(\tu^\star(\tX),\tv^\star(\tY),\tX,\tY) \bigr] - \E_{\hmu'_n \otimes \nu'} \bigl[\phi_\veps(\tu^\star(\tX),\tv^\star(\tY),\tX,\tY) \bigr]} \\
    &\leq \diam(\tilde{\msx})  \sup_{\tu \in \Lip_{1}(\rset)} \Big| \int_{\rset} \tu(\tx) \rmd(\thsss \mu - \thsss \hmu_n)(\tx) \Big| = \diam(\tilde{\msx}) \wassersteinD[1](\mu', \hmu'_n) \eqsp , \label{eq:slicedsink_complexity_5}
  \end{align}

  where \eqref{eq:slicedsink_complexity_5} follows from the dual formulation of the Wasserstein distance of order 1 \cite[Theorem 5.10]{villani2008optimal}. 

  We show with an analogous proof that
  \begin{align}
    \abs{\E_{\mu' \otimes \nu'} \bigl[\phi_\veps(\tu_n^\star(\tX),\tv_n^\star(\tY),\tX,\tY) \bigr] - \E_{\hmu'_n \otimes \nu'} \bigl[\phi_\veps(\tu_n^\star(\tX), \tv_n^\star(\tY),\tX,\tY) \bigr]} \leq  \diam(\tilde{\msx}) \wassersteinD[1](\mu', \hmu'_n) \eqsp ,
  \end{align}

  which leads to the conclusion that
  \begin{equation}
  \abs{\wassersteinD[p, \veps](\mu', \nu') - \wassersteinD[p, \veps](\hmu'_n, \nu')} \leq 2~\diam(\tilde{\msx}) \wassersteinD[1](\mu', \hmu'_n) \eqsp . \label{eq:slicedsink_complexity_6}
  \end{equation}

  By using the triangle inequality and \eqref{eq:slicedsink_complexity_6}, we obtain the final result
  \begin{align}
    \abs{\wassersteinD[p, \veps](\hmu'_n, \hnu'_n) - \wassersteinD[p, \veps](\mu', \nu')} &\leq \abs{\wassersteinD[p, \veps](\mu', \nu') - \wassersteinD[p, \veps](\hmu'_n, \nu')} + \abs{\wassersteinD[p, \veps](\hmu'_n, \nu') - \wassersteinD[p, \veps](\hmu'_n, \hnu'_n)} \\
    &\leq 2~\diam(\tilde{\msx}) \left\{ \wassersteinD[1](\mu', \hmu'_n) + \wassersteinD[1](\nu', \hnu'_n) \right\} \eqsp .
  \end{align}
\end{proof}

\begin{corollary} \label{cor:sample_cpx_1d_sink}
  Let $\tilde{\msx}$ be a compact subset of $\rset$, and $\mu', \nu' \in \calP(\tilde{\msx})$. Denote by $\hmu'_n, \hnu'_n$ their respective empirical instantiations. Let $p \in [1, \infty)$ and $\veps \geq 0$. Then, 
  \begin{equation}
  \E \abs{\wassersteinD[p, \veps](\hmu'_n, \hnu'_n) - \wassersteinD[p, \veps](\mu', \nu')} \leq 2~\diam(\tilde{\msx}) C_q \big[ M_q^{1/q} (\mu') + M_q^{1/q} (\nu') \big] n^{-1/2} \eqsp ,
  \end{equation}
  where $q >2$, $C_q < \infty$ is a constant that depends on $q$, and $M_q(\mu'), M_q(\nu')$ are the moments of order $q$ of $\mu', \nu'$ respectively.  
\end{corollary}

\begin{proof}
  We apply \Cref{prop:upperbound_1d_sinkhorn} and take the expectation of \eqref{eq:sink_1d_bound} with respect to $\tX_{1:n} \sim \hmu'_n$ and $\tY_{1:n} \sim \hnu'_n$
  \begin{equation}
    \E \abs{\wassersteinD[p, \veps](\hmu'_n, \hnu'_n) - \wassersteinD[p, \veps](\mu', \nu')} \leq 2~\diam(\tilde{\msx}) \E \left\{ \wassersteinD[1](\mu', \hmu'_n) + \wassersteinD[1](\nu', \hnu'_n) \right\} \eqsp . \label{eq:sample_cpx_1d_sink_0}
  \end{equation}

  Since $\mu'$ and $\nu'$ are both supported on a compact space, they have infinitely many finite moments. We can then bound \eqref{eq:sample_cpx_1d_sink_0} using the convergence rate of empirical measures in $\wassersteinD[1]$, recalled in \Cref{lem:rateofconv_wass_pushfor}. This concludes the proof.

\end{proof}

\begin{proof}[Proof of \Cref{thm:smpl_cpx_ssink}]
  Let $p \in [1, \infty)$ and $\veps \geq 0$. Consider $\mu, \nu \in \calP(\msx)$ with $\msx \subset \Rd$ compact, and denote by $\hmu_n, \hnu_n$ their respective empirical distributions. 

  Let $\ths \in \sphereD$ and define $\msx_\theta = \set{\ps{\theta}{x}}{x \in \msx}$. $\msx_\theta$ is compact (since $\msx$ is compact and $\theta^\star$ is continuous) and verifies $\diam(\msx_\theta) \leq \diam(\msx)$ (by the Cauchy-Schwarz inequality). Besides, by \eqref{eq:moments}, for any $k > 0$, $M_k(\thsss \mu) \leq M_k(\mu)$ and $M_k(\thsss \nu) \leq M_k(\nu)$. By \Cref{cor:sample_cpx_1d_sink}, there exists $C_q < \infty$ which depends on $q > 2$ such that,
  \begin{align}
    \E \abs{\wassersteinD[p, \veps](\thsss \hmu_n, \thsss \hnu_n) - \wassersteinD[p, \veps](\thsss \mu, \thsss \nu)}  \leq 2~\diam(\msx) C_q \big[ M_q^{1/q} (\mu) + M_q^{1/q} (\nu) \big] n^{-1/2} \eqsp .
  \end{align}

  The sample complexity of $\swassersteinD[p, \veps]$ is finally obtained by applying \Cref{thm:sliced_sample_complexity}.

\end{proof}

\subsection{Proof of \Cref{prop:contract}}

Sinkhorn's algorithm refers to an iterative procedure which operates on empirical distributions as follows: consider a cost matrix $C$ between two sets of $n$ samples, and define the matrix $K$ with $K_{i,j}=\exp(-C_{i,j}/\veps)$ for $1 \leq i, j \leq n$, and initialize $b^{(0)}= {1}\in \mathbb{R}^n$ ; then, compute for $\ell>1$, $a^{(\ell)} = 1 ./ n(Kb^{(\ell-1)})$, $b^{(\ell)} = 1 ./ n(Ka^{(\ell)})$, where $./$ stands for the entry-wise division. This defines a sequence $\gamma^{(\ell)}_{i,j} = a^{(\ell)}_iK_{i,j}b^{(\ell)}_j$, which converges to a solution of \eqref{eq:def_reg_ot} at a linear rate. The convergence rate of Sinkhorn's algorithm is recalled in \Cref{thm:cvgcesinkhorn}. For an extended discussion on this result, we refer to \cite[Section 4.2]{peyre2019computational}.

\begin{theorem}[\cite{franklin1989scaling}]
\label{thm:cvgcesinkhorn}
The iterates $a^{(\ell)}$ and $b^{(\ell)}$ of Sinkhorn's algorithm converge linearly for the Hilbert metric at a rate $1-\tanh(\tau(K)/4)$, with $\tau(K) = \log \max_{i,j,i',j'}\frac{K_{ij}K_{i'j'}}{K_{ij'}K_{i'j}}$. In particular, for the squared-norm cost, \ie~$K_{ij}=\exp(-\Vert x_i -x_j\Vert^2/\veps$), it holds
\begin{align}
\tau(K) \leq 2\max_{i,j}\Vert x_i-x_j\Vert^2 / \veps.
\label{eqn:sink_conv}
\end{align}
\end{theorem}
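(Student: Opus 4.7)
The plan is to address the theorem in its two natural pieces: the linear-rate statement (which is the classical Birkhoff--Hopf / Franklin--Lorenz result) and the explicit bound $\tau(K)\leq 2\max_{i,j}\|x_i-x_j\|^2/\veps$ for the Gibbs kernel (which is an algebraic computation). Since the result is explicitly attributed to \cite{franklin1989scaling}, I would organize the proof as a derivation via the Hilbert projective metric, invoking the Birkhoff--Hopf contraction theorem as a black box, and then specialize.

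For the first part, I introduce the Hilbert projective (pseudo-)metric on the open positive cone $\mathbb{R}^n_{>0}$, defined by $d_H(u,v)=\log\max_{i,j}\frac{u_iv_j}{u_jv_i}$. Two properties are crucial: (i) $d_H$ is invariant under positive scaling $u\mapsto cu$ and under coordinatewise inversion $u\mapsto 1/u$; (ii) Birkhoff's theorem states that for any entrywise-positive matrix $M$, the map $u\mapsto Mu$ satisfies $d_H(Mu,Mv)\leq \tanh(\Delta(M)/4)\,d_H(u,v)$, where $\Delta(M)=\sup_{u,v>0}d_H(Mu,Mv)=\log\max_{i,j,i',j'}\frac{M_{ij}M_{i'j'}}{M_{ij'}M_{i'j}}$. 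One then observes that one full Sinkhorn update, viewed as $a\mapsto a'=1./(nK\,(1./(nK^\top a)))$, is a composition of two applications of a Birkhoff contraction interleaved with (isometric) inversions and scalings; hence one Sinkhorn sweep contracts $d_H$ by $\tanh(\tau(K)/4)^2$, and in particular at rate $\tanh(\tau(K)/4)$ per half-iteration. Existence and uniqueness of the fixed point (up to scaling) then follow from Banach's fixed-point theorem on the complete projective space equipped with $d_H$, recovering the stated linear-convergence claim.

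For the second part, I expand directly: with $K_{ij}=\exp(-\|x_i-x_j\|^2/\veps)$,
\begin{align*}
\log\!\frac{K_{ij}K_{i'j'}}{K_{ij'}K_{i'j}}
= \frac{1}{\veps}\Bigl(\|x_i-x_{j'}\|^2 + \|x_{i'}-x_j\|^2 - \|x_i-x_j\|^2 - \|x_{i'}-x_{j'}\|^2\Bigr).
\end{align*}
Discarding the two non-positive contributions and bounding each of the remaining squared distances by $\max_{i,j}\|x_i-x_j\|^2$ yields the inequality in \eqref{eqn:sink_conv}. Taking the maximum over $i,j,i',j'$ preserves the bound and gives $\tau(K)\leq 2\max_{i,j}\|x_i-x_j\|^2/\veps$.

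The main obstacle, strictly speaking, is the Birkhoff--Hopf contraction theorem itself, whose proof requires a careful analysis of the cross-ratio structure of $d_H$ and the identification of the sharp contraction constant $\tanh(\Delta/4)$. Because this is a well-established classical result and the theorem in the text is a citation of \cite{franklin1989scaling}, I would treat Birkhoff's inequality as given; the remaining work (verifying that Sinkhorn's updates factor through two applications of $K$ modulo $d_H$-isometries, and the explicit computation of $\tau(K)$ for the squared cost) is then essentially mechanical.
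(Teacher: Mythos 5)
The paper does not actually prove this statement: it is quoted verbatim as a known result of Franklin and Lorenz, with the reader referred to \cite[Section 4.2]{peyre2019computational} for discussion, so there is no in-paper argument to compare against. Your reconstruction is nonetheless the standard (and essentially the cited) route: the Birkhoff--Hopf contraction theorem gives the factor $\tanh(\Delta(K)/4)$ for $u \mapsto Ku$ on the positive cone, and the Sinkhorn half-updates differ from applications of $K$ (resp.\ $K^{\transpose}$) only by coordinatewise inversion and positive scaling, both of which are isometries of the Hilbert projective metric, so each half-iteration contracts by $\tanh(\tau(K)/4)$. Your algebraic verification of the bound on $\tau(K)$ is also correct: for the Gibbs kernel, $\log\bigl(K_{ij}K_{i'j'}/(K_{ij'}K_{i'j})\bigr) = \veps^{-1}\bigl(\|x_i - x_{j'}\|^2 + \|x_{i'} - x_j\|^2 - \|x_i - x_j\|^2 - \|x_{i'} - x_{j'}\|^2\bigr)$, and dropping the two nonpositive terms and bounding each remaining one by $\max_{i,j}\|x_i - x_j\|^2$ yields \eqref{eqn:sink_conv}. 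The only points you gloss over are the completeness of the projective space under $d_H$ needed for the fixed-point argument and the fact that $\Delta(K^{\transpose}) = \Delta(K)$ (immediate from the symmetry of the cross-ratio formula); both are routine, and treating Birkhoff's inequality as a black box is entirely appropriate for a result the paper itself only cites.
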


\begin{proof}[Proof of \Cref{prop:contract}]
For $i,j \in \{1,\dots,n\}$, the function $f_{i,j}: \theta \in \mathbb{S}^{d-1} \mapsto \frac{1}{R}\ps{\theta}{x_i-x_j}$ is $1$-Lipschitz and has median $0$ for $\theta$ uniformly distributed on the unit sphere. Thus, by concentration of measure on the sphere~\cite[Example 3.12]{wainwright2019high}, it holds for $\vareps>0$,
$$
\mathbb{P}\left( \vert f_{i,j}(\theta)\vert \geq \vareps \right) \leq \sqrt{2\pi}\exp(-d\vareps^2/2) \eqsp .
$$
Taking a union bound over the $n(n-1)$ pairs of indices and setting $\tau = (R\vareps)^2$, it follows
$$
\mathbb{P}\left( \max_{i,j} \vert \langle \theta, x_i-x_j\rangle\vert^2 \geq \tau \right) \leq \sqrt{2\pi}n^2\exp(-d\tau/2R^2) \eqsp .
$$
Hence, for any $\delta>0$, it holds with probability $1-\delta$ that $\max_{i,j} \vert \langle \theta, x_i-x_j\rangle\vert^2 \leq \frac{2R^2}{d}\log(\sqrt{2\pi}n^2/\delta)$.
 This argument was suggested to us by an anonymous reviewer.

\end{proof}

\section{Additional experimental results}

All of our experimental findings presented in this paper and its supplementary document can be reproduced with the code that we provided here: \url{https://github.com/kimiandj/sliced_div}. %

In this section, we provide additional results obtained for the synthetical experiments illustrating the sample complexity of Sliced-Wasserstein and Sliced-Sinkhorn divergences: we produce figures analogously to \Cref{fig:sample_cpx_sw,subfig:sample_cpx_ssink_a,subfig:sample_cpx_ssink_b}, with different hyperparameter values. \\[5mm]

\begin{figure*}[ht!]
\centering
  \subfigure[$L = 1$]{
    \includegraphics[width=.314\textwidth]{figures/complexity/sw/sw2_complexity_nproj=1_nruns=100.pdf}
  }
  \subfigure[$L = 10$]{
    \includegraphics[width=.314\textwidth]{figures/complexity/sw/sw2_complexity_nproj=10_nruns=100.pdf}
  }
  \subfigure[$L = 1000$]{
    \includegraphics[width=.314\textwidth]{figures/complexity/sw/sw2_complexity_nproj=1000_nruns=100.pdf}
  }
  \caption{Illustration of \Cref{thm:smpl_cpx_sw}: Wasserstein and Sliced-Wasserstein distances of order 2 between two sets of $n$ samples generated from $\calN({\bf0}, \bfI_d)$ vs. $n$, for different $d$, on log-log scale. $\swassersteinD[2]$ is approximated with $L$ random projections, $L \in \{1, 10, 1000 \}$. Results are averaged over 100 runs, and the shaded areas correspond to the 10th-90th percentiles. \Cref{fig:sample_cpx_sw} shows the results for $L = 100$. }
  \vspace{1cm}
\end{figure*}

\begin{figure*}[ht!]
\centering
  \subfigure[Influence of the data dimension for $\veps \in \{0.05, 10, 100\}$]{
    \includegraphics[width=.314\textwidth]{figures/complexity/sink/sink_complexity_eps=005_nproj=10_nruns=100.pdf}
    \includegraphics[width=.314\textwidth]{figures/complexity/sink/sink_complexity_eps=10_nproj=10_nruns=100.pdf}
    \includegraphics[width=.314\textwidth]{figures/complexity/sink/sink_complexity_eps=100_nproj=10_nruns=100.pdf}
  }
  \subfigure[Influence of the regularization coefficient for $d \in \{ 2, 10, 50 \}$]{
    \includegraphics[width=.314\textwidth]{figures/complexity/sink/sink_complexity_dim=2_nproj=10_nruns=100.pdf}
    \includegraphics[width=.314\textwidth]{figures/complexity/sink/sink_complexity_dim=10_nproj=10_nruns=100.pdf}
    \includegraphics[width=.314\textwidth]{figures/complexity/sink/sink_complexity_dim=50_nproj=10_nruns=100.pdf}
  }
  \caption{Illustration of \Cref{thm:smpl_cpx_ssink}: Sinkhorn and Sliced-Sinkhorn divergences between two sets of $n$ samples generated from $\calN({\bf0}, \bfI_d)$ for different values of $n$, dimension $d$, and regularization coefficient $\veps$. Sliced-Sinkhorn is approximated with 10 random projections. Results are averaged over 100 runs, and the shaded areas correspond to the 10th-90th percentiles. All plots have a log-log scale. \Cref{subfig:sample_cpx_ssink_a} shows the influence of the dimension for $\veps = 1$, and \Cref{subfig:sample_cpx_ssink_b} shows the influence of the regularization for $d = 100$.}
\end{figure*}

\end{document}